\documentclass[twoside,11pt]{article}

\usepackage{amsmath, amssymb, amsfonts, amsthm, mathtools, mathrsfs, bbm} 
\usepackage{graphicx} 
\usepackage{hyperref}
\usepackage{geometry} 
\usepackage{setspace} 
\usepackage{caption} 
\usepackage[numbers, sort]{natbib}
\usepackage{lineno} 

\usepackage{lipsum}
\usepackage{titlesec}
\usepackage{fancyhdr} 
\usepackage{array}
\usepackage{xcolor}
\usepackage{enumitem}
\usepackage{bbm}
\usepackage{tikz} 
\usetikzlibrary{calc, arrows.meta, intersections, patterns, positioning, shapes.misc, fadings, through,decorations.pathreplacing, trees, shapes.geometric,math,arrows}
\usepackage{tikz-cd}
\usepackage{amscd}
\usepackage{pgfplots}
\pgfplotsset{compat=1.18}
\usepackage{url}
\usepackage{aliascnt}

\newcommand{\supp}[1]{\mathrm{supp}\!\left(#1\right)}
\newcommand{\suppinline}[1]{\mathrm{supp}(#1)}
\newcommand{\spanC}[1]{\mathrm{span}_{\mathbb{C}}\!\left(#1\right)}
\newcommand{\spanCinline}[1]{\mathrm{span}_{\mathbb{C}}(#1)}
\newcommand{\spanR}[1]{\mathrm{span}_{\mathbb{R}}\!\left(#1\right)}
\newcommand{\spanRinline}[1]{\mathrm{span}_{\mathbb{R}}(#1)}

\newcommand{\dimC}[1]{\mathrm{dim}_{\mathbb{C}}\!\left(#1\right)}
\newcommand{\dimCinline}[1]{\mathrm{dim}_{\mathbb{C}}(#1)}
\newcommand{\dimR}[1]{\mathrm{dim}_{\mathbb{R}}\!\left(#1\right)}
\newcommand{\dimRinline}[1]{\mathrm{dim}_{\mathbb{R}}(#1)}
\newcommand{\sepcap}[1]{\mathcal{SC}\left(#1\right)}
\newcommand{\innerprod}[2]{\left\langle #1, #2 \right\rangle}
\newcommand{\Mod}[1]{\ (\mathrm{mod}\ #1)}
\newcommand\restr[2]{{
  \left.\kern-\nulldelimiterspace 
  #1 
  \vphantom{\big|} 
  \right|_{#2} 
  }}
\newcommand{\comma}[0]{,}
\newcommand{\mapstoleft}[0]{\reflectbox{$\mapsto$}}

\newcommand{\sig}[1]{\operatorname{sig}(#1)}
\newtheorem{theorem}{Theorem}[section]

\newaliascnt{lemma}{theorem}
\newtheorem{lemma}[lemma]{Lemma}
\aliascntresetthe{lemma}

\newaliascnt{proposition}{theorem}
\newtheorem{proposition}[proposition]{Proposition}
\aliascntresetthe{proposition}

\newaliascnt{definition}{theorem}
\theoremstyle{definition}
\newtheorem{definition}[definition]{Definition}
\aliascntresetthe{definition}

\newaliascnt{remark}{theorem}
\theoremstyle{remark}
\newtheorem{remark}[remark]{Remark}
\aliascntresetthe{remark}

\newtheorem{corollary}{Corollary}[theorem]

\numberwithin{equation}{section}

\makeatletter
\let\c@table\c@theorem

\let\c@figure\c@theorem

\makeatother

\titleformat{\section}[block]{\large\bfseries}{\thesection}{1em}{}
\titleformat{\subsection}[block]{\normalsize\bfseries}{\thesubsection}{1em}{}
\titleformat{\subsubsection}[block]{\normalsize\bfseries}{\thesubsubsection}{1em}{}

 \renewcommand{\paragraph}[1]{
     \textbf{#1.} 
 }

\fancypagestyle{plain}{
    \fancyhf{}

}

\newcommand{\mytitle}{Separation Capacity of Scattering Networks}
\title{\mytitle}
\author{
    Konstantin H\"aberle\\
    ETH Zurich\\
    \texttt{haeberlk@ethz.ch}
    \and
    Helmut B\"olcskei\\
    ETH Zurich\\
    \texttt{hboelcskei@ethz.ch}
}
\date{}

\usepackage[capitalize,nameinlink,compress]{cleveref}
\crefname{equation}{}{}
\crefname{subsection}{Subsection}{Subsections}
\crefname{remark}{Remark}{Remarks}
\begin{document}

\maketitle

\begin{abstract}
In this paper, we attempt to enhance the theoretical understanding of convolutional neural networks (CNNs) as feature extractors in classification tasks by analyzing them through the lens of Cover's function-counting theory. Specifically, our focus lies on the notion of separation capacity, a combinatorial quantity derived from counting the number of realizable dichotomies (i.e., binary label assignments).   
Our contributions are threefold. First, we extend Cover's framework by establishing a conceptually insightful and practically useful formulation for the separation capacity. Second, leveraging this formulation, we identify the factors governing the separation capacity of feature extractors that employ a specific CNN architecture, so-called scattering networks, in terms of their network building blocks. Third, we provide practical insights for scattering network design.  
\\[1em]
\noindent\textbf{Keywords:} Learning theory, pattern classification, scattering networks, convolutional neural networks.
\end{abstract}

\section{Introduction}
Pattern classification stands as a central task in the field of machine learning \cite{bishop2006pattern}. Methods for solving classification problems often involve feature extraction as a preprocessing step, succeeded by a trainable classifier, such as a support vector machine (SVM) \cite{cortes1995support}. This classification pipeline has led to remarkable successes in practical applications \cite{huang2006large}, in particular, the use of feature extractors based on convolutional neural networks (CNNs) \cite{mallat2012group,bruna2013invariant,anden2014deep} for image and audio signal classification. These CNN-based feature extractors are multi-layered neural-network-type structures, where in each layer convolutional transforms are computed, followed by nonlinearities and pooling operators. 

The foundation for a mathematical framework for these networks was established by Mallat \cite{mallat2012group}, who introduced so-called scattering networks consisting of wavelet transforms followed by a modulus nonlinearity. 
Mallat's framework was subsequently extended in \cite{wiatowski2017deep} to more general scattering networks. Specifically, the theory developed in \cite{wiatowski2017deep} encompasses general convolutional transforms, nonlinearities, and pooling operators, allowing each of these components to vary across different network layers.
Despite various invariance and deformation stability results for scattering networks \cite{mallat2012group,wiatowski2017deep}, an understanding of the reasons for their success in a wide range of practical applications remains elusive.

In this paper, we report an attempt to characterize the theoretical limits of the classification performance of scattering networks à la Wiatowski \& B\"olcskei \cite{wiatowski2017deep}, including Mallat's original construction \cite{mallat2012group}. Specifically, our goal is to employ Cover's \cite{cover1965geometrical} framework for characterizing the separation capacity of feature extractors to scattering networks. 
Although a direct application of Cover's combinatorial techniques for determining separation capacities is not possible in this case, analyzing the separation capacity of scattering networks yields several key insights. In particular, this analysis helps gain a better understanding of the reasons behind the practical success of scattering networks, e.g., by identifying the driving and limiting factors underlying their classification capabilities. 
Furthermore, knowledge of the separation capacity can guide choices for scattering network design and selection in practice.  

\paragraph{Contributions} Besides studying scattering networks in Cover's framework, we have three main contributions. 
\begin{enumerate}[label=(\roman*)]
    \item First, we extend Cover's theory by introducing a novel and very general measure-theoretic approach to separation capacity computations. Our approach significantly simplifies separation capacity computations, as it avoids dealing with certain technicalities inherent in Cover's framework, such as product spaces and the notion of general position. Notably, we derive a necessary and sufficient condition for almost every tuple being in general position with respect to the feature extractor. 
    \item Our second main contribution is the analytic characterization of the factors controlling the separation capacity of scattering networks, particularly in terms of their depths, widths, filters, nonlinearities, and pooling operators, by using tools from complex analysis. We show that there exists a scattering network of low separation capacity, while it is easy to construct scattering networks of high separation capacity. The separation capacity is mainly governed by interplay between the spectral support of the filters and the nonlinearities. Pooling operators generally reduce the separation capacity. 
    \item As our third main contribution, we report practical insights for scattering network design. We establish the design principle that the network should fill out its codomain within the first few layers to attain a high separation capacity. When considering signals on finite cyclic groups, to realize this design principle, it is crucial to employ filters whose spectral support sets do not exhibit any subgroup-type structure. Moreover, the filters should be paired with non-polynomial nonlinearities.
\end{enumerate}

The remainder of this paper is organized as follows.
In \cref{sec:sep-cap}, we review Cover's framework, discuss the notion of separation capacity, and present our extension of Cover's framework. \cref{sec:feature-extractors} is devoted to scattering networks. In \cref{sec:sep-cap-CNN}, we determine the separation capacity of such feature extractors and discuss the impact of filters, nonlinearities, and pooling operators. Finally, in \cref{sec:design-insights}, we provide insights for the design of scattering networks in practice. The notation used throughout this paper is summarized in \cref{app:notation}.   

\section{Separation Capacity}\label{sec:sep-cap}
\subsection{Basic definitions and Cover's function-counting theory}
We begin by reviewing Cover's framework as presented in \cite{cover1965geometrical}.
Consider a set of $N$ points, $F \coloneqq \{f_1,\ldots,f_N\}$, in the $M$-dimensional Euclidean space $\mathbb R^M$ equipped with the standard inner product $\innerprod{f}{g}=g^\mathsf{T}f$, $f,g \in \mathbb R^M$. This space is called the \emph{pattern space}. Our focus is on the binary classification of the points in $F$; namely, we wish to partition the set $F$ into two classes, denoted as $F_+$ and $F_-$. Any such assignment of the points in $F$ to the classes $F_+$ and $F_-$ will be called a \emph{dichotomy}. A natural and simple way to separate the set $F$ into a dichotomy $\{F_+,F_-\}$ is to use (affine) hyperplanes.
The dichotomy $\{F_+,F_-\}$ is said to be \emph{linearly separable} if there exist $w \in \mathbb R^M$ and $t\in \mathbb R$ such that
\begin{align*}
    \innerprod{f}{w} &> t, \quad \text{if $f\in F_+$}, \\
        \innerprod{f}{w} &< t, \quad \text{if $f\in F_-$}. 
\end{align*}
If $t=0$, we say that the dichotomy $\{F_+,F_-\}$ is \emph{homogeneously linearly separable}, see \cref{fig:lin_class} for an illustration. 
\begin{figure}[t]
    \centering
    \resizebox{0.4\columnwidth}{!}{
        \definecolor{blue2}{RGB}{20,99,178}
\definecolor{green2}{RGB}{0,95,1}
\colorlet{red2}{red!70!black}
\begin{tikzpicture}[scale=1, every node/.style={scale=1}]
    \begin{axis}[
        unit vector ratio*=1 1 1,
        grid=none,
        ticks =none,
        axis lines=middle,
        xmin=-3.1,
        xmax=3.1,
        ymin=-3.1,
        ymax=3.1,
        xticklabels={,,},
        yticklabels={,,}
    ]
    \addplot [only marks, mark=*, color=blue2] coordinates {
        (-2, -1.5)
        (-1.2, -1.3)
        (-2.5, -1)
        (-1.24,-0.5)
        (-2,-0.23)
        (-1.7,-0.54)
    };
    \addplot [only marks, mark=*, color=red2] coordinates {
        (2, 2)
        (1.2, 1.5)
        (2.5, 1)
        (1.24,0.75)
        (1.8,0.53)
    };
    \draw[color=blue2, dotted] (-1.75, -1) circle[radius=1.15];
    \node[color=blue2, anchor=south west] at (-2.95, -2.5) {$F_+$};
    \draw[color=red2, dotted] (1.75, 1.2) circle[radius=1];
    \node[color=red2, anchor=south west] at (0.3, 0.1) {$F_-$};
    \draw[thick,dashed]{} (1,3)--(-1,-3);
    \draw[->,thick,gray,dashed]{} (0,0)--(-3,1);
    \node[gray] at (-3,1.2) {$w$};
    \node[] at (2.9,2.9){$\mathbb R^2$};
    \end{axis}
    \end{tikzpicture}
    }
    \caption{Dichotomy realized by a hyperplane through the origin, i.e., homogeneous linear separation.}
    \label{fig:lin_class}
\end{figure}

To realize dichotomies that are not linearly separable, more general separating surfaces (rather than affine hyperplanes) are required. This can be accomplished by passing to nonlinear transformations $\Phi \colon \mathbb R^M \to \mathbb R^{M'}$. The underlying idea is that such a transformation maps the points in the pattern space to another space, referred to as the \emph{feature space}, where they become linearly separable, as illustrated in \cref{fig:nonlinear_transformation}. Consequently, this yields a nonlinear separating surface in the pattern space, exemplified by the nonlinear transformation in \cref{fig:nonlinear_transformation}, which realizes a circle in the pattern space. We emphasize that in order to implement certain dichotomies, one often employs nonlinear transformations with $M \neq M'$. In particular, when the dimension of the pattern space $M$ is significantly smaller than the size of the dataset $N$, the input data are often mapped into some higher dimensional feature space, i.e., $M<M'$, where a separating hyperplane is then constructed; see, e.g., \cite{hsu2003practical}. The next definition formalizes this idea of using nonlinear transformations to obtain homogeneous linear separability in the feature space.
\begin{figure}
    \centering
    \resizebox{\columnwidth}{!}{
        \definecolor{blue2}{RGB}{20,99,178}
\definecolor{green2}{RGB}{0,95,1}
\colorlet{red2}{red!70!black}
\begin{tikzpicture}[scale=1, every node/.style={scale=1}]
    \begin{axis}[
        unit vector ratio*=1 1 1,
        grid=none,
        ticks =none,
        axis lines=middle,
        xmin=-3.1,
        xmax=3.1,
        ymin=-3.1,
        ymax=3.1,
        xticklabels={,,},
        yticklabels={,,},
        at={(0,0)}
    ]
    \node[] at (2.9,2.9){$\mathbb R^2$};
    \foreach \angle in {2, 32, 60, 94, 117, 151, 177, 212, 239, 273, 302, 325, 350}{
        \edef\temp{\noexpand\addplot[only marks, mark=*, color=blue2] coordinates {({2*cos(\angle)},{2*sin(\angle)})};}
        \temp
    }
    \foreach \angle in {8, 28, 55, 90, 117, 155, 172, 181, 219, 239, 268, 299, 319, 345}{
        \edef\temp{\noexpand\addplot[only marks, mark=*, color=red2] coordinates {({cos(\angle)},{sin(\angle)})};}
        \temp
    }
    \draw[thick,dashed] (0, 0) circle[radius=1.5];
    \node[color=blue2, anchor=south west] at (-1.8, 1.8) {$F_+$};
    \node[color=red2, anchor=south west] at (-0.75, 0.1) {$F_-$};
    \end{axis}
    \draw[->,ultra thick,color=green2] (6,3) to[bend left] node[midway,above,inner sep=0pt] {$\Phi\colon \mathbb R^2 \to \mathbb R^2, f \mapsto \begin{pmatrix} 1 \\ \lVert f \rVert \end{pmatrix}$} (10.6,3);
    \begin{axis}[
        unit vector ratio*=1 1 1,
        grid=none,
        ticks =none,
        axis lines=middle,
        xmin=-3.1,
        xmax=3.1,
        ymin=-3.1,
        ymax=3.1,
        xticklabels={,,},
        yticklabels={,,},
        at={(1200,0)}
    ]
    \node[] at (2.9,2.9){$\mathbb R^2$};
    \addplot[only marks, mark=*, color=red2] coordinates {(1,1)};
    \addplot[only marks, mark=*, color=blue2] coordinates {(1,2)};
    \draw[thick,dashed]{} (-2,-3)--(2,3);
    \end{axis}
\end{tikzpicture}}
    \caption{Spherical separating surface realized by application of a nonlinear transformation $\Phi$ followed by homogeneous linear separation in the feature space.}
    \label{fig:nonlinear_transformation}
\end{figure}

\begin{definition}\label{def:phi-sep}
    For $M,M',N \in \mathbb N$, let $F \coloneqq \{f_1,\ldots,f_N\} \subset \mathbb R^M$, and let $\Phi \colon \mathbb R^M \to \mathbb R^{M'}$. A dichotomy $\{F_+,F_-\}$ of $F$ is said to be \emph{$\Phi$-separable} if there exists a vector $w \in \mathbb R^{M'}$ such that
    \begin{align*}
        \innerprod{\Phi(f)}{w} &> 0, \quad \text{if $f\in F_+$}, \\
        \innerprod{\Phi(f)}{w} &< 0, \quad \text{if $f\in F_-$}. 
    \end{align*}
    We call $\{f \in \mathbb R^M \colon \innerprod{\Phi(f)}{w} = 0\}$ the \emph{separating $\Phi$-surface}.
\end{definition}

To develop a quantitative measure for the classification capability of a (nonlinear) transformation $\Phi \colon \mathbb R^M \to \mathbb R^{M'}$, 
one may naturally be interested in the number of $\Phi$-separable dichotomies of an $N$-point set $F \subset \mathbb R^M$, and particularly how it compares to the maximum possible $2^N$ dichotomies that a set with $N$ points can admit. Although the number of $\Phi$-separable dichotomies of $F$ generally depends on both $F$ and $\Phi$ and cannot be determined in closed form, it can be computed precisely, depending only on $M'$ and $N$, if the points in $F$ are ``typical'' in the following sense. 
\begin{definition}\label{def:phi-gen-pos}
    For $M,M'\in \mathbb N$, let $\Phi \colon \mathbb R^M \to \mathbb R^{M'}$. The set $F \coloneqq \{f_1, \ldots, f_N\} \subset \mathbb R^M$, $N \in \mathbb N$, is said to be in \emph{$\Phi$-general position} if every subset of $k$ elements of $\left\{\Phi(f_1), \ldots, \Phi(f_N)\right\} \subset \mathbb R^{M'}$ is linearly independent for all $k \leq \min\{M',N\}$. If this holds for $\Phi = \mathrm{Id} \colon \mathbb R^M \to \mathbb R^M, f \mapsto f$, we simply say $F$ is in \emph{general position}.  
\end{definition}
The number of $\Phi$-separable dichotomies of a set of points $F$ that is in $\Phi$-general position is provided by Cover's celebrated \emph{function-counting theorem}.
\begin{theorem}[Function-counting theorem, \cite{cover1965geometrical}]\label{thm:function-counting}
For $M,M',N \in \mathbb N$, let $F \coloneqq \{f_1,\ldots,f_N\}\subset \mathbb R^M$, and let $\Phi \colon \mathbb R^M \to \mathbb R^{M'}$. The number of $\Phi$-separable dichotomies of $N$ points in $\Phi$-general position in $\mathbb R^M$ is
\begin{align*}
    C(N,M') \coloneqq 2\sum_{k=0}^{M'-1} \binom{N-1}{k}.
\end{align*}
\end{theorem}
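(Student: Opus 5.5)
We work entirely in the feature space. Write $x_i \coloneqq \Phi(f_i) \in \mathbb R^{M'}$. A dichotomy of $F$ is $\Phi$-separable exactly when the corresponding dichotomy of $\{x_1,\ldots,x_N\}$ is homogeneously linearly separable. The $\Phi$-general position hypothesis says that any $k\le \min\{M',N\}$ of the $x_i$ are linearly independent. So it suffices to count homogeneously linearly separable dichotomies of $N$ such vectors in $\mathbb R^{M'}$. Call this number $C(N,M')$, and show it depends only on $N$ and $M'$ and satisfies
\begin{align*}
C(N+1,M') = C(N,M') + C(N,M'-1).
\end{align*}
We take the boundary values $C(1,M')=2$ for $M'\ge 1$ and $C(N,1)=2$. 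The latter holds because in $\mathbb R^1$ every $x_i\neq 0$, and $w=\pm1$ gives only the dichotomy by sign and its complement. Pascal's rule $\binom{N}{k}=\binom{N-1}{k}+\binom{N-1}{k-1}$ then shows by induction on $N$ that $2\sum_{k=0}^{M'-1}\binom{N-1}{k}$ satisfies the same recursion and boundary values, which proves the theorem.

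\textbf{Recursion step.} Fix $N$ vectors in general position and add a new vector $x_{N+1}$ such that all $N+1$ remain in general position. Take a separable dichotomy $\{X_+,X_-\}$ of the first $N$ vectors. Its extensions by $x_{N+1}$ are either one or two separable dichotomies. There are two exactly when some separating $w$ can be chosen with $\langle x_{N+1},w\rangle=0$. Indeed, if such a $w$ exists, the strict inequalities on the old points are open conditions, so perturbing $w$ by $\pm\varepsilon x_{N+1}$ realizes both labels for $x_{N+1}$. Conversely, if $w_1$ and $w_2$ separate the old points with opposite signs on $x_{N+1}$, then a suitable convex combination of them is orthogonal to $x_{N+1}$ and still separates the old points, since the separating set is a convex cone. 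Hence
\begin{align*}
C(N+1) = C(N) + D,
\end{align*}
where $D$ is the number of dichotomies of $\{x_1,\ldots,x_N\}$ separable by some $w\in x_{N+1}^{\perp}$.

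\textbf{Identifying $D$.} Let $P$ be the orthogonal projection onto $x_{N+1}^\perp\cong\mathbb R^{M'-1}$. For $w\in x_{N+1}^\perp$ we have $\langle x_i,w\rangle=\langle Px_i,w\rangle$. So $D$ counts the homogeneously separable dichotomies of $Px_1,\ldots,Px_N$ in an $(M'-1)$-dimensional space. To get $D=C(N,M'-1)$, we check that the projected vectors are in general position in $\mathbb R^{M'-1}$. A linear dependence among $k\le \min\{M'-1,N\}$ of the $Px_i$ gives a dependence among those $x_i$ together with $x_{N+1}$. That set has $k+1\le \min\{M',N+1\}$ elements, which contradicts the general position of the $N+1$ vectors.

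\textbf{Main obstacle.} The delicate point is the claim that $C$ is independent of the configuration, which the induction needs. I would run the induction on $N+M'$, proving simultaneously the statement ``for every configuration in general position, the count equals $2\sum_{k=0}^{M'-1}\binom{N-1}{k}$''. With that formulation, the recursion step above applies to arbitrary configurations, and no separate invariance argument is needed. We also need the edge case $M'-1=0$, i.e. $M'=1$. That case is covered by the base case $C(N,1)=2$ and never invokes $D$.
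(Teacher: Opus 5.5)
Your proof is correct. The paper does not prove this theorem itself but only cites Cover, and your argument is essentially Cover's original one: the recursion $C(N+1,M')=C(N,M')+C(N,M'-1)$, obtained by splitting off the dichotomies of the first $N$ points that are separable by a hyperplane containing $x_{N+1}$ and counting those after projection onto $x_{N+1}^{\perp}$, with your induction on $N+M'$ over all general-position configurations (including the check that the projected family stays in general position) settling the configuration-independence.
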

\begin{remark}\label{rem:fct-thm-not-gen-pos}
    For $N\leq M'$, we have $C(N,M') = 2\sum_{k=0}^{N-1} \binom{N-1}{k}= 2^N$, i.e., all possible dichotomies can be realized whenever $F$ is in $\Phi$-general position with $N\leq M'$. 
    We further note that if the points in $F$ are not in $\Phi$-general position, there will be fewer than $C(N,M')$ $\Phi$-separable dichotomies (see, e.g., \cite{mitchison1989bounds}).
\end{remark}

Based on the function-counting theorem, let us now discuss examples \cite{cover1965geometrical} characterizing the classification capability of a transformation $\Phi \colon \mathbb R^M \to \mathbb R^{M'}$ to motivate the definition of \emph{separation capacity}, which will be stated later. To this end, assume for now that the $N$-point set $F\subset \mathbb R^M$ is in $\Phi$-general position so that by the function-counting theorem, the number of $\Phi$-separable dichotomies of $F$ is given by $C(N,M')$. We first compare $C(N,M')$ to the maximum number of possible dichotomies by studying the ratio $P(N,M')\coloneqq C(N,M')/2^N$. If a dichotomy of $F$ is chosen uniformly at random from the $2^N$ possible dichotomies, then $P(N,M')$ can be viewed as the probability of separability. In \cref{fig:prob-sep}, the graph of the function $N \mapsto P(N,M')$ for fixed $M' \in \mathbb N$ is shown. We note that this function exhibits a threshold effect at $N=2M'$. In particular, for fixed $\epsilon<1$, we have \cite{cover1965geometrical}
\begin{align}\label{eq:prob-sep-asymptotic}
    \lim_{M' \to \infty} P\left(\lceil 2M'(1-\epsilon) \rceil,M'\right) = \begin{cases}
        1, & \text{if $\epsilon > 0$,} \\
        1/2, & \text{if $\epsilon = 0$,} \\
        0, & \text{if $\epsilon < 0$.}
    \end{cases}
\end{align}
This means that in the regime $M' \to \infty$, a uniformly at random chosen dichotomy of $F$ is $\Phi$-separable with probability tending to one as long as $N<2M'$. In contrast to this, if $N>2M'$, then the probability of this event tends to zero. We also note that $P(2M',M')=\frac 12$, which can be proved using a symmetry argument, see \cref{appendix:sym} for the detailed derivation.
In other words, $50\%$ of all possible dichotomies are $\Phi$-separable if and only if $N=2M'$. 

It is further shown in \cite{cover1965geometrical} that $2M'$ reappears as a critical number in other contexts, such as storing random patterns and ambiguous generalization. Regarding the former, \cite{cover1965geometrical} establishes that the expected value and median of the maximum integer $N$ for which a uniformly at random chosen dichotomy of the $N$-point set $F$ is $\Phi$-separable equals $2M'$. The latter is concerned with the question whether or not a new point can be assigned uniquely to a given dichotomy of the $N$-point set $F$, and \cite{cover1965geometrical} shows that unambiguous generalization becomes possible at $N=2M'$. Both of these concepts, however, are not directly relevant to this paper and will not be discussed in detail here. For a thorough discussion, we refer to \cite{cover1965geometrical}.

These observations lead to the notion of separation capacity of a (nonlinear) transformation $\Phi$, which is not formally stated in \cite{cover1965geometrical}. In the spirit of \cite{mitchison1989bounds,kowalczyk1994separating}, we define it as follows.
\begin{definition}\label{def:sep-cap}
    For $M,M' \in \mathbb N$, let $\Phi \colon \mathbb R^M \to \mathbb R^{M'}$. Denote by $\sepcap{\Phi}$ the largest $N \in \mathbb N$ such that for $(\mathcal{L}^{M})^N$-a.e. $N$-tuple\footnote{To simplify notation, whenever it is clear from the context, we use from now on $F$ to denote both the set $\{f_1,\ldots,f_N\} \subset \mathbb R^M$ and the $N$-tuple $(f_1,\ldots,f_N) \in (\mathbb R^M)^N$.} $F\coloneqq(f_1,\ldots,f_N) \in (\mathbb R^M)^N$ at least $50\%$ of all possible dichotomies of $F$ are $\Phi$-separable. If there is no such $N \in \mathbb N$, set $\sepcap{\Phi} \coloneqq 0$. 
    We call $\sepcap{\Phi}$ the \emph{separation capacity} of $\Phi$.
\end{definition}
Thus, under the assumption that
\begin{align}\label{eq:assumption_phi-gen-pos}
    (\mathcal{L}^{M})^N\!\left(\left\{F \in (\mathbb R^M)^N \colon \text{$F$ is not in $\Phi$-general position}\right\}\right) = 0, \quad \text{for every $N \in \mathbb N$,}
\end{align}
we have, by the above discussion, that $\sepcap{\Phi}$ is the largest $N \in \mathbb N$ such that
\begin{align*}
    \frac{C(N,M')}{2^N} = 2^{-N+1}\sum_{k=0}^{M'-1} \binom{N-1}{k} \geq \frac{1}{2}.
\end{align*}
Using the aforementioned symmetry argument (carried out in \cref{appendix:sym}), it follows that
\begin{align}\label{eq:sep-cap-2dof}
    \sepcap{\Phi}=2M'.
\end{align}

\paragraph{Relation to VC dimension} The celebrated Vapnik--Chervonenkis (VC) dimension \cite{vapnik1971uniform} closely relates to the concept of separation capacity, with both providing measures for the classification capabilities of function classes.
Within Cover's framework, the function classes we consider are induced by (nonlinear) transformations $\Phi \colon \mathbb R^M \to \mathbb R^{M'}$, namely, $\mathcal{H}_\Phi \coloneqq \{f \mapsto \mathrm{sign}(\innerprod{\Phi(f)}{w}) \colon w \in \mathbb R^{M'}\}$. The VC dimension of $\mathcal{H}_\Phi$, denoted $\mathrm{VCdim}(\mathcal{H}_\Phi)$, is then given by the largest $N \in \mathbb N$ for which there exists an $N$-point set $F \subset \mathbb R^M$ such that all possible $2^N$ dichotomies of $F$ are $\Phi$-separable. 
We emphasize that in contrast to this, the separation capacity takes into account $(\mathcal{L}^M)^N$-a.e. $N$-tuple in $(\mathbb R^M)^N$, but requiring only that $50\%$ of all dichotomies be $\Phi$-separable. In general, neither of the two quantities can be upper bounded by the other. Indeed, for the homogeneous linear classifier $\mathcal{H}_{\mathrm{Id}}$, we have $\mathrm{VCdim}(\mathcal{H}_{\mathrm{Id}}) = M$ by \cite[Theorem 9.2]{shalev2014understanding} and $\sepcap{\mathrm{Id}} = 2M$ by \cref{eq:sep-cap-2dof}, as \cref{eq:assumption_phi-gen-pos} clearly holds for $\Phi=\mathrm{Id}$. On the other hand, if 
\begin{align*}
    \Phi(f) = \begin{cases}
        f, &\text{if $f \in \mathbb R^M\setminus A$,} \\
        0, &\text{if $f \in A$,}
    \end{cases} \quad f \in \mathbb R^M,
\end{align*}
where $A \subset \mathbb R^M$ is such that $\mathcal{L}^M(A)>0$ and $\mathcal{L}^M(\mathbb R^M\setminus A)>0$, then $\mathrm{VCdim}(\mathcal{H}_\Phi) = M$ by the same argument as in the case $\mathcal{H}_{\mathrm{Id}}$. However, no dichotomy of any $1$-point set $\{f\}$ with $f \in A$ is $\Phi$-separable, so that $\sepcap{\Phi} = 0$ by \cref{def:sep-cap}. 
In the context of feature extraction, studying the separation capacity rather than the VC dimension is more natural as the former provides a more intuitive geometric perspective. Specifically, the separation capacity sheds light on the geometry of the decision surface by describing its degrees of freedom. This will be further illustrated in the next subsection.

\begin{figure}
    \centering
    \resizebox{\columnwidth}{!}{
            \definecolor{blue2}{RGB}{20,99,178}
\definecolor{green2}{RGB}{0,95,1}
\colorlet{red2}{red!70!black}
\begin{tikzpicture}[scale=1, every node/.style={scale=0.4},
 evaluate={
        function C(\n,\m) {
            if \m == 1 then {
                return 1;
            } else {
                return C2(\n,\m-1)+binom(\n-1,\m-1);
            };
        };
        function C2(\n,\m) {
            if \m == 1 then {
                return 1;
            } else {
                return C(\n, \m-1)+binom(\n-1,\m-1);
            };
        };
        function binom(\n,\k) {
            if \k>\n then {
                return 0;
            } else {
                return \n!/((\n-\k)!*\k!);
            };
        };
    },
]
    \begin{axis}[
        unit vector ratio*=1 1 1,
        at={(0,0)},
        xmin=-0.1,
        xmax=4,
        ymin=-0.1,
        ymax=1.1,
        clip=false,
        xtick={0,1,2,3,4},
        ytick={0,0.25,...,1},
        yticklabels={0,0.25,...,1},
        xticklabels={0,1,2,3,4},
        ylabel={$C(N,M')/2^N$},
        xlabel={$N/M'$},
        legend cell align={left}
    ]
    \addplot[color=blue2,mark=*, mark size=0.75pt,samples at={0.2, 0.4, 0.6, 0.8, 1.0, 1.2, 1.4, 1.6, 1.8, 2.0, 2.2, 2.4, 2.6, 2.8, 3.0, 3.2, 3.4, 3.6, 3.8, 4.0},variable=\x] {(1/2)^(\x*5)*2*C(5*\x,5)};
    \addplot[color=red2,mark=*, mark size=0.75pt,samples at = {0.02,0.2, 0.4, 0.6, 0.8, 1.0, 1.2, 1.4, 1.6, 1.7, 1.8, 1.9, 1.96, 2.0, 2.06, 2.1, 2.2, 2.3, 2.4, 2.6, 2.8, 3.0, 3.2, 3.4, 3.6, 3.8, 4.0},variable=\x] {(1/2)^(\x*50)*2*C(50*\x,50)};
    \legend{$M'=5$,$M'=50$};
    \draw[dotted]{} (0,1)--(2,1);
    \draw[dotted]{} (2,1)--(2,0);
    \draw[dotted]{} (2,0)--(4,0);
\end{axis}
\end{tikzpicture}
    }
    \caption{Probability of separability. The gray dotted line corresponds to \cref{eq:prob-sep-asymptotic}.}
    \label{fig:prob-sep}
\end{figure}

\subsection{Separation capacity of general (nonlinear) transformations}
We have seen above that the separation capacity of a (nonlinear) transformation $\Phi \colon \mathbb R^M \to \mathbb R^{M'}$ is given by $\sepcap{\Phi}=2M'$ under the assumption that \cref{eq:assumption_phi-gen-pos} holds. Naturally, two guiding questions arise that extend beyond Cover's \cite{cover1965geometrical} original framework: 
\begin{enumerate}
    \item Is there a simple way of verifying which (nonlinear) transformations $\Phi$ satisfy \cref{eq:assumption_phi-gen-pos}? Such a method would facilitate separation capacity computations, allowing us not only to determine $\sepcap{\Phi}$ directly according to \cref{eq:sep-cap-2dof} but also to enhance our understanding of the notion of $\Phi$-general position from a measure-theoretic perspective. 
    \item How can the separation capacity be determined if \cref{eq:assumption_phi-gen-pos} does not hold? Generally, we cannot expect a nonlinear transformation $\Phi$ to always satisfy \cref{eq:assumption_phi-gen-pos}, such as when the image of $\Phi$ lies on a linear subspace of $\mathbb R^{M'}$. This can occur, notably, in the case of scattering networks. Therefore, it is crucial to develop a method for determining the separation capacity under this condition, as this is a key step in our objective of computing the separation capacity of scattering networks.
\end{enumerate}
In this subsection, these questions will be addressed.  
The method derived from the first question will serve as the foundation for answering the second. Notably, we provide a conceptually insightful and practically useful formulation for the separation capacity that applies to general transformations $\Phi$, with the sole requirement that $\Phi$ be Lebesgue measurable.
To do so, let us first extend the notion of $\Phi$-general position. 
\begin{definition}
    For $M,M'\in \mathbb N$, let $\Phi \colon \mathbb R^M \to \mathbb R^{M'}$, and fix $M^\natural \in \mathbb N$ with $M^\natural \leq M'$. The set $F \coloneqq \{f_1, \ldots, f_N\} \subset \mathbb R^M$, $N \in \mathbb N$, is said to be in \emph{$(M^\natural,\Phi)$-general position} if every subset of $k$ elements of $\left\{\Phi(f_1), \ldots, \Phi(f_N)\right\} \subset \mathbb R^{M'}$ is linearly independent for all $k \leq \min\{M^\natural,N\}$. 
\end{definition}
Recalling \cref{def:phi-gen-pos}, it becomes evident that $(M',\Phi)$-general position is equivalent to $\Phi$-general position for $\Phi \colon \mathbb R^M \to \mathbb R^{M'}$. We next establish the following key lemma, which provides inter alia a necessary and sufficient condition for \cref{eq:assumption_phi-gen-pos} to hold.
\begin{lemma}\label{L:equiv_span_phi-gen_pos}
    For $M,M' \in \mathbb N$, let $\Phi \colon \mathbb R^M \to \mathbb R^{M'}$ be Lebesgue measurable. Let $M^\natural, N\in \mathbb N$ with $M^\natural \leq M' \leq N$. The set of $N$-tuples $F \coloneqq (f_1, \ldots, f_N) \in (\mathbb R^M)^N$ which are not in $(M^\natural,\Phi)$-general position has $(\mathcal{L}^{M})^N$-measure zero if and only if there is no $\mathcal{L}^M$-measurable set $A \subseteq \mathbb R^M$ with $\mathcal{L}^M(A)>0$ such that 
    \begin{align}\label{eq:cover_equiv_phi-gen_pos}
        \dimR{\spanR{\Phi(A)}} < M^\natural.
    \end{align} 
\end{lemma}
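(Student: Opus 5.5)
We prove the two directions separately. Write $\mu \coloneqq (\mathcal{L}^M)^N$ and let $B \subseteq (\mathbb R^M)^N$ be the set of $N$-tuples that are not in $(M^\natural,\Phi)$-general position. Since $M^\natural \leq M' \leq N$, we have $\min\{M^\natural,N\}=M^\natural$. Any subset of a linearly independent set is linearly independent. Hence $F\notin B$ exactly when every choice of $M^\natural$ of the vectors $\Phi(f_1),\ldots,\Phi(f_N)$, taken at distinct indices, is linearly independent. Equivalently, the corresponding $M' \times M^\natural$ matrix has a nonvanishing $M^\natural\times M^\natural$ minor. Because $\Phi$ is measurable, these minors are measurable functions of $F$, so $B$ is a finite union of measurable sets and is therefore measurable.

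\emph{Necessity.} Suppose a measurable $A$ with $\mathcal{L}^M(A)>0$ satisfies \cref{eq:cover_equiv_phi-gen_pos}. Consider the tuples whose first $M^\natural$ entries lie in $A$, namely $A^{M^\natural}\times(\mathbb R^M)^{N-M^\natural}$. This set has measure $\mathcal{L}^M(A)^{M^\natural}\cdot\infty>0$ (if $\mathcal{L}^M(A)=\infty$, pass to a subset of finite positive measure). For every such tuple, $\Phi(f_1),\ldots,\Phi(f_{M^\natural})$ lie in a space of dimension $<M^\natural$ and so are dependent. Hence this set is contained in $B$, so $\mu(B)>0$.

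\emph{Sufficiency.} Assume no such $A$ exists. By the union bound and the symmetry of $\mu$, it suffices to show that
\[
D_k\coloneqq\{(f_1,\ldots,f_k)\colon \Phi(f_1),\ldots,\Phi(f_k)\text{ linearly dependent}\}
\]
is $(\mathcal{L}^M)^k$-null for every $k\le M^\natural$. We argue by induction on $k$. For $k=1$, $D_1=\Phi^{-1}(\{0\})$, and $\dim\operatorname{span}\Phi(D_1)=0<M^\natural$, so by hypothesis $\mathcal{L}^M(D_1)=0$.

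For the inductive step, fix $(f_1,\ldots,f_{k-1})\notin D_{k-1}$; by induction this holds for almost every $(k-1)$-tuple. Let $V$ be the span of $\Phi(f_1),\ldots,\Phi(f_{k-1})$, which has dimension $k-1<M^\natural$. The section of $D_k$ at this $(k-1)$-tuple is $\Phi^{-1}(V)$, and $\Phi(\Phi^{-1}(V))\subseteq V$ spans a space of dimension at most $k-1<M^\natural$. Hence $\mathcal{L}^M(\Phi^{-1}(V))=0$ by hypothesis. By Tonelli's theorem applied to the measurable set $D_k$, we have
\[
(\mathcal{L}^M)^k(D_k)=\int \mathcal{L}^M\bigl(\Phi^{-1}(V_{f_1,\ldots,f_{k-1}})\bigr)\,d(\mathcal{L}^M)^{k-1}=0,
\]
since the integrand vanishes off the null set $D_{k-1}$. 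Finally, $B$ is contained in the finite union over index subsets $I$ with $|I|=M^\natural$ of the sets $\{F\colon (f_i)_{i\in I}\in D_{M^\natural}\}$, each of which is a product of a permuted null set with a full space. Therefore $\mu(B)=0$.

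\textbf{Main obstacle.} The delicate step is the measurability bookkeeping needed to apply Fubini–Tonelli. One must check that $D_k$ is measurable, which follows from writing it as the common zero set of all $k\times k$ minors of $[\Phi(f_1)\cdots\Phi(f_k)]$, each a polynomial in measurable functions. One must also make sure the hypothesis is used only on sets that are genuinely measurable: each $\Phi^{-1}(V)$ is the preimage of a closed set under a measurable map, so this holds. The rest is routine.
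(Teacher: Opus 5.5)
Your proof is correct and follows essentially the same route as the paper: necessity via a product set built from copies of $A$, and sufficiency via an induction in which each section is the preimage of a span of dimension less than $M^\natural$, made null by the hypothesis and then integrated out with Fubini--Tonelli. The only difference is organizational: you induct on the subset size $k\le M^\natural$ and then take a union bound over index sets, whereas the paper inducts on $N$ directly and takes the union over all $(L-1)$-subsets at each step.
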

\begin{remark}[Measurability assumption]
    The assumption of $\Phi$ being Lebesgue measurable ensures that the set of $N$-tuples $F \coloneqq (f_1, \ldots, f_N) \in (\mathbb R^M)^N$ which are not in $(M^\natural,\Phi)$-general position, denoted by $P_{M^\natural,\Phi}^N$, is $(\mathcal{L}^{M})^N$-measurable, specifically if it is not a nullset. Indeed, we have
    \begin{align*}
        P_{M^\natural,\Phi}^N = \bigcup_{1 \leq j_1 < \cdots < j_{M^\natural} \leq N} \pi^{-1}_{j_1, \ldots, j_{M^\natural}} \left( \bigcap_{1 \leq k_1 < \cdots < k_{M^\natural} \leq M'}\delta_{k_1,\ldots,k_{M^\natural}}^{-1}(\{0\})\right),
    \end{align*}
    where $\pi_{j_1, \ldots, j_{M^\natural}} \colon (\mathbb R^M)^{N} \to (\mathbb R^M)^{M^\natural}, (f_1,\ldots,f_N) \mapsto (f_{j_1}, \ldots, f_{j_{M^\natural}})$ is the canonical projection onto the coordinates $(j_1, \ldots, j_{M^\natural})$ and
    \begin{align*}
        \delta_{k_1,\ldots,k_{M^\natural}} \colon (\mathbb R^M)^{M^\natural} \to \mathbb R, (f_1,\ldots,f_{M^\natural}) \mapsto \det \begin{pmatrix}
            \Phi_{k_1}(f_1) &\cdots & \Phi_{k_1}(f_{M^\natural}) \\
            \vdots & \ddots & \vdots \\
            \Phi_{k_{M^\natural}}(f_1) &\cdots & \Phi_{k_{M^\natural}}(f_{M^\natural})
        \end{pmatrix}.    
    \end{align*}
    It follows that $P_{M^\natural,\Phi}^N$ is $(\mathcal{L}^{M})^N$-measurable whenever $\Phi$ is Lebesgue measurable.
\end{remark}
\begin{proof}[Proof of \cref{L:equiv_span_phi-gen_pos}]
    We first show the contrapositive of the ``only if'' statement.
    Suppose that there exists an $\mathcal{L}^M$-measurable set $A \subseteq \mathbb R^M$ with $\mathcal{L}^M(A)>0$ such that \cref{eq:cover_equiv_phi-gen_pos} holds. Then, 
    \begin{align*}
        \dimR{\spanR{\{\Phi(f_1),\ldots,\Phi(f_N)\}}} < M^\natural, \qquad \text{for all $f_1,\ldots,f_{N}\in A$},
    \end{align*}
     which implies that every subset of $M^\natural$ elements of $\{\Phi(f_1),\ldots,\Phi(f_N)\}$ is linearly dependent. We can hence conclude that all $N$-tuples in $A^N$ are not in $(M^\natural,\Phi)$-general position. Since $(\mathcal{L}^{M})^N(A^N)=N\mathcal{L}^M(A)>0$, the ``only if'' statement follows. 
     
    We next establish the converse statement. Suppose that there is no $\mathcal{L}^M$-measurable set $A \subseteq \mathbb R^M$ of positive $\mathcal{L}^M$-measure such that \cref{eq:cover_equiv_phi-gen_pos} holds. To prove the assertion, we proceed by induction on $N$. For $N=1$, $\{f_1\}$ is in $(M^\natural,\Phi)$-general position if and only if $\Phi(f_1) \neq 0$. Setting $A \coloneqq \{f \in \mathbb R^M \colon \Phi(f)=0\}$, we have by our assumption \cref{eq:cover_equiv_phi-gen_pos}, $\mathcal{L}^M(A)=0$ as $\dimR{\spanR{A}}=0$. This proves the claim for $N=1$. Now, suppose that the claim is true for $N-1$, i.e., the set of $(N-1)$-tuples which are not in $(M^\natural,\Phi)$-general position has $(\mathcal{L}^M)^{N-1}$-measure zero. Let $F \in (\mathbb R^M)^{N-1}$ be in $(M^\natural,\Phi)$-general position, and fix an arbitrary $f_N \in \mathbb R^M$. Then, $(f_1,\ldots,f_{N-1},f_N)$ is in $(M^\natural,\Phi)$-general position if and only if $\Phi(f_N) \notin \spanRinline{\{\Phi(f_{j_\ell})\}_{\ell=1}^{L-1}}$ for every $1\leq j_1 < \cdots < j_{L-1} \leq N-1$, where $L\coloneqq \min\{M^\natural,N\}$. Set $A_{j_1,\ldots,j_{L-1}} \coloneqq \{f \in \mathbb R^M \colon \Phi(f) \in \spanRinline{\{\Phi(f_{j_\ell})\}_{\ell=1}^{L-1}}\}$. As $\dimRinline{\spanRinline{\Phi(A_{j_1,\ldots,j_{L-1}})}} \leq L-1<M^\natural$ by definition of $A_{j_1,\ldots,j_{L-1}}$, it follows from our assumption \cref{eq:cover_equiv_phi-gen_pos} that $\mathcal{L}^M(A_{j_1,\ldots,j_{L-1}})=0$.
    This, in turn, implies that the set $A(f_1,\ldots,f_{N-1}) \coloneqq \bigcup_{1\leq j_1 < \cdots < j_{L-1} \leq N-1}A_{j_1,\ldots,j_{L-1}}$ has $\mathcal{L}^M$-measure zero. 
    Denoting by $S_N$ the set of $N$-tuples which are not in $(M^\natural,\Phi)$-general position, we thus have
    \begin{align*}
        S_N \subseteq \left(S_{N-1} \times \mathbb R^M\right) \cup \left(\left\{F \in (\mathbb R^M)^N \colon (f_k)_{k=1}^{N-1} \in (\mathbb R^M)^{N-1}\setminus S_{N-1},\,\, f_N \in A(f_1,\ldots,f_{N-1}) \right\}\right).
    \end{align*}
    Application of the induction hypothesis (i.e., $(\mathcal{L}^M)^{N-1}(S_{N-1}) =0$) together with Fubini's theorem and the fact that $\mathcal{L}^M(A(f_1,\ldots,f_{N-1}))=0$, for all $(f_k)_{k=1}^{N-1} \in (\mathbb R^M)^{N-1}\setminus S_{N-1}$, yields $(\mathcal{L}^M)^{N}(S_N)=0$. This completes the proof.  
\end{proof}

We can now state the main result of this section, namely the following formulation for the separation capacity of a Lebesgue measurable transformation $\Phi$.

\begin{theorem}\label{thm:sep-cap}
    Let $M,M' \in \mathbb N$, and let $\Phi \colon \mathbb R^M \to \mathbb R^{M'}$ be Lebesgue measurable. The separation capacity of $\Phi$ is given by
    \begin{align}\label{eq:sep_cap_formula}
        \sepcap{\Phi} = 2\min_{\substack{A \subseteq \mathbb R^M \\ \mathcal{L}^M(A)>0}} \dimR{\spanR{\Phi(A)}}. 
    \end{align}
    In particular, if there is no  $\mathcal{L}^M$-measurable set $A \subseteq \mathbb R^M$ with $\mathcal{L}^M(A)>0$ such that 
    \begin{align}\label{eq:cover_cond_cap}
        \dimR{\spanR{\Phi(A)}} < \dimR{\spanR{\Phi(\mathbb R^M)}}, 
    \end{align}
    then the separation capacity of $\Phi$ is 
    \begin{align}\label{eq:cover_cap}
        \sepcap{\Phi} = 2 \cdot \dimR{\spanR{\Phi(\mathbb R^M)}}.
    \end{align}
\end{theorem}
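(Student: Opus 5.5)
Let $D \coloneqq \min_{A} \dimR{\spanR{\Phi(A)}}$, where the minimum runs over $\mathcal{L}^M$-measurable $A$ with $\mathcal{L}^M(A)>0$. The minimum exists because the dimensions are integers in $\{0,\ldots,M'\}$. The plan is to prove $\sepcap{\Phi}\geq 2D$ and $\sepcap{\Phi}\leq 2D$ separately. The second statement is then immediate: if no $A$ satisfies \cref{eq:cover_cond_cap}, then $D\geq \dimR{\spanR{\Phi(\mathbb R^M)}}$. The reverse inequality is trivial, since $\Phi(A)\subseteq\Phi(\mathbb R^M)$. So $D=\dimR{\spanR{\Phi(\mathbb R^M)}}$, and \cref{eq:cover_cap} follows from \cref{eq:sep_cap_formula}.

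\emph{Lower bound.} Fix a minimizing set $A_0$ and set $V\coloneqq \spanR{\Phi(\mathbb R^M)}$. By minimality, no positive-measure set has span of dimension $<D$. Hence \cref{L:equiv_span_phi-gen_pos} with $M^\natural=D$ shows that for every $N\geq M'$, a.e.\ $N$-tuple is in $(D,\Phi)$-general position. For $N<M'$, I obtain the same conclusion from the case $N=M'$ by projecting onto the first $N$ coordinates and using Fubini. Now take a tuple in $(D,\Phi)$-general position with $N\leq 2D$. The goal is to show that at least half of its dichotomies are $\Phi$-separable.

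\emph{Key step for the lower bound.} I will reduce to Cover's function-counting theorem in a $D$-dimensional space. Choose a linear map $T\colon\mathbb R^{M'}\to\mathbb R^{D}$ such that the points $T\Phi(f_i)$ are in general position in $\mathbb R^D$. A generic $T$ should work: any $D$ of the vectors $\Phi(f_i)$ are linearly independent, and a generic $T$ preserves the linear independence of each of these finitely many $D$-subsets. Every dichotomy that is homogeneously separable by $T\Phi$ is $\Phi$-separable, via $w=T^\mathsf{T}w'$. \cref{thm:function-counting} then gives at least $C(N,D)$ separable dichotomies, and $C(N,D)/2^N\geq 1/2$ for $N\leq 2D$ by the symmetry argument of \cref{appendix:sym}. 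Hence $\sepcap{\Phi}\geq 2D$. If $D=0$, the lower bound is trivial.

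\emph{Upper bound.} Take $N=2D+1$. Every tuple in $A_0^N$, a set of positive measure, has all images in the $D$-dimensional space $W\coloneqq\spanR{\Phi(A_0)}$. Separability by $w\in\mathbb R^{M'}$ depends only on the orthogonal projection of $w$ onto $W$. So the number of $\Phi$-separable dichotomies is at most the number of homogeneously linearly separable dichotomies of $N$ points in $W\cong\mathbb R^D$. By \cref{rem:fct-thm-not-gen-pos} this is at most $C(2D+1,D)<2^{N-1}$, using the symmetry computation of \cref{appendix:sym}. Therefore the property fails for $N=2D+1$ on a set of positive measure.

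\emph{Main obstacle.} The upper-bound argument shows failure at $N=2D+1$. The definition, however, asks for the \emph{largest} $N$ at which the a.e.\ property holds, so I also need failure for every $N>2D+1$. I expect to get this from the same bound $C(N,D)<2^{N-1}$, which holds for every $N>2D$ and is applied on $A_0^N$. I also expect that the fewer-dichotomies claim in \cref{rem:fct-thm-not-gen-pos} must be used in the form ``at most $C(N,D)$ for arbitrary points,'' which I will cite rather than reprove.
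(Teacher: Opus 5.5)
Your proof is correct. The upper bound is the paper's argument: restrict to $A_0^N$, note that only the component of $w$ in the $D$-dimensional span matters, and use the at-most-$C(N,D)$ count from \cref{rem:fct-thm-not-gen-pos}. The lower bound takes a genuinely different route.

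After obtaining a.e.\ $(D,\Phi)$-general position from \cref{L:equiv_span_phi-gen_pos}, the paper invokes \cite[Theorem 2.1]{haberle2026function}, an external in-preparation result. That result says linear independence of every $D$-subset of $\{\Phi(f_i)\}$ already gives at least $C(N,D)$ $\Phi$-separable dichotomies. You prove exactly this fact directly. A generic $T\colon\mathbb R^{M'}\to\mathbb R^D$ keeps each of the finitely many relevant subsets independent, because each bad set lies in the zero set of a nonvanishing minor, a nontrivial polynomial in the entries of $T$. (For $N<D$, apply this to the single $N$-subset, which is independent by general position.) Cover's theorem then counts $C(N,D)$ dichotomies for $T\circ\Phi$, and these lift via $w=T^{\mathsf T}w'$. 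So your proof rests only on \cref{thm:function-counting}, while the paper's places this step inside a more general counting theory developed elsewhere.

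Two further differences work in your favour.
\begin{itemize}
\item You get the ``in particular'' clause as a one-line corollary of \cref{eq:sep_cap_formula}: under the hypothesis, the minimum is attained at $A=\mathbb R^M$. The paper instead gives a separate argument via the projection $\widetilde\pi$.
\item You explicitly handle the restriction $M'\le N$ in \cref{L:equiv_span_phi-gen_pos} when $2D<M'$, via Fubini, and you show failure for every $N>2D$. The paper leaves both implicit in its appeal to \cref{eq:sep-cap-2dof}.
\end{itemize}
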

\begin{proof}
    Let us first prove that \cref{eq:cover_cap} holds if \cref{eq:cover_cond_cap} is satisfied. Set $\widetilde{M}\coloneqq \dimRinline{\spanRinline{\Phi(\mathbb R^M)}}$. If $\widetilde{M} = M'$, \cref{eq:cover_cap} follows immediately from \cref{L:equiv_span_phi-gen_pos} and \cref{eq:sep-cap-2dof}. Otherwise, if $\widetilde{M} < M'$, there exists a linear map $\widetilde{\pi} \colon \mathbb R^{M'} \to \mathbb R^{\widetilde{M}}$ such that $\widetilde{\Phi} \coloneqq \widetilde{\pi} \circ \Phi$ satisfies $\dimRinline{\spanRinline{\widetilde{\Phi}(\mathbb R^M)}} = \widetilde{M}$, i.e., $\ker(\widetilde{\pi}) \cap \spanR{\Phi(\mathbb R^M)} = \{0\}$. Then, $\Phi$-separability and $\widetilde{\Phi}$-separability are equivalent. Indeed, it immediately follows from the definition of $\widetilde{\Phi}$ that $\widetilde{\Phi}$-separability implies ${\Phi}$-separability. The reverse implication holds because $(\Phi(\mathbb R^M))^\perp = \ker(\widetilde{\pi})$. Application of \cref{L:equiv_span_phi-gen_pos} and \cref{eq:sep-cap-2dof} yields \cref{eq:cover_cap}.

    We now turn to the proof of \cref{eq:sep_cap_formula}. Let $A^{\natural} \subseteq \mathbb R^M$ be an $\mathcal{L}^M$-measurable set of positive $\mathcal{L}^M$-measure such that
    \begin{align*}
        \dimR{\spanR{\Phi(A^\natural)}} = \min_{\substack{A \subseteq \mathbb R^M \\ \mathcal{L}^M(A)>0}} \dimR{\spanR{\Phi(A)}} \eqqcolon M^{\natural}.
    \end{align*}
   Similarly to the argument in the first part of the proof, there exists a linear map $\pi^\natural \colon \mathbb R^{M'} \to \mathbb R^{M^\natural}$ such that $\Phi^\natural \coloneqq \pi^\natural \circ \Phi$ satisfies $\dimRinline{\spanRinline{\Phi^\natural(A^\natural)}} = M^\natural$, and 
   it follows that every $N$-tuple $F \in (A^{\natural})^N$ is $\Phi$-separable if and only if it is $\Phi^\natural$-separable, where $N \in \mathbb N$. As the number of $\Phi^\natural$-separable dichotomies of $F \in (A^{\natural})^N$ is at most $C(N,M^\natural)$, see \cref{rem:fct-thm-not-gen-pos}, and as $(\mathcal{L}^M)^N((A^\natural)^N)=N\mathcal{L}^M(A^\natural)>0$, we have by \cref{eq:sep-cap-2dof},
   $\sepcap{\Phi}\leq 2 M^\natural$. It remains to show that equality holds. Using the definition of $M^\natural$ and \cref{L:equiv_span_phi-gen_pos}, we can deduce that for $(\mathcal{L}^{M})^N$-a.e. $N$-tuple $F = (f_1,\ldots,f_N) \in (\mathbb R^M)^N$, every subset of $\{\Phi(f_1),\ldots,\Phi(f_N)\}$ of $M^\natural$ elements is linearly independent, where $N \geq M^\natural$. 
   As a result of \cite[Theorem 2.1]{haberle2026function}, the number of $\Phi$-separable dichotomies of $(\mathcal{L}^{M})^N$-a.e. $F \in (\mathbb R^M)^N$ is at least $C(N,M^{\natural})$. Thanks to \cref{eq:sep-cap-2dof}, we obtain $\sepcap{\Phi}\geq 2 M^\natural$, and thus $\sepcap{\Phi}= 2 M^\natural$, which completes the proof.      
\end{proof}

This theorem thus establishes a method for separation capacity computations of general measurable transformations $\Phi$. We emphasize that it remains valid in particular if \cref{eq:assumption_phi-gen-pos} does not hold, thereby addressing our second guiding question.    
Additionally, \cref{eq:sep_cap_formula} offers other significant advantages. It bypasses the technicalities associated with working in the space of $N$-tuples. 
Furthermore, \cref{eq:sep_cap_formula} and hence condition \cref{eq:cover_cond_cap} circumvents dealing with the notion of $\Phi$-general position, greatly simplifying practical computations. Specifically, verifying \cref{eq:cover_cond_cap} rather than \cref{eq:assumption_phi-gen-pos} facilitates these computations, addressing our first guiding question.  
For instance, this becomes evident when studying the separation capacity of real-analytic transformations $\Phi$, as we shall see in the proof of the following corollary.
\begin{corollary}\label{cor:real-analytic} 
     Let $M,M' \in \mathbb N$. The separation capacity of a real-analytic $\Phi \colon \mathbb R^M \to \mathbb R^{M'}$ is given by
     \begin{align*}
        \sepcap{\Phi} = 2 \cdot \dimR{\spanR{\Phi(\mathbb R^M)}}.
    \end{align*}
\end{corollary}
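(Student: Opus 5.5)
The plan is to apply the second part of \cref{thm:sep-cap}. Real-analytic maps are continuous, hence Lebesgue measurable, so it suffices to rule out any $\mathcal{L}^M$-measurable set $A \subseteq \mathbb R^M$ with $\mathcal{L}^M(A)>0$ satisfying \cref{eq:cover_cond_cap}. Set $\widetilde{M} \coloneqq \dimRinline{\spanRinline{\Phi(\mathbb R^M)}}$ and $V \coloneqq \spanRinline{\Phi(A)}$. We argue by contradiction, assuming that $\dim V < \widetilde{M}$ for some such $A$.

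Since $V$ is a proper subspace of $\spanRinline{\Phi(\mathbb R^M)}$, there is a vector $w \in \mathbb R^{M'}$ with $w \perp V$ but $w \not\perp \spanRinline{\Phi(\mathbb R^M)}$. Define the scalar function $g(f) \coloneqq \innerprod{\Phi(f)}{w}$. It is real-analytic on $\mathbb R^M$, being a linear combination of the real-analytic components $\Phi_1,\ldots,\Phi_{M'}$. By construction $g$ vanishes on $A$, and $g$ does not vanish identically, since some $\Phi(f_0)$ has a nonzero component along $w$.

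The key step is the standard fact that the zero set of a real-analytic function on the connected domain $\mathbb R^M$ that is not identically zero has Lebesgue measure zero. This contradicts $A \subseteq g^{-1}(\{0\})$ together with $\mathcal{L}^M(A)>0$, so no such $A$ exists, and \cref{eq:cover_cap} yields the claim.

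The main obstacle is justifying this measure-zero fact if it is not simply cited. I would prove it by induction on $M$. For $M=1$, the zeros of a nonzero analytic function on $\mathbb R$ are isolated, hence countable. For the induction step, suppose $\mathcal{L}^M(g^{-1}(\{0\}))>0$. By Fubini, for a set of $x' \in \mathbb R^{M-1}$ of positive measure, the slice $t \mapsto g(x',t)$ has a zero set of positive one-dimensional measure, so by the case $M=1$ it vanishes identically. Expanding $g(x',t)=\sum_k a_k(x')(t-t_0)^k$ locally, every coefficient $a_k = \partial_t^k g(\cdot,t_0)/k!$ is real-analytic and vanishes on a set of positive $(M-1)$-dimensional measure. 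By induction each $a_k$ vanishes identically, so $g$ vanishes on an open set, and by the identity theorem $g \equiv 0$. Alternatively, one can simply cite Mityagin's note on the zero set of a real analytic function.
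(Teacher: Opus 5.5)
Your proposal is correct and follows essentially the same route as the paper. Both verify condition \cref{eq:cover_cond_cap} by observing that a vector $w$ orthogonal to $\Phi(A)$ makes $f \mapsto \innerprod{\Phi(f)}{w}$ a real-analytic function vanishing on a set of positive measure, hence identically zero, and then invoke the second part of \cref{thm:sep-cap}. The only difference is that you sketch a valid inductive Fubini proof that zero sets of nontrivial real-analytic functions are null, which the paper simply cites.
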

\begin{proof}
    The claim follows because condition \cref{eq:cover_cond_cap} holds in particular for real-analytic $\Phi \colon \mathbb R^M \to \mathbb R^{M'}$. Indeed, let $h \in \mathbb R^{M'}$ and $A \subseteq \mathbb R^M$ be an $\mathcal{L}^M$-measurable set of positive $\mathcal{L}^M$-measure. Then,
    \begin{align*}
        \left(\innerprod{h}{\Phi(f)} =0, \forall f \in A\right) \implies \left(\innerprod{h}{\Phi(f)} =0, \forall f \in \mathbb R^M\right), 
    \end{align*}
    as $f \mapsto \innerprod{h}{\Phi(f)}$ is real-analytic and as zero sets of nontrivial real-analytic functions are of $\mathcal{L}^M$-measure zero \cite{krantz2002primer,mityagin2020zerorealanalytic}. We thus have 
    \begin{align*}
        \dimR{\left(\Phi(A)\right)^\perp} = \dimR{\left(\Phi(\mathbb R^M)\right)^\perp}, 
    \end{align*}
    which is equivalent to \cref{eq:cover_cond_cap}. 
\end{proof}
The fact that the separation capacity of a real-analytic map $\Phi \colon \mathbb R^M \to \mathbb R^{M'}$ is given by \cref{eq:cover_cap} is also established in \cite{kowalczyk1994separating}, albeit based on a different approach, which pertains to real-analytic maps only.  

Finally, we would like to highlight that the usefulness of our extension of Cover's framework is further demonstrated in \cref{sec:sep-cap-CNN}, where we analyze the separation capacity of scattering networks.

\section{Scattering Networks}\label{sec:feature-extractors}
Shifting our attention, we now review scattering networks as presented in \cite{mallat2012group,wiatowski2017deep} and then show how they can be viewed as feature extractors in the framework developed in the previous section. Our focus is on a finite-dimensional version of scattering networks, specifically those defined on finite cyclic groups.
Scattering networks achieve remarkable classification results of, e.g., image \cite{bruna2013invariant} or audio \cite{anden2014deep} signals, when used in conjunction with SVMs. Additionally, they have been successfully applied to biomedical data \cite{warrick2020arrhythmia}, multi-scale time series such as, e.g., financial and turbulence time series  \cite{leonarduzzi2019maximum,morel2022scale}, the estimation of quantum molecular energies \cite{eickenberg2017solid}, and various astrophysical applications \cite{cheng2020new,valogiannis2022towards,cheng2024scattering}.

Consider inputs in $\mathbb C^{\mathbb Z /M \mathbb Z}$, the space of complex-valued functions on the finite cyclic group $\mathbb Z/M\mathbb Z$, where $M \in \mathbb N$. This is in contrast to \cite{mallat2012group,wiatowski2017deep}, where the inputs are functions in $L^2(\mathbb R^d)$. 
The basic building blocks of a scattering network are the elements of a sequence of the form $\Omega \coloneqq \{(\Psi_n,\rho_n,P_n)\}_{n \in \mathbb N}$, where we associate with the $n$th network layer the triplet $(\Psi_n,\rho_n,P_n)$ consisting of the following objects: (i) a \emph{frame} (i.e., a redundant spanning set, see, e.g., \cite{christensen2003introduction,kaiser1994friendly}) $\Psi_n$ generated by the family of functions $\{\chi_n\} \cup \{g_{\lambda_n}\}_{\lambda_n \in \Lambda_n} \subseteq \mathbb C^{\mathbb Z/M\mathbb Z}$, where $\Lambda_n$ is a countable index set, satisfying the frame condition\footnote{Note that $\Psi_n = \bigcup_{k=0}^{M-1} \left(\{T_k\chi_n^*\} \cup \{T_kg_{\lambda_n}^*\}_{\lambda_n \in \Lambda} \right)$, where $T_k$ is the translation operator, and where the superscript $*$ denotes involution.} \cite{ali1993continuous}
\begin{align}\label{eq:frame-condition}
    A_n \lVert f \rVert^2 \leq \lVert f \ast \chi_n \lVert^2 + \sum_{\lambda_n \in \Lambda_n} \lVert f \ast g_{\lambda_n} \rVert^2 \leq B_n \lVert f \rVert^2, \quad f \in \mathbb C^{\mathbb Z/M \mathbb Z},
\end{align}
with $0<A_n\leq B_n < \infty$, 
(ii) a \emph{nonlinearity} $\rho_n \colon \mathbb C \to \mathbb C$, and (iii) a \emph{pooling operator} $P_n \colon \mathbb C^{\mathbb Z/M \mathbb Z} \to \mathbb C^{\mathbb Z/M \mathbb Z}$. The sequence $\Omega=\{(\Psi_n,\rho_n,P_n)\}_{n \in \mathbb N}$ is referred to as \emph{module sequence} in \cite{wiatowski2017deep}.
For each $\lambda_n \in \Lambda_n$, define the operator $U[\lambda_n] \colon \mathbb C^{\mathbb Z/M\mathbb Z} \to \mathbb C^{\mathbb Z/M\mathbb Z}$ according to
\begin{align*}
    U[\lambda_n]f \coloneqq P_n(\rho_n(f \ast g_{\lambda_n})), \quad f \in \mathbb C^{\mathbb Z/M \mathbb Z},
\end{align*}
where $\left(\rho_n(f \ast g_{\lambda_n})\right)(k)\coloneqq \rho_n((f \ast g_{\lambda_n})(k))$, $k \in \mathbb Z/M\mathbb Z$. 
Extend this operator to paths $q = (\lambda_1, \ldots, \lambda_n) \in \Lambda_1 \times \cdots \times \Lambda_n \eqqcolon \Lambda_1^n$ according to
\begin{align*}
    U[q]f \coloneqq U[\lambda_n] \cdots U[\lambda_1]f, \quad f \in \mathbb C^{\mathbb Z/M \mathbb Z}.
\end{align*}
We further set $\Lambda_1^0 \coloneqq \{e\}$ and $U[e]f = f$, where $e \coloneqq \emptyset$ denotes the empty path. For $f \in \mathbb C^{\mathbb Z/M \mathbb Z}$ and $q \in \Lambda^n_1$ with $n \in \mathbb N_0$, the function $U[q]f$ is frequently called \emph{feature map}.  
The scattering network of depth $n_{\mathrm{d}} \in \mathbb N$ is given by
\begin{align}\label{eq:CNN-feature-extractor}
    \Phi \colon \mathbb C^{\mathbb Z/M \mathbb Z} \to \left(\mathbb C^{\mathbb Z/M \mathbb Z}\right)^{\bigcup_{n=0}^{n_{\mathrm{d}}} \Lambda_1^n}, \quad f \mapsto \bigcup_{n=0}^{n_{\mathrm{d}}} \Phi^n(f),
\end{align}
where $\Phi^n(f) = \{(U[q]f) \ast \chi_{n+1}\}_{q \in \Lambda_1^n}$ denotes the output of the $n$th network layer, see \cref{fig:scattering-tree}. The function $\Phi(f)$, where $f \in \mathbb C^{\mathbb Z/M \mathbb Z}$, is often referred to as \emph{feature vector}. 

\begin{figure}[t!]
    \centering
    \resizebox{\columnwidth}{!}{
\definecolor{colour1}{rgb}{0.0, 0.0, 0.0}
\definecolor{colour2}{rgb}{0.0, 0.0, 0.0}
\definecolor{blue2}{RGB}{20,99,178}
\definecolor{green2}{RGB}{0,95,1}
\colorlet{red2}{red!70!black}
\begin{tikzpicture}[
	level distance = 15mm,
	level 1/.style={sibling distance=30mm},
	level 2/.style={sibling distance=15mm},
        level 3/.style={sibling distance=5mm},
	grow'=up]
	\node[color=colour1] (a0) {$U[e]f=f$}
	child[draw=blue2,ultra thick]{
		node[opacity=1] (b1) {$U\left[\lambda_1^{(j)}\right]f$}
		child[draw=blue2,ultra thick] {
                node[opacity=1] (c1) {$U\left[\lambda_1^{(j)},\lambda_2^{(j)}\right]f$}
                child[draw=black,thin] child[draw=blue2,ultra thick] child[draw=black,thin]
            } 
            child[sibling distance=5mm,draw=black,thin] child[sibling distance=5mm,draw=black,thin] child[sibling distance=5mm,draw=black,thin]
	}
        child[sibling distance=20mm, level distance=20mm]{
		node[fill,color=colour2,circle,inner sep=2.5pt] (b2) {}
		child[sibling distance=5mm,level distance=15mm] child[sibling distance=5mm,level distance=15mm] child[sibling distance=5mm,level distance=15mm] child[sibling distance=5mm,level distance=15mm]
	}
	child[sibling distance=20mm,level distance=20mm]{
		node[fill,color=colour2,circle,inner sep=2.5pt] (b3) {}
		child[sibling distance=5mm,level distance=15mm] child[sibling distance=5mm,level distance=15mm] child[sibling distance=5mm,level distance=15mm] child[sibling distance=5mm,level distance=15mm]
	}
        child{
		node[opacity=1] (b4) {$U\left[\lambda_1^{(k)}\right]f$}
		child[sibling distance=5mm] child[sibling distance=5mm] child[sibling distance=5mm]
            child {
                node[opacity=1] (c2) {$U\left[\lambda_1^{(k)},\lambda_2^{(k)}\right]f$}
                child child child
            }
	};
	\draw[->,dashed,very thick,red2] (-0.2,-0.2) -- ++(225:0.875cm) node[below] {$f\ast \chi_1$};
	\draw[->,dashed,very thick,red2] (b1) -- ++(225:2cm) node[below] {$\left(U\left[\lambda_1^{(j)}\right]f\right) \ast \chi_2$};
        \draw[->,dashed,very thick,red2] (c1) -- ++(225:2cm) node[below] {$\left(U\left[\lambda_1^{(j)},\lambda_2^{(j)}\right]f\right) \ast \chi_3$};
	\draw[->,dashed,very thick,red2] (b2) -- ++(225:1cm) node[below] {$\cdot \ast \chi_2$};
	\draw[->,dashed,very thick,red2] (b3) -- ++(315:1cm) node[below] {$\cdot \ast \chi_2$};
        \draw[->,dashed,very thick,red2] (b4) -- ++(315:2cm) node[below] {$\left(U\left[\lambda_1^{(k)}\right]f\right)\ast \chi_2$};
        \draw[->,dashed,very thick,red2] (c2) -- ++(315:2cm) node[below] {$\left(U\left[\lambda_1^{(k)},\lambda_2^{(k)}\right]f\right)\ast \chi_3$};
\end{tikzpicture}}
    \caption{Structure of a scattering network. The path $(\lambda_1^{(j)}, \lambda_2^{(j)},\ldots)$ is indicated in blue. The outputs of each node are highlighted in red.}
    \label{fig:scattering-tree}
\end{figure}

\begin{remark}\label{rem:scattering-mallat}
    Scattering networks, as introduced by Mallat \cite{mallat2012group}, are built from the module sequence $\{(\Psi_\mathrm{wvt},\lvert \cdot \rvert, \mathrm{Id})\}_{n \in \mathbb N}$, where $\Psi_{\mathrm{wvt}}$ is a so-called \emph{wavelet frame} \cite{daubechies1992ten,vashisht2017necessary} and $\lvert \cdot \rvert$ is the modulus nonlinearity. The framework in \cite{wiatowski2017deep} allows for general frames, Lipschitz continuous nonlinearities, and Lipschitz continuous pooling operators. 
\end{remark}

We conclude this section by noting that, in the language of \cref{sec:sep-cap}, the scattering network realizes the nonlinear transformation $\Phi$. This becomes evident by the chain of identifications $\mathbb C^{\mathbb Z/M\mathbb Z} \simeq \mathbb C^M \simeq \mathbb R^{2M}$. Regarding the latter identification, let us discuss its interpretation in the context of binary classification. To elaborate, consider the nonlinear transformation $\Phi \colon \mathbb C^M \to \mathbb C^{M'}$. 
The identification $\mathbb C^{M'} \simeq \mathbb R^{2M'}$ suggests that a dichotomy $\{F_+,F_-\}$ of an $N$-point set $F \subset \mathbb C^M$ is $\Phi$-separable if there is a $w \in \mathbb C^{M'}$ such that\footnote{Equivalently, one may consider the sign of $\Im(\innerprod{\Phi(f)}{w})$.} 
\begin{align*}
    \Re(\innerprod{\Phi(f)}{w}) &>0, \quad \text{if $f \in F_+$},\\
    \Re(\innerprod{\Phi(f)}{w}) &<0, \quad \text{if $f \in F_-$}.
\end{align*}
Indeed, we have
\begin{align*}
    \Re(\innerprod{\Phi(f)}{w}) = \innerprod{\Re(\Phi(f))}{\Re(w)} + \innerprod{\Im(\Phi(f))}{\Im(w)} = \innerprod{\begin{pmatrix}
        \Re(\Phi(f)) \\ \Im(\Phi(f))
    \end{pmatrix}}{\begin{pmatrix}
        \Re(w) \\ \Im(w)
    \end{pmatrix}}.
\end{align*}
Thus, the separation capacity of the complex-valued map $\Phi \colon \mathbb C^M \to \mathbb C^{M'}$ is understood as the separation capacity of the associated real-valued map 
\begin{align}\label{eq:real-val-phi}
    \mathbb C^M \to \mathbb R^{2M'},\,\, f \mapsto \begin{pmatrix}
    \Re(\Phi(f)) \\ \Im(\Phi(f))
\end{pmatrix}.
\end{align}

\section{Separation Capacity of Scattering Networks} \label{sec:sep-cap-CNN}
We now turn to analyzing the separation capacity of scattering networks of the form \cref{eq:CNN-feature-extractor}. Before proceeding, a more detailed explanation is in order as to why such an analysis will contribute to a better understanding of the reasons behind the practical success of scattering networks. Evaluating the separation capacity allows identifying the architectural strengths and bottlenecks of a model. Concretely, our analysis will pinpoint which components (i.e., operations in the network) contribute most to the classification capabilities of scattering networks. Furthermore, separation capacity computations enable understanding the efficiency of a model, specifically in the sense of achieving the highest possible separation capacity with the minimal architectural cost. In the case of scattering networks, by architectural cost we mean the network depth $n_\mathrm{d}$ and the frame sizes $\{\lvert \Lambda_n \rvert\}_{n \in \mathbb N}$.
Finally, we note that a high separation capacity is desirable as it implies that the model demonstrates robustness to the underlying data structure. Indeed, by definition, the separation capacity takes into account almost every tuple with entries in the pattern space.  

We begin our analysis with a simple example.
\subsection{Example: A Weyl--Heisenberg frame}\label{subsec:WHmod2}
We analyze the scattering network whose layers are all built from the same frame and the modulus squared nonlinearity $\lvert \cdot \rvert^2$, without subsequent pooling. More formally, in the notation of \cref{sec:feature-extractors}, we have the module sequence $\{(\Psi_{\mathrm{WH}},\lvert \cdot \rvert^2, \mathrm{Id})\}_{n \in \mathbb N}$. 
As noted in \cref{rem:scattering-mallat}, the modulus $\lvert \cdot \rvert$ is the traditional choice for the nonlinearity. Since $\lvert \cdot \rvert$ and $\lvert \cdot \rvert^2$ exhibit similar behavior, namely both show a demodulation and bandwidth doubling effect and result in a real-valued signal with conjugate symmetric spectrum (see, e.g., \cite{wiatowski2017topology}), but $\lvert \cdot \rvert^2$ is easier to analyze, we opt for $\lvert \cdot \rvert^2$ in our first example. Indeed, applying the modulus squared nonlinearity pointwise to a signal simply doubles its spectral support, which, as we shall see below, simplifies our analysis.
The frame $\Psi_{\mathrm{WH}}$, considered in this example, is a so-called \emph{Weyl--Heisenberg frame} \cite{grochenig2001foundations}, formed by the family of functions $\{\chi\} \cup \{g_\lambda\}_{\lambda \in \Lambda}$, which are also called \emph{atoms}, where $ \{g_\lambda\}_{\lambda \in \Lambda}$ are obtained through modulation from the prototype function $\chi \in \mathbb C^{\mathbb Z/M \mathbb Z}$. The details of this construction will be clarified below. Identifying $\mathbb C^{\mathbb Z/M \mathbb Z} \simeq \mathbb C^M$, the functions $\{\chi\} \cup \{g_\lambda\}_{\lambda \in \Lambda}$ can be defined in their vector representations. Namely, set
\begin{align}
    \widehat{\chi}_k &\coloneqq \begin{cases}
        1, &\quad \text{if $0 \leq k \leq m_0$ or $M-m_0 \leq k\leq M-1$,} \\
        0, &\quad \text{otherwise,}
    \end{cases} \label{eq:WH-chi-hat}
\intertext{and}
    (\widehat{g_\lambda})_k &\coloneqq \widehat{g}_{\lambda,k} \coloneqq \widehat{\chi}_{(k-\lambda R) \bmod{M}}, \quad k \in \{0,\dots,M-1\}, \label{eq:WH-modulation}
\end{align}
where $R \coloneqq (2m_0 +1) \in \{1,\dots,\lfloor M/2 \rfloor\}$ is such that $M \equiv 0 \Mod{R}$, and where $\Lambda \coloneqq \{1,\dots,L\}$ with $ L\coloneqq M/R-1$, see \cref{fig:WH-frame}. Then, the following frame condition holds:
\begin{align}\label{eq:Parseval_frame_condition}
    \lVert f \ast \chi \rVert^2+ \sum_{\lambda \in \Lambda } \lVert f \ast g_\lambda \rVert^2 = \lVert f \rVert^2, \quad \text{ for all $f \in \mathbb C^M$.}
\end{align}
Indeed, application of Parseval's identity together with the convolution property of the DFT equivalently yields 
\begin{align*}
    \frac 1M \sum_{k=0}^{M-1}\lvert \widehat{f}_k \rvert^2 \lvert \widehat{\chi}_k \rvert^2+ \frac 1M \sum_{\lambda \in \Lambda } \sum_{k=0}^{M-1}\lvert \widehat{f}_k \rvert^2 \lvert \widehat{g}_{\lambda,k} \rvert^2 = \frac 1M \sum_{k=0}^{M-1}\lvert \widehat{f}_k \rvert^2, \quad \text{ for all $f \in \mathbb C^M$.}
\end{align*}
Upon inspection of \cref{eq:WH-modulation}, it thus follows that \cref{eq:Parseval_frame_condition} is equivalent to the Littlewood--Paley condition
\begin{align*}
    \sum_{\lambda \in \Lambda \cup \{0\}} \lvert \widehat{\chi}_{(k-\lambda R) \bmod{M}} \rvert^2 = 1, \quad k \in \{0, \dots, M-1\},
\end{align*}
which holds by construction.

\begin{figure}[t]
    \centering
    \resizebox{0.8\columnwidth}{!}{
    \colorlet{colour_chi}{red!70!black}
\definecolor{colour_g1}{RGB}{20,99,178}
\definecolor{colour_g2}{rgb}{0.5, 0.5, 0.0}
\begin{tikzpicture}[scale=1, every node/.style={scale=0.5}]
    \begin{axis}[
        unit vector ratio*=1 1 1,
        grid=none,
        xmin=-0.1,
        xmax=19.75,
        ymin=-0.1,
        ymax=1.75,
        clip=false,
        axis lines=middle,
        xtick={0,1,2,...,19},
        ytick={0,1},
        xticklabels={,,},
        yticklabels={0,1},
        xlabel style = {at={(axis description cs:1,0)},anchor=west,yshift=2pt},
        ylabel style = {at={(axis description cs:0,1)},anchor=south,color=colour_chi},
        ylabel={$\widehat{\chi}_k$},
        xlabel={$k$}
    ]
        \addplot[thin,ycomb,mark=*,mark options={scale=1, fill=white},mark size=1pt,color=colour_chi] coordinates {%
        (0,1) (1,1) (2,1) (3,0) (4,0) (5,0) (6,0) (7,0) (8,0) (9,0) (10,0) (11,0) (12, 0) (13,0) (14,0) (15,0) (16,0) (17,0) (18,1) (19,1)
      };
      \node[below,yshift=-2pt] at (2,0) {$m_0$};
      \node[below,yshift=-2pt] at (18,0) {$M-m_0$};
    \end{axis}
    \begin{axis}[
        at={(0,-3)},
        unit vector ratio*=1 1 1,
        grid=none,
        xmin=-0.1,
        xmax=19.75,
        ymin=-0.1,
        ymax=1.75,
        clip=false,
        axis lines=middle,
        xtick={0,1,2,...,19},
        ytick={0,1},
        xticklabels={,,},
        yticklabels={0,1},
        xlabel style = {at={(axis description cs:1,0)},anchor=west,yshift=2pt},
        ylabel style = {at={(axis description cs:0,1)},anchor=south,color=colour_g1},
        ylabel={$(\widehat{g_1})_k$},
        xlabel={$k$}
    ]
        \addplot[thin,ycomb,mark=*,mark options={scale=1, fill=white},mark size=1pt,color=colour_g1] coordinates {%
        (0,0) (1,0) (2,0) (3,1) (4,1) (5,1) (6,1) (7,1) (8,0) (9,0) (10,0) (11,0) (12, 0) (13,0) (14,0) (15,0) (16,0) (17,0) (18,0) (19,0)
      };
      \node[below,yshift=-2pt] at (3,0) {$R-m_0$};
      \node[below,yshift=-2pt] at (7,0) {$R+m_0$};
    \end{axis}
    
    \begin{axis}[
        at={(0,-6)},
        unit vector ratio*=1 1 1,
        grid=none,
        xmin=-0.1,
        xmax=19.75,
        ymin=-0.1,
        ymax=1.75,
        clip=false,
        axis lines=middle,
        xtick={0,1,2,...,19},
        ytick={0,1},
        xticklabels={,,},
        yticklabels={0,1},
        xlabel style = {at={(axis description cs:1,0)},anchor=west,yshift=2pt},
        ylabel style = {at={(axis description cs:0,1)},anchor=south,color=colour_g2},
        ylabel={$(\widehat{g_2})_k$},
        xlabel={$k$}
    ]
        \addplot[thin,ycomb,mark=*,mark options={scale=1, fill=white},mark size=1pt,color=colour_g2] coordinates {%
        (0,0) (1,0) (2,0) (3,0) (4,0) (5,0) (6,0) (7,0) (8,1) (9,1) (10,1) (11,1) (12, 1) (13,0) (14,0) (15,0) (16,0) (17,0) (18,0) (19,0)
      };
      \node[below,yshift=-2pt] at (8,0) {$2R-m_0$};
      \node[below,yshift=-2pt] at (12,0) {$2R+m_0$};
    \end{axis}
\end{tikzpicture}}
    \caption{Atoms of the Weyl--Heisenberg frame $\Psi_{\mathrm{WH}}$, i.e., $\{\chi\} \cup \{g_\lambda\}_{\lambda \in \Lambda}$.}
    \label{fig:WH-frame}
\end{figure}

\paragraph{Single-layer network}
We commence with the separation capacity of the single-layer network constructed from $\{(\Psi_{\mathrm{WH}},\lvert \cdot \rvert^2, \mathrm{Id})\}_{n \in \mathbb N}$. Namely, 
consider the single-layer network $\Phi \colon \mathbb C^M \to \mathbb C^{M(L+1)}$, defined according to
\begin{align*}
    f \mapsto \begin{pmatrix}
        f \ast \chi \\
        \lvert f \ast g_1 \rvert^2 \ast \chi \\
        \vdots \\
        \lvert f \ast g_L \rvert^2 \ast \chi
    \end{pmatrix}.
\end{align*}
To determine the separation capacity of $\Phi$, we apply our method from \cref{thm:sep-cap}. With $\mathbb C \simeq \mathbb R^2$, the network $\Phi$ can be viewed as a map 
\begin{align*}
\widetilde{\Phi}\colon \mathbb R^{2M} \to \mathbb R^{2M(L+1)}, \quad \begin{matrix}\begin{pmatrix}
       f'\\f''
    \end{pmatrix} \mapsto \begin{pmatrix}
        \Re(\Phi(f'+i f'')) \\
        \Im(\Phi(f'+i f''))
    \end{pmatrix}\end{matrix} 
\end{align*}
We claim that $\widetilde{\Phi}$ is real-analytic. 
Indeed, first note that by definition of the nonlinearity $\lvert \cdot \rvert^2$, one can write $\Phi(f) = \widetilde{\widetilde{\Phi}}(f,\overline{f})$, $f \in \mathbb C^M$, where $\widetilde{\widetilde{\Phi}}\colon\mathbb C^M \times \mathbb C^M \to \mathbb C^{M(L+1)}$ is polynomial in both arguments. This yields the commutative\footnote{That is, all map compositions with the same start and end lead to the same result.} diagram
\[
\begin{tikzcd}[row sep=1cm, column sep=3.5cm]
    \mathbb C^M \times \mathbb C^M \arrow[d,"\widetilde{\widetilde{\Phi}}"] &
    \mathbb R^{2M} \arrow[l,swap,
     "(f'+i f''\comma f'-i f'')\mapstoleft\begin{pmatrix}
         f'\\f''
     \end{pmatrix}"
    ] \arrow[d,
    "\widetilde{\Phi}
    "
    ] &
    \mathbb C^M \arrow[l,swap,"\begin{pmatrix}
        \Re(f)\\ \Im(f)
    \end{pmatrix}\mapstoleft f"]\arrow[d,"\Phi"] \\
    \mathbb C^{M(L+1)} \arrow[r,swap,"\varphi \mapsto \begin{pmatrix}\Re(\varphi) \\ \Im(\varphi)\end{pmatrix}"] & 
    \mathbb R^{2M(L+1)} \arrow[r,swap,"\begin{pmatrix}
        \varphi'\\\varphi''
    \end{pmatrix}\mapsto\varphi'+i\varphi''"] & 
    \mathbb C^{M(L+1)}.
\end{tikzcd}
\]
Furthermore, we note that $\Re(f) = \frac 12 (f+\overline{f})$ and $\Im(f) = \frac{1}{2i} (f-\overline{f})$ are both polynomial in $f$ and $\Bar{f}$. From the above commutative diagram it thus follows that $\widetilde{\Phi}$ is a composition of polynomials and hence, in particular, is real-analytic.
\begin{definition}
    We call $\Phi\colon \mathbb C^M \to \mathbb C^{M'}$ \emph{real-analytic} if 
    \begin{align*}
\widetilde{\Phi}\colon \mathbb R^{2M} \to \mathbb R^{2M'}, \quad \begin{matrix}\begin{pmatrix}
       f'\\f''
    \end{pmatrix} \mapsto \begin{pmatrix}
        \Re(\Phi(f'+i f'')) \\
        \Im(\Phi(f'+i f''))
    \end{pmatrix}\end{matrix} 
\end{align*}
    is real-analytic.
\end{definition}

Application of \cref{cor:real-analytic}, together with \cref{eq:real-val-phi}, now shows that the separation capacity of $\Phi$ is given by the following expression, which is, however, not obvious to evaluate:
\begin{align}\label{eq:sc-span-real}
    \sepcap{\Phi} = 2 \cdot \dimR{\spanR{\left\{\begin{pmatrix}\Re(\Phi(f)) \\ \Im(\Phi(f)) \end{pmatrix} \colon f \in \mathbb C^M\right\}}}. 
\end{align}
As we shall see, it is more convenient to study the linear span of the image of $\Phi$ over the field $\mathbb C$, as this approach allows us to work entirely within complex Euclidean spaces. This not only facilitates the use of the DFT, for example, to leverage the convolution property of the DFT, but also enables the application of tools from complex analysis.
Let us thus establish a relation between the latter quantity and the right-hand side (RHS) of \cref{eq:sc-span-real}. To this end, note that
\begin{align}\label{eq:spanC-spanR}
    \begin{split}
    &\dimC{\spanC{\left\{\Phi(f)\colon f \in \mathbb C^M\right\}}} \\&= \frac 12 \dimR{\spanR{\left\{\begin{pmatrix}\Re(\Phi(f)) \\ \Im(\Phi(f)) \end{pmatrix} \colon f \in \mathbb C^M\right\}} + \spanR{\left\{T\begin{pmatrix}\Re(\Phi(f)) \\ \Im(\Phi(f)) \end{pmatrix} \colon f \in \mathbb C^M\right\}}},
    \end{split}
\end{align}
where $T \coloneqq \begin{pmatrix}
    0 & - I_{M'} \\ I_{M'} & 0
\end{pmatrix} \in \mathbb R^{2M'\times 2M'}$. This is an immediate consequence of the following simple observation: for every $N\in \mathbb N$, $\{\alpha_k\}_{k=1}^N \subset \mathbb C$, and $\{f_k\}_{k=1}^N \subset \mathbb C^M$, we have
\begin{align*}
    \begin{pmatrix}
        \Re(\sum_{k=1}^N \alpha_k\Phi(f_k)) \\
        \Im(\sum_{k=1}^N \alpha_k\Phi(f_k))
    \end{pmatrix} = \sum_{k=1}^N \Re(\alpha_k)\begin{pmatrix}\Re(\Phi(f_k)) \\ \Im(\Phi(f_k)) \end{pmatrix} + \sum_{k=1}^N \Im(\alpha_k) \begin{pmatrix}-\Im(\Phi(f_k))\\ \Re(\Phi(f_k)) \end{pmatrix}.   
\end{align*}
As $T$ has full rank, applying the dimension formula for the sum of linear subspaces to the RHS of \cref{eq:spanC-spanR} and substituting this into \cref{eq:sc-span-real} yields 
\begin{align}
    \sepcap{\Phi} &= 2\cdot\dimR{\spanR{\left\{\begin{pmatrix}\Re(\Phi(f)) \\ \Im(\Phi(f)) \end{pmatrix} \colon f \in \mathbb C^M\right\}}} \nonumber\\
    &= 2\cdot\dimC{\spanC{\left\{\Phi(f)\colon f \in \mathbb C^M\right\}}}\nonumber\\&\quad+2\cdot \dimR{\spanR{\left\{\begin{pmatrix}\Re(\Phi(f)) \\ \Im(\Phi(f)) \end{pmatrix} \colon f \in \mathbb C^M\right\}} \cap \spanR{\left\{T\begin{pmatrix}\Re(\Phi(f)) \\ \Im(\Phi(f)) \end{pmatrix} \colon f \in \mathbb C^M\right\}}}. \label{eq:SC-spanC}
\end{align}
Note that the module sequence $\{(\Psi_{\mathrm{WH}}, \lvert \cdot \rvert^2, \mathrm{Id})\}_{n \in \mathbb N}$ induces scattering networks whose outputs are real-valued at every layer except the $0$th. This follows because $\chi$ is real-valued, a consequence of the conjugate symmetry of $\widehat{\chi}$ (see \cref{eq:WH-chi-hat}). In other words, if we write $\Phi(f) = (\Phi^0(f), \Phi^{\setminus0}(f))^{\mathsf{T}}$, where $\Phi^0(f) = f \ast \chi$, $f \in \mathbb C^M$, then $\Im(\Phi^{\setminus 0}(f)) = 0$, for all $f \in \mathbb C^M$. As the atoms of the frame $\Psi_{\mathrm{WH}}$ are spectrally disjoint, we have
\begin{align}\label{eq:SC-decomp}
    \sepcap{\Phi} = \sepcap{\Phi^0} + \sepcap{\Phi^{\setminus 0}}.
\end{align}
The map $\Phi^0 \colon \mathbb C^M \to \mathbb C^M$ is linear, and hence its image is a linear subspace of $\mathbb C^M$. Consequently, we have
\begin{align}\label{eq:SCPhi0}
    \sepcap{\Phi^0} &= 2\cdot\dimR{\spanR{\left\{\begin{pmatrix}\Re(\Phi^0(f)) \\ \Im(\Phi^0(f)) \end{pmatrix} \colon f \in \mathbb C^M\right\}}} 
    = 4\cdot\dimC{\spanC{\Phi^0(\mathbb C^M)}}.
\end{align}
We next observe that for real-valued maps, the second term in \cref{eq:SC-spanC} vanishes. In particular, as $\Phi^{\setminus 0}$ is real-valued, it follows that
\begin{align}\label{eq:SCPhiMinus0}
    \sepcap{\Phi^{\setminus 0}} &= 2\cdot\dimC{\spanC{\Phi^{\setminus0}(\mathbb C^M)}}.
\end{align}
Substituting \cref{eq:SCPhi0,eq:SCPhiMinus0} into \cref{eq:SC-decomp} leads to
\begin{align}\label{eq:SC-WHmod2}
    \sepcap{\Phi} = 4\cdot\dimC{\spanC{\Phi^0(\mathbb C^M)}} + 2\cdot\dimC{\spanC{\Phi^{\setminus 0}(\mathbb C^M)}}.
\end{align}

To analyze $\Phi^{\setminus 0}$, let us first consider one node in the first layer of the network, i.e., the map $f \mapsto \lvert f \ast g_\lambda \rvert^2$, for some $\lambda \in \Lambda$. We have the following result.
\begin{lemma}\label{L:dim_conv_mod2}
Consider the atoms $\{g_\lambda\}_{\lambda \in \Lambda}$ of the frame $\Psi_{\mathrm{WH}}$. For every $\lambda \in \Lambda$, it holds that
    \begin{align}\label{eq:dim_conv_mod2}
        \dimC{\spanC{\left\{ \lvert f \ast g_\lambda \lvert^{2} \colon f \in \mathbb C^M\right\}}} = 2R-1. 
    \end{align}
\end{lemma}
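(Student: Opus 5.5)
We work in the Fourier domain, where the span is easiest to see. Write $h \coloneqq f \ast g_\lambda$. By the convolution property of the DFT, $\widehat{h}_k = \widehat{f}_k\,\widehat{g}_{\lambda,k}$. By \cref{eq:WH-modulation}, $\widehat{g}_\lambda$ is the indicator of the cyclic interval $S_\lambda \coloneqq \{\lambda R - m_0, \ldots, \lambda R + m_0\} \bmod M$, which has $R = 2m_0+1$ elements. Hence $\{f \ast g_\lambda \colon f \in \mathbb C^M\}$ is exactly the space $V_\lambda$ of signals whose spectrum is supported in $S_\lambda$. The map $f \mapsto \widehat f|_{S_\lambda}$ is onto $\mathbb C^{R}$, so the coefficients $c_j \coloneqq \widehat{h}_{\lambda R + j}$, $j \in \{-m_0,\ldots,m_0\}$, range freely over $\mathbb C^R$.

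Next we compute $\lvert h\rvert^2 = h\,\overline{h}$ spectrally. Up to the normalization $1/M$ of the inverse DFT,
\begin{align*}
h(n) = \tfrac1M e^{2\pi i \lambda R n/M}\sum_{j=-m_0}^{m_0} c_j e^{2\pi i j n/M}.
\end{align*}
The modulation cancels in the product, giving
\begin{align*}
\lvert h(n)\rvert^2 = \tfrac{1}{M^2}\sum_{d=-2m_0}^{2m_0} a_d\, e^{2\pi i d n/M}, \qquad a_d \coloneqq \sum_{j-j'=d} c_j\overline{c_{j'}}.
\end{align*}
Since $R \le \lfloor M/2\rfloor$, we have $4m_0+1 = 2R-1 < M$, so the frequencies $d \in \{-2m_0,\ldots,2m_0\}$ are pairwise distinct mod $M$. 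Therefore the span in \cref{eq:dim_conv_mod2} is contained in the space of signals with spectrum in $D \coloneqq \{-2m_0,\ldots,2m_0\} \bmod M$, which has complex dimension $2R-1$. This gives the upper bound.

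For the lower bound, we must show that the vectors $(a_d(c))_{d=-2m_0}^{2m_0}$, $c \in \mathbb C^R$, span $\mathbb C^{2R-1}$ over $\mathbb C$. Equivalently, no nonzero $w \in \mathbb C^{2R-1}$ satisfies $\sum_d w_d a_d(c) = 0$ for all $c$. The left-hand side is the Hermitian form $\sum_{j,j'} w_{j-j'}\, c_j \overline{c_{j'}}$, whose coefficient matrix is Toeplitz with entries $w_{j-j'}$. A sesquilinear form vanishes on the diagonal ($c$ paired with itself) for all $c$ if and only if the matrix is zero, by polarization over $\mathbb C$. Since every $d \in D$ occurs as some $j-j'$ with $j,j' \in \{-m_0,\ldots,m_0\}$, the Toeplitz matrix vanishes only if $w=0$. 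Alternatively, one can exhibit the spanning vectors explicitly: $c = e_j$ gives $a = e_0$, and $c = e_j + e_{j'}$ and $c = e_j + i e_{j'}$ give $e_{j-j'}$ and $e_{j'-j}$ through linear combinations.

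The main obstacle, though mild, is this lower-bound step: the span of a non-linear image must fill the whole $(2R-1)$-dimensional space. Polarization handles it cleanly. Attention is also needed for the wrap-around condition $2R-1 \le M$, which guarantees that $D$ has exactly $2R-1$ distinct elements, and for the fact that the span is taken over $\mathbb C$ even though $\lvert h\rvert^2$ is real-valued.
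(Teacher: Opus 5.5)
Your proof is correct and follows essentially the same route as the paper. Both arguments work in the Fourier domain and get the upper bound $2R-1$ from the spectral support $\{k \bmod M \colon -R+1\le k\le R-1\}$ of $\lvert f\ast g_\lambda\rvert^2$; both get the lower bound by showing the annihilator of the image vanishes on these frequencies, via polarization applied to the Hermitian form whose matrix is the paper's $A^{\mathsf{H}}C_hA$ (with $h$ the annihilating vector, your $w$), which is exactly your Toeplitz matrix $(w_{j-j'})$. Your explicit check that $2R-1<M$, and your alternative explicit spanning vectors, make precise points the paper leaves implicit.
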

\begin{proof}
    See \cref{app:dim_conv_mod2}. 
\end{proof}
\begin{figure}[t]
    \centering
    \resizebox{0.8\columnwidth}{!}{
        \colorlet{colour_chi}{red!70!black}
\definecolor{colour_g1}{RGB}{20,99,178}
\definecolor{colour_g2}{rgb}{0.5, 0.5, 0.0}
\begin{tikzpicture}[scale=6, every node/.style={scale=0.5}]
    \begin{axis}[
        unit vector ratio*=1 1 1,
        grid=none,
        xmin=-0.1,
        xmax=19.75,
        ymin=-0.1,
        ymax=2.5,
        clip=false,
        axis lines=middle,
        xtick={0,1,2,...,19},
        ytick={0,1,2},
        xticklabels={,,},
        yticklabels={0,1},
        xlabel style = {at={(axis description cs:1,0)},anchor=west,yshift=2pt},
        ylabel style = {at={(axis description cs:0,1)},anchor=south,color=colour_g1},
        ylabel={$\lvert (\widehat{f\ast g_1})_k \rvert$},
        xlabel={$k$}
    ]
        \addplot[thin,ycomb,mark=*,mark options={scale=1, fill=white},mark size=1pt,color=colour_g1] coordinates {
        (0,0) (1,0) (2,0) (3,0) (4,2.4) (5,1.7) (6,1.8) (7,0.8) (8,1.4) (9,0) (10,0) (11,0) (12, 0) (13,0) (14,0) (15,0) (16,0) (17,0) (18,0) (19,0)
      };
      \draw [decorate, decoration={brace,amplitude=5pt,mirror,raise=2pt}] (4,0) -- (8,0) node [midway,below,yshift=-14pt]{$R$};
    \end{axis}
    \begin{axis}[
        unit vector ratio*=1 1 1,
        at={(0,-4)},
        grid=none,
        xmin=-0.1,
        xmax=19.75,
        ymin=-0.1,
        ymax=2.5,
        clip=false,
        axis lines=middle,
        xtick={0,1,2,...,19},
        ytick={0,1,2},
        xticklabels={,,},
        yticklabels={0,1},
        xlabel style = {at={(axis description cs:1,0)},anchor=west,yshift=2pt},
        ylabel style = {at={(axis description cs:0,1)},anchor=south,color=colour_g1},
        ylabel={$\lvert (\widehat{\lvert f\ast g_1 \rvert^2})_k \rvert$},
        xlabel={$k$}
    ]
        \addplot[thin,ycomb,mark=*,mark options={scale=1, fill=white},mark size=1pt,color=colour_g1] coordinates {
        (0,1.83) (1,0.65) (2,0.62) (3,0.35) (4,0.4) (5,0) (6,0) (7,0) (8,0) (9,0) (10,0) (11,0) (12, 0) (13,0) (14,0) (15,0) (16,0.4) (17,0.35) (18,0.62) (19,0.65)
      };
      \node[below,yshift=-2pt] at (4,0) {$R-1$};
      \node[below,yshift=-2pt] at (16,0) {$M-R+1$};
    \end{axis}
\end{tikzpicture}}
    \caption{Computations in a node in the first layer.}
    \label{fig:WH-mod2-node}
\end{figure}
To understand the underlying mechanisms behind this result, observe that convolving the input signal $f$ with the filter $g_\lambda$ corresponds to an orthogonal projection onto a linear subspace of dimension $\lvert \supp{\widehat{g_\lambda}}\rvert = R$. This is due to the fact that the DFT matrix $F_M$ diagonalizes the circulant matrix induced by $g_\lambda$, with eigenvalues given by the entries of the vector $\widehat{g_\lambda}$ taking values in $\{0,1\}$. Consequently, the set $\{f \ast g_\lambda \colon f \in \mathbb C^M\}$ spans an $R$-dimensional $\mathbb C$-vector space. Now, by applying the modulus squared nonlinearity pointwise to the filtered signal $f \ast g_\lambda$, a bandwidth doubling effect occurs, as previously mentioned and illustrated in \cref{fig:WH-mod2-node}. More precisely, the signal $\lvert f \ast g_\lambda \rvert^2$ has spectral support of at most $2R-1$. We conclude that pointwise application of the nonlinearity $\lvert \cdot \rvert^2$ to the filtered signals in $\{f \ast g_\lambda \colon f \in \mathbb C^M\}$ yields a set which lives in a higher-dimensional space, provided $R>1$. The dimension of this space remains unchanged if $R=1$. As we will explore further later, this observation is key to understanding the separation capacity of scattering networks.      

Based on \cref{L:dim_conv_mod2}, we can now determine the separation capacity of the single-layer network built from the module sequence $\{(\Psi_{\mathrm{WH}},\lvert \cdot \rvert^2,\mathrm{Id})\}_{n \in \mathbb N}$. Indeed, thanks to $\{\chi\}\cup \{g_\lambda\}_{\lambda \in \Lambda}$ being spectrally disjoint, all nodes in this network can be analyzed independently according to \cref{L:dim_conv_mod2}, and the results can then be combined. In doing so, we obtain the following expression for the separation capacity. 
\begin{proposition}[Separation capacity of single-layer network]\label{prop:singlelayernet}
Consider the module sequence $\{(\Psi_{\mathrm{WH}},\lvert \cdot \rvert^2,\mathrm{Id})\}_{n \in \mathbb N}$. For the single-layer network
\begin{align*}
    \Phi \colon \mathbb C^M \to \mathbb C^{M(L+1)}, \quad f \mapsto \begin{pmatrix}
        f \ast \chi \\
        \lvert f \ast g_1 \rvert^2 \ast \chi \\
        \vdots \\
        \lvert f \ast g_L \rvert^2 \ast \chi
    \end{pmatrix},  
\end{align*}
we have
\begin{align*}
    \sepcap{\Phi} = 2(M+R).
\end{align*}
\end{proposition}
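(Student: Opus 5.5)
The plan is to start from the decomposition \cref{eq:SC-WHmod2},
\begin{align*}
\sepcap{\Phi} = 4\cdot\dimC{\spanC{\Phi^0(\mathbb C^M)}} + 2\cdot\dimC{\spanC{\Phi^{\setminus 0}(\mathbb C^M)}},
\end{align*}
and compute the two complex dimensions separately, working throughout in the Fourier domain.

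\textbf{The zeroth layer.} The map $\Phi^0(f)=f\ast\chi$ is a circulant projection. Its eigenvalues are the entries of $\widehat{\chi}\in\{0,1\}^M$, so its image has dimension $\lvert\supp{\widehat{\chi}}\rvert = 2m_0+1 = R$. Hence the first term contributes $4R$.

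\textbf{The remaining layers.} By \cref{L:dim_conv_mod2}, the span of $\{\lvert f\ast g_\lambda\rvert^2\}$ has dimension $2R-1$. Since $\supp{\widehat{g_\lambda}}$ is an interval of length $R$, the spectrum of $\lvert f\ast g_\lambda\rvert^2$ lies in the cyclic interval $\{-(R-1),\dots,R-1\}$. Its size is $2R-1$, so by the lemma this span is exactly the space of all signals supported on that frequency set. Convolving with $\chi$ keeps only the frequencies in $\{-m_0,\dots,m_0\}$, which is contained in $\{-(R-1),\dots,R-1\}$. Therefore the span of $\{\lvert f\ast g_\lambda\rvert^2\ast\chi\}$ equals the space of signals with spectrum in $\supp{\widehat\chi}$, of dimension $R$.

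It remains to handle the joint span over the $L$ nodes. Here one cannot simply add dimensions: all $L$ outputs live in the same frequency band, but in different coordinate blocks. I would show that the span of the full vector $\Phi^{\setminus0}(f)$ is the product of the individual node spans. The idea is that $f\ast g_\lambda$ depends only on $\widehat f$ restricted to $\supp{\widehat{g_\lambda}}$, and these supports are pairwise disjoint. So $f$ can be chosen with $\widehat f$ nonzero only on one band $\lambda$. Then $\Phi^{\setminus0}(f)$ vanishes in every block except block $\lambda$, where it ranges over a set spanning that node's space. Spans of such vectors over all $\lambda$ give the full product, so the dimension is $LR = M-R$, using $L=M/R-1$.

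\textbf{Assembly and the main obstacle.} Combining the two parts gives
\begin{align*}
\sepcap{\Phi} = 4R + 2(M-R) = 2(M+R).
\end{align*}
The main point to check is that \cref{eq:SC-WHmod2} itself is justified. This requires the cross term in \cref{eq:SC-spanC} between $\Phi^0$ and $\Phi^{\setminus0}$ to vanish, and the separation capacities to add as in \cref{eq:SC-decomp}. I would confirm this using the same disjoint-support choice of inputs: $\Phi^0$ depends only on the band of $\chi$, and each $\Phi^{\setminus0}$ block only on its own band. This makes the real span a direct sum of the real spans of the pieces. The piece $\Phi^0$ contributes real dimension $2R$, and the real-valued piece contributes $LR$, because its complex and real span dimensions coincide.
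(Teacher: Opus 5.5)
Your proposal is correct and follows essentially the same route as the paper's proof. You split $\sepcap{\Phi}$ via \cref{eq:SC-WHmod2}, get $R$ for the zeroth layer, and get $R$ per node by combining \cref{L:dim_conv_mod2} with $\supp{\widehat{\chi}} \subseteq \{-(R-1),\dots,R-1\} \pmod M$, which gives $4R+2LR=2(M+R)$. Your band-limited-input argument only makes explicit the disjoint-support justification, for both the joint span over the $L$ nodes and \cref{eq:SC-decomp}, which the paper states in a single sentence.
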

\begin{proof}
    See \cref{app:singlelayernet}. 
\end{proof}
Recalling \cref{eq:SC-WHmod2}, we observe that the image of the single-layer network $\Phi$ spans only an $M$-dimensional $\mathbb C$-vector space, while the dimension of the codomain of $\Phi$ is $M(L+1)$. Intuitively, this means that $\Phi$ only fills out a fraction of its codomain. In fact, compared to the input space $\mathbb C^M$, we have no gain in the sense of the image of $\Phi$ spanning a higher-dimensional space.
As a high separation capacity is associated with filling out the codomain well (see \cref{eq:SC-WHmod2}), we conclude that this network achieves a rather low separation capacity and is suboptimal among transformations of the form $\mathbb C^M \to \mathbb C^{M(L+1)}$. 
Thus, the single-layer feature extractor $\Phi$ built from the module sequence $\{(\Psi_{\mathrm{WH}},\lvert \cdot \rvert^2,\mathrm{Id})\}_{n \in \mathbb N}$ is not a favorable choice in the sense of achieving high separation capacity.

\paragraph{Multi-layer network}
We now proceed to analyze multi-layer networks $\Phi$ of depth $n_\mathrm{d}\geq 2$ that are built from the module sequence $\{(\Psi_{\mathrm{WH}},\lvert \cdot \rvert^2,\mathrm{Id})\}_{n \in \mathbb N}$. As in the single-layer case, the assumption of \cref{cor:real-analytic}, i.e., real analyticity of $\Phi$, is satisfied thanks to the properties of the nonlinearity $\lvert \cdot \rvert^2$. Thus, the derivation of \cref{eq:SC-WHmod2} remains valid, and consequently, the separation capacity of such multi-layer networks $\Phi$ is given by 
\begin{align}\label{eq:sepcap-multi-WHmod2}
    \sepcap{\Phi} = 4\cdot\dimC{\spanC{\Phi^0(\mathbb C^M)}} + 2\cdot\dimC{\spanC{\Phi^{\setminus 0}(\mathbb C^M)}},
\end{align}
where we again use the decomposition $\Phi(f) = (\Phi^0(f),\Phi^{\setminus0}(f))^{\mathsf{T}}$ with $\Phi^0(f)=f \ast \chi$, $f \in \mathbb C^M$.
Evaluating this expression for multi-layer networks is more challenging. Specifically, in contrast to the derivation of the result in \cref{prop:singlelayernet}, nodes in higher-order layers whose paths coincide in the first entry cannot be studied independently, and the results thereof cannot be simply combined. The next lemma, which arises from the symmetry of the atoms of our Weyl--Heisenberg frame $\Psi_\mathrm{WH}$, explains why this approach is not applicable to multi-layer networks. 
\begin{lemma}\label{L:symmetry}
    Consider the frame $\Psi_{\mathrm{WH}}$, and let $f \in \mathbb C^M$ be real-valued. Then, for all $\lambda, \lambda' \in \Lambda$ such that\,\footnote{Recall that for $A \subseteq \mathbb Z /M\mathbb Z$, the reflection of $A$ is defined to be $A^r \coloneqq \{-a\colon a \in A\}$.} $\mathrm{supp}(\widehat{g_{\lambda}}) = (\mathrm{supp}(\widehat{g_{\lambda'}}))^r$, we have $\lvert f \ast g_\lambda \rvert ^2 = \lvert f \ast g_{\lambda'} \rvert^2$.
\end{lemma}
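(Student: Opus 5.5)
The plan is to show that $f \ast g_{\lambda'}$ is the complex conjugate of $f \ast g_\lambda$ whenever $f$ is real-valued. The claim then follows immediately, since $\lvert z\rvert^2=\lvert \overline{z}\rvert^2$ pointwise.

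First, I would record the structure of the atoms. By \cref{eq:WH-chi-hat}, $\widehat{\chi}$ is the indicator of the symmetric interval $\{-m_0,\dots,m_0\}$ in $\mathbb Z/M\mathbb Z$. By \cref{eq:WH-modulation}, $\widehat{g_\lambda}$ is the indicator of $\{\lambda R-m_0,\dots,\lambda R+m_0\}$. Hence each $\widehat{g_\lambda}$ is a $\{0,1\}$-valued indicator of its support. The hypothesis $\supp{\widehat{g_\lambda}}=(\supp{\widehat{g_{\lambda'}}})^r$ therefore translates into the identity
\begin{align*}
\widehat{g_{\lambda'}}_k=\widehat{g_\lambda}_{-k},\qquad k\in\mathbb Z/M\mathbb Z.
\end{align*}
Because the entries are real, this also equals $\overline{\widehat{g_\lambda}_{-k}}$.

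Second, I would use two standard DFT facts on $\mathbb Z/M\mathbb Z$. The first is that a function $h$ is real-valued if and only if $\widehat{h}_{-k}=\overline{\widehat{h}_k}$. The second is that the DFT of $\overline{h}$ is $k\mapsto\overline{\widehat{h}_{-k}}$. For real $f$, the convolution property then gives
\begin{align*}
\widehat{(f\ast g_{\lambda'})}_k=\widehat f_k\,\widehat{g_\lambda}_{-k}=\overline{\widehat f_{-k}}\;\overline{\widehat{g_\lambda}_{-k}}=\overline{\widehat{(f\ast g_\lambda)}_{-k}}=\widehat{\bigl(\overline{f\ast g_\lambda}\bigr)}_k .
\end{align*}
By injectivity of the DFT, $f\ast g_{\lambda'}=\overline{f\ast g_\lambda}$, and taking squared moduli pointwise finishes the argument.

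I do not expect a real obstacle. The only care point is bookkeeping with the DFT sign convention and the reflection $A^r$ in $\mathbb Z/M\mathbb Z$. One might worry that $\widehat{g_\lambda}$ is merely supported on the reflected set without actually equalling the reflected function. This does not happen here, because both atoms are exact indicators of their supports.
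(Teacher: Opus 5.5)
Your argument is correct, and it takes a different, more direct route than the paper's proof.

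The paper works with the spectrum of the squared modulus. It writes $F_M(\lvert f\ast g_\lambda\rvert^2)_k$ as the frequency-domain autocorrelation $\frac1M\sum_\ell \widehat f_\ell\,\widehat g_{\lambda,\ell}\,\overline{\widehat f_{\ell-k}}\,\widehat g_{\lambda,\ell-k}$. It then substitutes $\widehat g_{\lambda,\ell}=\widehat g_{\lambda',-\ell}$, reindexes $\ell\mapsto-\ell$, and applies $\widehat f_{-\ell}=\overline{\widehat f_\ell}$. Finally, it uses that $\lvert f\ast g_{\lambda'}\rvert^2$ is real-valued to identify $\overline{F_M(\lvert f\ast g_{\lambda'}\rvert^2)_{-k}}$ with $F_M(\lvert f\ast g_{\lambda'}\rvert^2)_k$.

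You instead prove the stronger identity $f\ast g_{\lambda'}=\overline{f\ast g_\lambda}$ before any nonlinearity is applied. Equivalently, $g_{\lambda'}=\overline{g_\lambda}$, and conjugation commutes with convolution against a real $f$.

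Both proofs use the same ingredients. The indicator structure makes $\widehat{g_\lambda}$ real and exactly the reflection of $\widehat{g_{\lambda'}}$, and $f$ real gives Hermitian symmetry of $\widehat f$. Your version avoids the index bookkeeping in the autocorrelation sum. It also shows the conclusion holds for any pointwise nonlinearity with $\rho(\overline z)=\rho(z)$, such as $\lvert\cdot\rvert$, not only $\lvert\cdot\rvert^2$. The paper's computation is tailored to $\lvert\cdot\rvert^2$, which matches the spectral-support calculations used in the rest of its appendix.
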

\begin{proof}
    See \cref{app:symmetry-lemma}. 
\end{proof}
Upon noting that the inputs to nodes in higher-order layers are real-valued because of the nonlinearity $\lvert \cdot \rvert^2$, one can deduce from \cref{L:symmetry} that there exist pairs of nodes in higher-order layers whose outputs coincide. Consequently, simply adding up the dimensions of the vector spaces in which the outputs of the nodes live, as done in the single-layer case (\cref{prop:singlelayernet}), is not possible.
Nevertheless, the choice of our module sequence $\{(\Psi_{\mathrm{WH}},\lvert \cdot \rvert^2,\mathrm{Id})\}_{n \in \mathbb N}$ induces the following properties of the feature maps, which simplify the analysis of the multi-layer case. 
\begin{lemma}\label{L:multilayer_net}
     For the module sequence $\{(\Psi_{\mathrm{WH}},\lvert \cdot \rvert^2,\mathrm{Id})\}_{n \in \mathbb N}$, we have 
     \begin{enumerate}[label=(\roman*)]
         \item\label{it:L-multilayer-i} $U[(\lambda_1,\lambda_2)] = 0$, for every $(\lambda_1,\lambda_2) \in \Lambda \times (\Lambda\setminus\{1,L\})$,
         \item\label{it:L-multilayer-ii} $U[(\lambda_1,1)] = U[(\lambda_1,L)]$, for every $\lambda_1 \in \Lambda$, and  
         \item\label{it:L-multilayer-iii} $U[q] = 0 $, for every $q \in \Lambda^n$ with $n \geq 3$.
     \end{enumerate}
\end{lemma}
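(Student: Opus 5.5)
The plan is to argue entirely in the Fourier domain and track spectral supports, using two facts. First, $\mathrm{supp}(\widehat{h \ast g}) \subseteq \mathrm{supp}(\widehat{h}) \cap \mathrm{supp}(\widehat{g})$ by the convolution property of the DFT. Second, for $u \in \mathbb C^M$ we have $\widehat{\lvert u\rvert^2} = \frac1M\, \widehat{u} \circledast \widehat{\overline{u}}$, and $\widehat{\overline{u}}_k = \overline{\widehat{u}_{-k}}$. Hence $\mathrm{supp}(\widehat{\lvert u \rvert^2}) \subseteq \mathrm{supp}(\widehat{u}) - \mathrm{supp}(\widehat{u})$, the difference set taken mod $M$. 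This is the bandwidth doubling already used in \cref{L:dim_conv_mod2}. In particular, if $\widehat{u}$ is supported on $s$ consecutive frequencies, then $\widehat{\lvert u\rvert^2}$ is supported in $\{-(s-1),\dots,s-1\} \bmod M$. Since the pooling operator is $\mathrm{Id}$ and $\lvert 0\rvert^2=0$, any feature map that vanishes stays zero in all deeper layers. This last observation also propagates (i) and (iii) to longer paths.

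For \ref{it:L-multilayer-i}, fix $f$ and $\lambda_1$, and set $h \coloneqq U[\lambda_1]f = \lvert f \ast g_{\lambda_1}\rvert^2$. By \cref{eq:WH-modulation}, $\widehat{g_{\lambda_1}}$ is supported on the $R$ consecutive indices $\lambda_1R-m_0,\dots,\lambda_1R+m_0$. Therefore $\mathrm{supp}(\widehat h) \subseteq \{-2m_0,\dots,2m_0\} \bmod M$, using $R-1=2m_0$. For $2 \le \lambda_2 \le L-1$, the support of $\widehat{g_{\lambda_2}}$ lies in $[2R-m_0,\,(L-1)R+m_0] = [3m_0+2,\, M-3m_0-2]$, using $M=(L+1)R$. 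This interval is disjoint from $\{-2m_0,\dots,2m_0\} \bmod M$. Hence $\widehat{h\ast g_{\lambda_2}}=0$, which gives $U[(\lambda_1,\lambda_2)]f=0$.

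For \ref{it:L-multilayer-ii}, note that $h$ is real-valued. Moreover, $\mathrm{supp}(\widehat{g_L}) = \{M-R-m_0,\dots,M-R+m_0\}$, which is exactly $(\mathrm{supp}(\widehat{g_1}))^r$. Then \cref{L:symmetry}, applied to the real input $h$, gives $\lvert h\ast g_1\rvert^2 = \lvert h \ast g_L\rvert^2$. This is the claim $U[(\lambda_1,1)]=U[(\lambda_1,L)]$.

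For \ref{it:L-multilayer-iii}, it suffices to show $U[(\lambda_1,\lambda_2,\lambda_3)]=0$ for all $\lambda_3$; longer paths then vanish by the remark on zero feature maps. By (i) we may assume $\lambda_2 \in \{1,L\}$, and by (ii) we may take $\lambda_2=1$.

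Now intersect supports. $\mathrm{supp}(\widehat{h\ast g_1})$ is contained in $\{-2m_0,\dots,2m_0\}\cap\{m_0+1,\dots,3m_0+1\} = \{m_0+1,\dots,2m_0\}$, a block of $m_0$ consecutive frequencies. By the difference-set bound, $U[(\lambda_1,1)]f$ therefore has spectrum in $\{-(m_0-1),\dots,m_0-1\}$. If $m_0=0$, this set is empty and everything is already $0$.

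Every $\widehat{g_{\lambda_3}}$ with $\lambda_3 \in \Lambda$ is supported in $[R-m_0,\, LR+m_0] = [m_0+1,\, M-m_0-1]$. This set misses $\{-(m_0-1),\dots,m_0-1\}$ mod $M$, so the third-layer filtered signal, and hence $U[(\lambda_1,1,\lambda_3)]f$, vanishes.

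The main obstacle is the modular interval bookkeeping, specifically verifying that the relevant frequency windows are disjoint modulo $M$, including wrap-around. I would handle this by writing every window explicitly in terms of $m_0$ and $M=(L+1)R$ as above. I would also separately note the degenerate cases. When $L=1$, $\Lambda\setminus\{1,L\}$ is empty and (ii) is trivial. When $m_0=0$ (i.e. $R=1$), all intersections are empty and every statement holds with both sides equal to $0$.
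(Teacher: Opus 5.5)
Your route is the paper's route. You track Fourier supports, get (i) from disjointness, get (ii) from \cref{L:symmetry} using $(\mathrm{supp}(\widehat{g_L}))^r=\mathrm{supp}(\widehat{g_1})$, and get (iii) by confining the spectrum of $U[(\lambda_1,1)]f$ to $\{-(m_0-1),\dots,m_0-1\}\subseteq\mathrm{supp}(\widehat\chi)$, which misses every $\widehat{g_\lambda}$. Your interval bookkeeping is more explicit than the paper's. However, one step fails, and it is exactly the wrap-around you flagged. The identity $\{-2m_0,\dots,2m_0\}\cap\{m_0+1,\dots,3m_0+1\}=\{m_0+1,\dots,2m_0\}$ modulo $M$ also needs the block $\{M-2m_0,\dots,M-1\}$ to miss $\{m_0+1,\dots,3m_0+1\}$, i.e.\ $M>5m_0+1$. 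Since $M=(L+1)(2m_0+1)$, this holds if and only if $L\geq 2$ or $m_0=0$. So your claim that $L=1$ is a harmless degenerate case is wrong. For $L=1$ you only disposed of (i) and (ii), and your intersection step for (iii) breaks when $m_0\geq 1$.

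In fact (iii) is false in that case. Take $M=6$, $R=3$ (so $m_0=1$, $L=1$), giving $\mathrm{supp}(\widehat{g_1})=\{2,3,4\}$ and $\mathrm{supp}(\widehat\chi)=\{0,1,5\}$. The signal $v\coloneqq U[1]f\ast g_1$ has $\widehat v_2=c$, $\widehat v_3=0$ and $\widehat v_4=\overline{c}$, where $c\coloneqq\frac16\widehat f_4\overline{\widehat f_2}$. A direct computation then gives $(F_M(U[(1,1)]f))_2=\frac16\widehat v_4\overline{\widehat v_2}=\frac16\overline{c}^{\,2}$. This is nonzero for generic $f$ and sits in $\mathrm{supp}(\widehat{g_1})$, so $U[(1,1,1)]f\neq 0$, and the same pattern repeats in every deeper layer. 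The paper's proof has the identical blind spot. Its claim that the spectrum of $U[(\lambda_1,1)]f$ lies in $\mathrm{supp}(\widehat\chi)$ fails here, since that spectrum is $\{0,2,4\}$. The correct repair is to add the hypothesis $L\geq 2$ (equivalently $M\geq 3R$) or $R=1$; under that hypothesis your argument is complete.
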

\begin{proof}
    See \cref{app:multilayer_net-lemma}. 
\end{proof}
Thus, every multi-layer network $\Phi$ built from $\{(\Psi_{\mathrm{WH}},\lvert \cdot \rvert^2,\mathrm{Id})\}_{n \in \mathbb N}$ reduces to a two-layer network, depicted in \cref{fig:WH-mod2-tree}, in the sense that outputs from higher-order layers are trivial. Moreover, for the separation capacity computation of $\Phi$, \cref{L:multilayer_net} shows that it suffices to study the nodes along the paths $\{(\lambda,1)\}_{\lambda \in \Lambda}$. \cref{fig:WH-mod2-layer2_1,fig:WH-mod2-layer2_L}, which illustrate the computation of $U[(1,1)]$ and $U[(1,L)]$, respectively, provide intuition and informal justification for both of these conclusions. 
As the atoms $\{\chi\}\cup \{g_\lambda\}_{\lambda \in \Lambda}$ are spectrally disjoint, the only groups of nodes that cannot be analyzed independently are the ones in the first and second layer along each path $(\lambda,1)$, $\lambda \in \Lambda$. The next lemma presents the resulting expression.
\begin{lemma}\label{L:dim_path_1l}
Consider the module sequence $\{(\Psi_{\mathrm{WH}},\lvert \cdot \rvert^2,\mathrm{Id})\}_{n \in \mathbb N}$. For $\lambda \in \Lambda$, it holds that
    \begin{align*}
        \dimC{\spanC{\left\{\begin{pmatrix}(U[\lambda]f) \ast \chi \\ (U[\lambda,1]f)\ast \chi\end{pmatrix} \colon f \in \mathbb C^M\right\}}} = R + (R-2)_+.
    \end{align*}
\end{lemma}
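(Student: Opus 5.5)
The plan is to work entirely in the Fourier domain, where every quantity becomes an explicit polynomial in the band-limited spectrum of $f$ and its conjugate. The dimension of the $\mathbb C$-span of the image of a map $\mathbb C^M\to\mathbb C^{M'}$ equals the number of linearly independent coordinate functions, viewed as functions on $\mathbb C^M$. Polynomials in $(a,\bar a)$ are linearly independent as functions on $\mathbb C^R$ if and only if they are linearly independent as formal polynomials, because $a$ and $\bar a$ can be treated as independent variables via $\mathbb C^R\simeq\mathbb R^{2R}$. So the lemma reduces to counting nonzero DFT coordinates and checking that they are formally linearly independent.

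\textbf{Step 1: reduce to band coordinates.} Write $m_0=(R-1)/2$. Set $a_j\coloneqq \widehat f_{\lambda R+j}$ for $j\in\{-m_0,\dots,m_0\}$; these are $R$ free complex coordinates, since $f\mapsto f\ast g_\lambda$ is the orthogonal projection onto this band. Put $h\coloneqq U[\lambda]f=\lvert f\ast g_\lambda\rvert^2$. By the convolution property of the DFT, up to a fixed constant,
\[
\widehat h_k=\sum_{j}a_{j+k}\overline{a_j},\qquad \lvert k\rvert\le R-1,
\]
and $\widehat h_k=0$ otherwise. Because $2R\le M$, there is no wrap-around modulo $M$, and the same computation covers the bands that wrap around $0$, such as $\lambda=L$.

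\textbf{Step 2: identify the surviving coordinates.}
\begin{itemize}
\item The first block $h\ast\chi$ retains exactly $\widehat h_k$ for $\lvert k\rvert\le m_0$. That is $R$ coordinates, each a nonzero polynomial of bidegree $(1,1)$.
\item The filter $g_1$ retains $u_k\coloneqq\widehat h_{k}$ for $k\in\{R-m_0,\dots,R-1\}=\{m_0+1,\dots,2m_0\}$, which is $m_0$ frequencies.
\item Applying $\lvert\cdot\rvert^2$ gives $\widehat{(U[\lambda,1]f)}_d=\sum_k u_{k+d}\overline{u_k}$ for $\lvert d\rvert\le m_0-1$. This set has $2m_0-1=R-2$ elements, all lying inside the passband $\lvert d\rvert\le m_0$ of $\chi$, so all of them survive the final convolution with $\chi$.
\item If $R=1$, the $g_1$-band misses $\mathrm{supp}(\widehat h)$ entirely, and the second block vanishes. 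This accounts for the term $(R-2)_+$.
\end{itemize}
Consequently the dimension is at most $R+(R-2)_+$.

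\textbf{Step 3: show the coordinates are linearly independent.}
\begin{itemize}
\item \emph{Across blocks.} The first-block coordinates are homogeneous of degree $2$ in $(a,\bar a)$, and the second-block coordinates are homogeneous of degree $4$. Homogeneous polynomials of different degrees are linearly independent.
\item \emph{Within a block, via a grading.} Assign the weight $i_1+\dots-j_1-\dots$ to a monomial $a_{i_1}\cdots\overline{a_{j_1}}\cdots$. Every monomial in $\widehat h_k$ has weight $k$, and every monomial in the $d$-th second-layer coordinate has weight $d$. Coordinates with different indices therefore involve disjoint sets of monomials, so they are independent provided each one is nonzero.
\item \emph{First-block nonvanishing.} This is clear, since for fixed $k$ the monomials $a_{j+k}\overline{a_j}$ are pairwise distinct and therefore cannot cancel.
\end{itemize}

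\textbf{Step 4 (main obstacle): second-layer coordinates are nonzero.} The task is to show that $\sum_k u_{k+d}\overline{u_k}$ does not vanish identically for each $\lvert d\rvert\le m_0-1$. By conjugate symmetry it suffices to take $d\ge0$.
\begin{itemize}
\item \emph{First attempt: an extremal monomial.} In $u_{2m_0}=\widehat h_{2m_0}$, the only possible monomials are $a_{m_0}\overline{a_{-m_0}}$, $a_{m_0-1}\overline{a_{-m_0-1}}$, and so on. Pair $u_{k+d}$ with $\overline{u_k}$ for $k=2m_0-d$, and look for a monomial such as $a_{m_0}\overline{a_{-m_0}}\,\overline{a_{m_0-d'}}a_{\cdots}$ with lexicographically largest index pattern. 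Check that it arises from exactly one pair $(k,\text{term})$, so it cannot cancel.
\item \emph{Fallback: evaluation.} Evaluate the polynomial at a well-chosen point, for instance $a$ supported on few indices, and verify the value is nonzero.
\end{itemize}
Combining Steps 2--4 gives exactly $R+(R-2)_+$ linearly independent coordinate functions, which is the claimed dimension.
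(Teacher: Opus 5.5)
Your route is the paper's argument in dual form. The paper proceeds in three steps:
\begin{itemize}
\item it bounds the dimension by counting the surviving DFT coordinates;
\item it shows the orthogonal complement is small by writing the inner product with a vector of that complement as a polynomial in $(z,\overline z)$;
\item it separates the two blocks by bidegree $(1,1)$ versus $(2,2)$, and separates the second-block coordinates by the weight $(i_1+i_4)-(i_2+i_3)=k$.
\end{itemize}
Your degree and weight gradings on the coordinate functions are the same mechanism.

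The gap is Step~4. You call nonvanishing of the second-layer coordinates the main obstacle, but leave it as two sketches that are never carried out. The first sketch is also inaccurate: $u_{2m_0}$ is the single monomial $a_{m_0}\overline{a_{-m_0}}$, and there is no $a_{-m_0-1}$. In fact no extremal-monomial or evaluation argument is needed. Since $u_k=\sum_j a_{j+k}\overline{a_j}$ has all coefficients $+1$, the expansion $\sum_k u_{k+d}\overline{u_k}=\sum_{k,j,j'}a_{j+k+d}\,\overline{a_j}\,\overline{a_{j'+k}}\,a_{j'}$ consists of monomials with coefficient $+1$, so nothing can cancel. The polynomial is therefore nonzero as soon as one admissible triple exists, e.g.\ $k=m_0+1$, $j=j'=-m_0$ for $0\le d\le m_0-1$. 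Equivalently, its value at $a=(1,\dots,1)$ is a positive integer. The paper uses exactly this, implicitly, through the explicit triple sum \eqref{eq:hat_Ul1f_1}, all of whose terms carry the coefficient $1/M^3$.

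A further point, which you share with the paper's own support computation for $F_M(U[\lambda,1]f)$. Your Step~2 assumes that the $g_1$-band $\{R-m_0,\dots,R+m_0\}$ meets $\mathrm{supp}(\widehat h)$ only in $\{m_0+1,\dots,2m_0\}$. This needs $R+m_0<M-R+1$, i.e.\ $M\ge 3R$. Your remark that $2R\le M$ prevents wrap-around covers only the first layer.

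The case $M=2R$ is admissible, since only $R\le\lfloor M/2\rfloor$ is required. In that case the band also captures $\widehat h_k$ for $k=R+1,\dots,R+m_0$. For $R\ge5$ this creates additional second-block coordinates inside the passband of $\chi$. For example, take $M=10$, $R=5$, $\lambda=1$: the DFT coefficient of $(U[1,1]f)\ast\chi$ at frequency $2$ is $M^{-3}\widehat f_3^{\,2}\,\overline{\widehat f_7}^{\,2}\neq 0$, and the dimension is $2R=10$ rather than $8$. So the statement, and your proof of it, should be restricted to $M\ge 3R$ (or $R\le 3$).
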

\begin{proof}
See \cref{app:dim_path_1l-lemma}.
\end{proof}

\begin{figure}[t]
    \centering
    \resizebox{\columnwidth}{!}{
\definecolor{colour1}{rgb}{0.0, 0.0, 0.0}
\definecolor{coloursup}{RGB}{190, 190, 190}
\definecolor{coloursup2}{RGB}{211, 211, 211}
\definecolor{blue2}{RGB}{20,99,178}
\definecolor{green2}{RGB}{0,95,1}
\colorlet{red2}{red!70!black}
\begin{tikzpicture}[
	level distance = 15mm,
	level 1/.style={sibling distance=30mm},
	level 2/.style={sibling distance=15mm},
        level 3/.style={sibling distance=5mm},
	grow'=up]
	\node[color=colour1] (a0) {$U[e]f=f$}
	child[sibling distance=50mm,level distance=30mm,draw=black,thin]{
		node[opacity=1] (b1) {$U[1]f$}
		child[sibling distance=20mm,draw=black,thin] {
                node[opacity=1] (c1) {$U[1,1]f$}
            } 
            child[sibling distance=20mm,draw=coloursup,thin] {
                node[opacity=1,color=coloursup] (c2) {$U[1,L]f$}
            }
	}
        child[sibling distance=70mm,level distance=30mm,draw=black,thin]{
		node[opacity=1] (b2) {$U[2]f$}
		child[sibling distance=20mm,draw=black,thin] {
                node[opacity=1] (c21) {$U[2,1]f$}
            } 
            child[sibling distance=20mm,draw=coloursup,thin] {
                node[opacity=1,color=coloursup] (c22) {$U[2,L]f$}
            }
	}
        child[sibling distance=90mm,level distance=30mm]{
		node[opacity=1] (b4) {$U[L]f$}
		child[sibling distance=20mm]{
                node[opacity=1] (c3) {$U[L,1]f$}
            }
            child[sibling distance=20mm,draw=coloursup,thin] {
                node[opacity=1,color=coloursup] (c4) {$U[L,L]f$}
            }
	};
	\draw[->,dashed] (-0.2,-0.2) -- ++(225:0.875cm) node[below] {$f\ast \chi$};
	\draw[->,dashed] (b1) -- ++(225:2cm) node[below] {$\left(U[1]f\right) \ast \chi$};
        \draw[->,dashed] (c1) -- ++(225:2cm) node[below] {$\left(U[1,1]f\right) \ast \chi$};
        \draw[->,dashed,color=coloursup] (c2) -- ++(315:1cm) node[below] {$\cdot \ast \chi$};
	\draw[->,dashed] (b2) -- ++(225:1cm) node[below] {$\cdot \ast \chi$};
        \draw[->,dashed] (c21) -- ++(225:1cm) node[below] {$\cdot \ast \chi$};
        \draw[->,dashed,color=coloursup] (c22) -- ++(315:1cm) node[below] {$\cdot \ast \chi$};
        \draw[->,dashed] (b4) -- ++(315:2cm) node[below] {$\left(U[L]f\right)\ast \chi$};
        \draw[->,dashed] (c3) -- ++(225:1cm) node[below] {$\cdot\ast \chi$};
        \draw[->,dashed,color=coloursup] (c4) -- ++(315:2cm) node[below] {$\left(U[L,L]f\right)\ast \chi$};
        \node[] at (4,3) {\Huge\textbf{\dots}};
\end{tikzpicture}}
    \caption{Tree structure of every multi-layer network built from $\{(\Psi_{\mathrm{WH}},\lvert \cdot \rvert^2,\mathrm{Id})\}_{n \in \mathbb N}$, comprising only the nontrivial nodes. The gray part is superfluous as $U[\lambda,1]=U[\lambda,L]$, for every $\lambda \in \Lambda$.}
    \label{fig:WH-mod2-tree}
\end{figure}
\begin{figure}[t]
    \centering
    \resizebox{0.8\columnwidth}{!}{
    \colorlet{colour_chi}{red!70!black}
\definecolor{colour_g1}{RGB}{20,99,178}
\definecolor{colour_g2}{rgb}{0.5, 0.5, 0.0}
\begin{tikzpicture}[scale=1, every node/.style={scale=0.5}]
    \begin{axis}[
        unit vector ratio*=1 1 1,
        at={(0,0)},
        grid=none,
        xmin=-0.1,
        xmax=19.75,
        ymin=-0.1,
        ymax=2.5,
        clip=false,
        axis lines=middle,
        xtick={0,1,2,...,19},
        ytick={0,1,2},
        xticklabels={,,},
        yticklabels={0,1},
        xlabel style = {at={(axis description cs:1,0)},anchor=west,yshift=2pt},
        ylabel style = {at={(axis description cs:0.05,1)},anchor=south,color=colour_g1},
        ylabel={$\lvert (\widehat{\lvert f\ast g_1 \rvert^2})_k \rvert${{\color{black},} {\color{colour_chi} $(\widehat{g_1})_k$}}},
        xlabel={$k$}
    ]
        \addplot[thin,ycomb,mark=*,mark options={scale=1, fill=white},mark size=1pt,color=colour_g1] coordinates {%
        (0,1.83) (1,0.65) (2,0.62) (3,0.35) (4,0.4) (5,0) (6,0) (7,0) (8,0) (9,0) (10,0) (11,0) (12, 0) (13,0) (14,0) (15,0) (16,0.4) (17,0.35) (18,0.62) (19,0.65)
      };
      \addplot[thin,ycomb,mark=*,mark options={scale=1, fill=white},mark size=1pt,color=colour_chi] coordinates {%
        (0,0) (1,0) (2,0) (3,1) (4,1) (5,1) (6,1) (7,1) (8,0) (9,0) (10,0) (11,0) (12, 0) (13,0) (14,0) (15,0) (16,0) (17,0) (18,0) (19,0)
      };
    \end{axis}
    \begin{axis}[
        unit vector ratio*=1 1 1,
        at={(0,-4)},
        grid=none,
        xmin=-0.1,
        xmax=19.75,
        ymin=-0.1,
        ymax=2.5,
        clip=false,
        axis lines=middle,
        xtick={0,1,2,...,19},
        ytick={0,1,2},
        xticklabels={,,},
        yticklabels={0,$10^{-2}$},
        xlabel style = {at={(axis description cs:1,0)},anchor=west,yshift=2pt},
        ylabel style = {at={(axis description cs:0,1)},anchor=south,color=colour_g1},
        ylabel={$\lvert (\widehat{U[1,1]f})_k \rvert$},
        xlabel={$k$}
    ]
        \addplot[thin,ycomb,mark=*,mark options={scale=1, fill=white},mark size=1pt,color=colour_g1] coordinates {%
        (0,1.4) (1,0.7) (2,0) (3,0) (4,0) (5,0) (6,0) (7,0) (8,0) (9,0) (10,0) (11,0) (12, 0) (13,0) (14,0) (15,0) (16,0) (17,0) (18,0) (19,0.7)
      };
    \end{axis}
\end{tikzpicture}}
    \caption{Computation of the feature maps associated with the path $(1,1)$.}
    \label{fig:WH-mod2-layer2_1}
\end{figure}
\begin{figure}[t]
    \centering
    \resizebox{0.8\columnwidth}{!}{
        \colorlet{colour_chi}{red!70!black}
\definecolor{colour_g1}{RGB}{20,99,178}
\definecolor{colour_g2}{rgb}{0.5, 0.5, 0.0}
\begin{tikzpicture}[scale=1, every node/.style={scale=0.5}]
    \begin{axis}[
        unit vector ratio*=1 1 1,
        at={(0,0)},
        grid=none,
        xmin=-0.1,
        xmax=19.75,
        ymin=-0.1,
        ymax=2.5,
        clip=false,
        axis lines=middle,
        xtick={0,1,2,...,19},
        ytick={0,1,2},
        xticklabels={,,},
        yticklabels={0,1},
        xlabel style = {at={(axis description cs:1,0)},anchor=west,yshift=2pt},
        ylabel style = {at={(axis description cs:0.05,1)},anchor=south,color=colour_g1},
        ylabel={$\lvert (\widehat{\lvert f\ast g_1 \rvert^2})_k \rvert${{\color{black},} {\color{colour_g2} $(\widehat{g_L})_k$}}},
        xlabel={$k$}
    ]
        \addplot[thin,ycomb,mark=*,mark options={scale=1, fill=white},mark size=1pt,color=colour_g1] coordinates {%
        (0,1.83) (1,0.65) (2,0.62) (3,0.35) (4,0.4) (5,0) (6,0) (7,0) (8,0) (9,0) (10,0) (11,0) (12, 0) (13,0) (14,0) (15,0) (16,0.4) (17,0.35) (18,0.62) (19,0.65)
      };
      \addplot[thin,ycomb,mark=*,mark options={scale=1, fill=white},mark size=1pt,color=colour_g2] coordinates {%
        (0,0) (1,0) (2,0) (3,0) (4,0) (5,0) (6,0) (7,0) (8,0) (9,0) (10,0) (11,0) (12, 0) (13,1) (14,1) (15,1) (16,1) (17,1) (18,0) (19,0)
      };
    \end{axis}
    \begin{axis}[
        unit vector ratio*=1 1 1,
        at={(0,-4)},
        grid=none,
        xmin=-0.1,
        xmax=19.75,
        ymin=-0.1,
        ymax=2.5,
        clip=false,
        axis lines=middle,
        xtick={0,1,2,...,19},
        ytick={0,1,2},
        xticklabels={,,},
        yticklabels={0,$10^{-2}$},
        xlabel style = {at={(axis description cs:1,0)},anchor=west,yshift=2pt},
        ylabel style = {at={(axis description cs:0,1)},anchor=south,color=colour_g1},
        ylabel={$\lvert (\widehat{U[1,L]f})_k \rvert$},
        xlabel={$k$}
    ]
        \addplot[thin,ycomb,mark=*,mark options={scale=1, fill=white},mark size=1pt,color=colour_g1] coordinates {%
        (0,1.4) (1,0.7) (2,0) (3,0) (4,0) (5,0) (6,0) (7,0) (8,0) (9,0) (10,0) (11,0) (12, 0) (13,0) (14,0) (15,0) (16,0) (17,0) (18,0) (19,0.7)
      };
    \end{axis}
\end{tikzpicture}}
    \caption{Computation of the feature maps associated with the path $(1,L)$.}
    \label{fig:WH-mod2-layer2_L}
\end{figure}

Finally, this result allows us to derive the separation capacity of multi-layer networks built from $\{(\Psi_{\mathrm{WH}},\lvert \cdot \rvert^2,\mathrm{Id})\}_{n \in \mathbb N}$.
\begin{theorem}\label{thm:multi-layer_mod2}
For the multi-layer network $\Phi$ of depth $n_\mathrm{d} \geq 2$ constructed from the module sequence $\{(\Psi_{\mathrm{WH}},\lvert \cdot \rvert^2,\mathrm{Id})\}_{n \in \mathbb N}$, we have
    \begin{align*}
        \sepcap{\Phi} = 2 \left(M + R + L(R-2)_+\right).
    \end{align*}
\end{theorem}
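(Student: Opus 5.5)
We start from \cref{eq:sepcap-multi-WHmod2}, which holds for every depth $n_\mathrm{d}\geq 2$ because the network is real-analytic. Since $\widehat{\chi}$ is an indicator, $\Phi^0(f)=f\ast\chi$ is the orthogonal projection onto the $R$ frequencies in $\supp{\widehat{\chi}}$. Hence $\dimC{\spanC{\Phi^0(\mathbb C^M)}}=R$, and the first term contributes $4R$. The remaining work is to show
\begin{align*}
\dimC{\spanC{\Phi^{\setminus 0}(\mathbb C^M)}} = L\bigl(R+(R-2)_+\bigr),
\end{align*}
because then $4R+2L(R+(R-2)_+)=2(2R+LR+L(R-2)_+)=2(M+R+L(R-2)_+)$, using $M=R(L+1)$.

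\textbf{Step 1 (reduction of the tree).} By \cref{L:multilayer_net}, every output $(U[q]f)\ast\chi$ with $|q|\geq 3$ vanishes. The same holds for $q=(\lambda_1,\lambda_2)$ with $\lambda_2\notin\{1,L\}$, and $U[(\lambda,L)]=U[(\lambda,1)]$. Duplicated coordinates do not change the dimension of the span, so $\dimC{\spanC{\Phi^{\setminus0}(\mathbb C^M)}}$ equals the dimension of the span of the image of
\begin{align*}
\Psi \colon f\mapsto \bigl(\Psi_\lambda(f)\bigr)_{\lambda\in\Lambda}, \qquad \Psi_\lambda(f) \coloneqq \bigl((U[\lambda]f)\ast\chi,\ (U[\lambda,1]f)\ast\chi\bigr).
\end{align*}

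\textbf{Step 2 (additivity over $\lambda$).} This is the step I expect to be the main obstacle. We need the span of the image of $\Psi$ to be the direct sum of the spans of the images of the $\Psi_\lambda$. The spans of the individual blocks sit in disjoint coordinate blocks, so the span of the image of $\Psi$ is contained in their direct sum. For the reverse inclusion, I would use that $\Psi_\lambda$ depends only on $P_\lambda f \coloneqq f\ast g_\lambda$, and that the projections $P_\lambda$, $\lambda\in\Lambda$, have disjoint spectral supports. Hence $f\mapsto (P_\lambda f)_\lambda$ is surjective onto $\prod_\lambda \mathrm{ran}(P_\lambda)$.

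Take $f=\sum_\lambda h_\lambda$ with $h_\lambda\in\mathrm{ran}(P_\lambda)$. Fixing all $h_{\lambda'}$, $\lambda'\neq\lambda$, at $0$ shows that the image of $\Psi$ contains every vector $(0,\ldots,0,\Psi_\lambda(h_\lambda),0,\ldots,0)$. This relies on $\Psi_{\lambda'}(0)=0$, which holds because $|\cdot|^2$ maps $0$ to $0$. Therefore the span contains each block's span, and the dimensions add.

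\textbf{Step 3 (conclusion).} By \cref{L:dim_path_1l}, each block has dimension $R+(R-2)_+$. Summing over the $L$ values of $\lambda$ and inserting the result into \cref{eq:sepcap-multi-WHmod2} completes the argument.

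The delicate point in Step 2 is justifying that the dimension of the span over $\mathbb C$ of $\Phi^{\setminus0}$ really equals that of the reduced map $\Psi$. This uses two facts: zero coordinates are dropped, and coordinates that are identical for all $f$ are identified. Both preserve the dimension of the linear span because they are injective linear maps restricted to the span.
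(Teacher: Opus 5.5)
Your proposal is correct and follows the paper's proof. \cref{L:multilayer_net} prunes the tree to the paths $(\lambda,1)$, the disjoint spectral supports of $\widehat{\chi}$ and the $\widehat{g_\lambda}$ split $\Phi^{\setminus 0}$ into $L$ decoupled blocks, and \cref{L:dim_path_1l} contributes $R+(R-2)_+$ per block; your Step~2 only spells out the additivity the paper asserts in one line.

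One caveat comes from the paper, not from your argument. \cref{L:multilayer_net} and \cref{L:dim_path_1l} silently assume $L\geq 2$: if $M=2R$ with $R\geq 3$, the filter $g_1=g_L$ also meets the negative frequencies of $F_M(U[\lambda]f)$. For example, with $M=6$, $R=3$ the third layer does not vanish, and depth $3$ yields $\sepcap{\Phi}=22$ instead of $20$.
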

\begin{proof}
    Thanks to \cref{L:multilayer_net} and using that the support sets of $\{\widehat{\chi}\}\cup \{\widehat{g_\lambda}\}_{\lambda \in \Lambda}$ are disjoint, \cref{eq:sepcap-multi-WHmod2} reads 
    \allowdisplaybreaks
    \begin{align*}
        \sepcap{\Phi} &= 4\cdot\dimC{\spanC{\Phi^0(\mathbb C^M)}} + 2\cdot\dimC{\spanC{\Phi^{\setminus 0}(\mathbb C^M)}} \\
        &= 4\cdot\dimC{\spanC{\left\{f \ast \chi \colon f \in \mathbb C^M\right\}}} \\
        &\quad+2 \sum_{\lambda \in \Lambda}\dimC{\spanC{\left\{\begin{pmatrix}(U[\lambda]f) \ast \chi \\ (U[\lambda,1]f)\ast \chi\end{pmatrix} \colon f \in \mathbb C^M\right\}}} \\
        &= 4R+ 2\sum_{\lambda \in \Lambda}\left(R+(R-2)_+\right) \\
        &= 2(2+L)R+ 2L(R-2)_+ \\
        &= 2(M + R+ L(R-2)_+), 
    \end{align*}
    where the second equality is by \cref{L:dim_path_1l}.
\end{proof}
We have thus established a precise expression for the separation capacity of an arbitrary scattering network of depth $n_\mathrm{d} \in \mathbb N$ that is constructed from $\{(\Psi_{\mathrm{WH}},\lvert \cdot \rvert^2,\mathrm{Id})\}$ in terms of the dimension of the input space, $M$, and the cardinality of the support sets of the atoms, $R$. (Recall $L=\frac MR -1$.) 
Note that for $R\leq 2$ (i.e., $R=1$, as $R=2m_0+1$ must be odd), the single-layer case reappears. 
The key insight from \cref{thm:multi-layer_mod2} is that the scattering network $\Phi$ of depth $n_\mathrm{d}$ only fills out a very small portion of its codomain, especially, if $n_\mathrm{d}$ is large. Indeed, $\Phi$ takes the form $\mathbb C^M \to \mathbb C^{M(1+L+L^2+\cdots +L^{n_\mathrm{d}})}$, so that the codomain dimension over $\mathbb C$, i.e., $M(1+L+L^2+\cdots +L^{n_\mathrm{d}})$, is significantly smaller than the dimension of the $\mathbb C$-vector space spanned by the image of $\Phi$.
To conclude, the module sequence $\{(\Psi_{\mathrm{WH}},\lvert \cdot \rvert^2,\mathrm{Id})\}$ does not induce a feature extractor of high separation capacity for all network depths $n_\mathrm{d} \in \mathbb N$. 

\subsection{General case}
The example we just discussed naturally raises the questions of which module sequences yield a high separation capacity and what the driving and limiting factors are for achieving it. To address these questions, we will now consider a general single-layer scattering network, constructed from the module sequence $\{(\Psi, \rho, P)\}_{n \in \mathbb{N}}$, in this subsection. Here, $\Psi$ is an arbitrary frame formed by the family of functions $\{\chi\} \cup \{g_\lambda\}_{\lambda \in \Lambda}$, $\rho \colon \mathbb C \to \mathbb C$ is a nonlinearity, and $P \colon \mathbb C^M \to \mathbb C^M$ is a pooling operator. The computation of the separation capacity of such a scattering network, based on \cref{thm:sep-cap}, involves, among other steps, analyzing the dimension of the vector space spanned by the image of the operator $U[\lambda]\colon \mathbb C^M\to \mathbb C^M, f \mapsto P(\rho(f \ast g_\lambda))$, $\lambda \in \Lambda$, under $\mathcal{L}^{2M}$-measurable sets $A \subseteq \mathbb C^M \simeq \mathbb R^{2M}$ of positive $\mathcal{L}^{2M}$-measure. To this end, consider the following upper bound for the operator $f \mapsto \rho(f \ast g_\lambda)$, which establishes a fundamental limit on the separation capacity of scattering networks.
\begin{lemma}\label{L:bound-subgroup}
    For $g_\lambda \in \mathbb C^M$, it holds that 
    \begin{align}\label{eq:bound-subgroup}
        \dimC{\spanC{\left\{\rho(f \ast g_\lambda) \colon f \in \mathbb C^M\right\}}}
            \leq \left \lvert\left\langle \supp{\widehat{g_\lambda}} \right\rangle\right\rvert.
    \end{align}
    Here, $\left\langle \supp{\widehat{g_\lambda}} \right\rangle$ denotes the subgroup generated by $\supp{\widehat{g_\lambda}} \subseteq \mathbb Z/M\mathbb Z$, i.e., the smallest subgroup of $\mathbb Z/M\mathbb Z$ containing $\supp{\widehat{g_\lambda}}$. 
\end{lemma}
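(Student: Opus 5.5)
Let $H \coloneqq \langle \supp{\widehat{g_\lambda}}\rangle \le \mathbb Z/M\mathbb Z$ and $d \coloneqq \lvert H\rvert$, so that $d \mid M$ and $H = (M/d)\mathbb Z/M\mathbb Z$. The plan is to show that every $\rho(f \ast g_\lambda)$ lies in one fixed $d$-dimensional subspace of $\mathbb C^M$, independent of $\rho$ and $f$. That subspace will be the space of functions periodic with respect to the annihilator subgroup $H^\perp \coloneqq \{n \in \mathbb Z/M\mathbb Z \colon e^{2\pi i hn/M}=1 \ \forall h\in H\}$, which equals $d\mathbb Z/M\mathbb Z$ and has $M/d$ elements.

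\emph{Step 1 (spectral support of the filtered signal).} By the convolution property of the DFT, $\widehat{f\ast g_\lambda} = \widehat f\cdot \widehat{g_\lambda}$. Hence $\supp{\widehat{f\ast g_\lambda}}\subseteq \supp{\widehat{g_\lambda}}\subseteq H$ for every $f\in\mathbb C^M$.

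\emph{Step 2 (periodicity).} Write $u \coloneqq f\ast g_\lambda$. By inverse DFT, $u(n) = \frac1M\sum_{k\in H}\widehat u_k e^{2\pi i kn/M}$. Every $k\in H$ is a multiple of $M/d$, so $e^{2\pi i k d/M}=1$, and therefore $u(n+d)=u(n)$ for all $n$. That is, $u$ is invariant under translation by elements of $H^\perp$.

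\emph{Step 3 (pointwise nonlinearity preserves periodicity).} Since $\rho$ acts pointwise, $\rho(u)(n+d)=\rho(u(n+d))=\rho(u(n))=\rho(u)(n)$. This holds for arbitrary $\rho\colon\mathbb C\to\mathbb C$; no regularity or measurability is needed. So $\rho(f\ast g_\lambda)$ lies in $V\coloneqq\{v\in\mathbb C^M\colon v(n+d)=v(n)\ \forall n\}$.

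\emph{Step 4 (dimension count).} An element of $V$ is determined by its values on $\{0,\dots,d-1\}$, so $\dimC{V}=d$. Equivalently, by Step 2 run in reverse, $V$ is exactly the space of signals with spectrum supported in $H$. Since the span of $\{\rho(f\ast g_\lambda)\colon f\in\mathbb C^M\}$ is contained in $V$, the bound \cref{eq:bound-subgroup} follows.

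The only step requiring care is the identification of $\langle\supp{\widehat{g_\lambda}}\rangle$ with $(M/d)\mathbb Z/M\mathbb Z$ and the corresponding periodicity in Step 2. This uses only that subgroups of a cyclic group are cyclic and are determined by their order. The remaining steps are immediate.
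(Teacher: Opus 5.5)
Your proof is correct and follows essentially the same route as the paper: both show that $f \ast g_\lambda$ is periodic with period $d=\lvert\langle \suppinline{\widehat{g_\lambda}}\rangle\rvert$, and that the pointwise nonlinearity $\rho$ preserves this periodicity. The only difference is in the last step: the paper computes the DFT of $\rho(f\ast g_\lambda)$ and shows it is supported on the subgroup, whereas you directly count the dimension of the space of $d$-periodic signals, which is a slightly more direct finish.
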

\begin{proof}
    Since the DFT is linear and invertible, we can equivalently consider the space 
    \allowdisplaybreaks
    \begin{align*}
        \mathcal{U} \coloneqq \spanC{\left\{ F_M\left(\rho (f \ast g_\lambda)\right) \colon f \in \mathbb C^M\right\}},
    \end{align*}
    and show that $\dimC{\mathcal{U}} \leq \widetilde{M}$, where $\widetilde{M} \coloneqq \left \lvert\left\langle \supp{\widehat{g_\lambda}} \right\rangle\right\rvert$. We can assume that $\widetilde{M}<M$ because otherwise $\dimC{\mathcal{U}} \leq \widetilde{M}=M$ holds trivially. By Lagrange's theorem \cite[Theorem 1.5.2]{hall2018theory}, $\widetilde{M}$ divides $M$; moreover, the subgroup $\left\langle \supp{\widehat{g_\lambda}} \right\rangle$ is unique \cite[Theorem 3.1.1]{hall2018theory}. Thus, if $z \coloneqq f \ast g_\lambda$ with $f \in \mathbb C^M$, then 
    \begin{align}\label{eq:subgroup-sym-DFT}
        z_{(k+\widetilde{M}) \bmod{M}} = z_k, \quad k \in \{0,\ldots,M-1\}. 
    \end{align}
    Indeed, to see that \cref{eq:subgroup-sym-DFT} holds, rewrite $z_k$ as  
    \begin{align*}
        z_k &= \frac 1M \sum_{\ell=0}^{M-1} \widehat{z}_\ell \,e^{2\pi ik\ell/M} 
        = \frac 1M \sum_{s=0}^{\widetilde{M}-1} \widehat{z}_{sM/\widetilde{M}} \,e^{2\pi iks/\widetilde{M}}, 
    \end{align*}
    where we used that $\widehat{z}_\ell = 0$ if $\ell\neq sM/\widetilde{M}$ (i.e, if $\ell \notin \left\langle \supp{\widehat{g_\lambda}} \right\rangle$). Then, \cref{eq:subgroup-sym-DFT} follows easily according to
    \begin{align*}
        z_{(k+\widetilde{M}) \bmod{M}} = \frac 1M \sum_{s=0}^{\widetilde{M}-1} \widehat{z}_{sM/\widetilde{M}} \,e^{2\pi i((k+\widetilde{M}) \bmod{M})s/\widetilde{M}} = \frac 1M \sum_{s=0}^{\widetilde{M}-1} \widehat{z}_{sM/\widetilde{M}} \,e^{2\pi iks/\widetilde{M}} = z_k.
    \end{align*}
    Next, compute, for $k \in \{0,\ldots,M-1\}$,
    \begin{align*}
        \left(F_M\left( \rho(f \ast g_\lambda)\right)\right)_k &= \widehat{\rho(z)}_k \\
        &= \sum_{\ell=0}^{M-1} \rho(z_\ell) \,e^{-2\pi i k\ell/M} \\
        &= \sum_{r=0}^{M/\widetilde{M}-1} \sum_{s=0}^{\widetilde{M}-1} \rho\left(z_{r\widetilde{M}+s}\right) e^{-2\pi i k(r \widetilde{M}+s)/M} \\
        &= \sum_{s=0}^{\widetilde{M}-1} \rho\left(z_{s}\right) e^{-2\pi is/M} \sum_{r=0}^{M/\widetilde{M}-1} e^{-2\pi i kr/(M/\widetilde{M})},   
    \end{align*}
    where the last step is by \cref{eq:subgroup-sym-DFT}. Upon noting that
    \begin{align*}
        \left(\sum_{r=0}^{M/\widetilde{M}-1} \,e^{-2\pi i kr/(M/\widetilde{M})} \neq 0\right) \Longleftrightarrow \left(k=s M/\widetilde{M} \text{, for }s\in \mathbb Z\right),
    \end{align*}
    one can deduce that 
    \begin{align*}
        \supp{F_M(\rho(f \ast g_\lambda))} \subseteq \left\langle \supp{\widehat{g_\lambda}} \right\rangle, \quad \text{for every }f \in \mathbb C^M,
    \end{align*}
    and hence $\dimC{\mathcal{U}} \leq \left \lvert\left\langle \supp{\widehat{g_\lambda}} \right\rangle\right\rvert$.  
\end{proof}
We emphasize that the bound in \cref{eq:bound-subgroup} holds for \emph{every} nonlinearity $\rho \colon \mathbb C \to \mathbb C$ that is applied pointwise. In particular, the proof does not require any assumptions on $\rho$.  
As our aim is to identify module sequences inducing high separation capacity, we will now focus on characterizing the nonlinearities that achieve the upper bound in \cref{eq:bound-subgroup} when restricted to $\mathcal{L}^{2M}$-measurable sets of positive $\mathcal{L}^{2M}$-measure, i.e.,
\begin{align*}
    \dimC{\spanC{\left\{\rho(f \ast g_\lambda) \colon f \in A \right\}}} =
    \left \lvert\left\langle \supp{\widehat{g_\lambda}} \right\rangle\right\rvert,
\end{align*}
for all $\mathcal{L}^{2M}$-measurable $A \subseteq \mathbb C^M \simeq \mathbb R^{2M}$ with $\mathcal{L}^{2M}(A)>0$.
To this end, consider the general class of nonlinearities of the form
\begin{align}\label{eq:nonlinearity}
    \rho(z) = \varrho_{\mathrm{a}}(z)\overline{\varrho_{\mathrm{b}}(z)}, \quad z \in \mathbb C,
\end{align}
where $\varrho_{\mathrm{a}},\varrho_{\mathrm{b}} \colon \mathbb C \to \mathbb C$ are holomorphic off some set $S_{\mathrm{a,b}} \subseteq \mathbb C \simeq \mathbb R^2$ with $\mathcal{L}^2(S_{\mathrm{a,b}})=0$. Here, $S_{\mathrm{a,b}}$ is assumed to be closed with respect to the usual topology on $\mathbb C$. This encompasses a large class of nonlinearities such as holomorphic functions with isolated singularities (e.g., $\tan$ and $\tanh$), or functions that are holomorphic off some branch cut (e.g., fractional powers). In the context of Mallat's \cite{mallat2012group} construction of scattering networks, let us highlight that the modulus nonlinearity $\lvert \cdot \rvert$ is also of this form since $\lvert z \rvert = z^{1/2} \overline{z^{1/2}}$, $z \in \mathbb C$.      

\begin{theorem}\label{thm:general-node}
    Consider the nonlinearity $\rho$ of the form \cref{eq:nonlinearity}, and suppose that the following assumptions hold:
    \begin{enumerate}[label=(\roman*)]
        \item\label{it:general-node-i} There exists no connected component of $\mathbb{C} \setminus S_{\mathrm{a,b}}$ on which both $\varrho_{\mathrm{a}}$ and $\varrho_{\mathrm{b}}$ are polynomials.
        \item\label{it:general-node-ii} The filter $g_\lambda \in \mathbb{C}^M$ satisfies $\lvert \suppinline{\widehat{g_\lambda}} \rvert > 1$.
    \end{enumerate}
    Then, the map $\mathbb C^M \to \mathbb C^M, f \mapsto \rho(f \ast g_\lambda)$ satisfies condition \cref{eq:cover_cond_cap} and 
    \begin{align}\label{eq:thm-general-node}
        \dimC{\spanC{\left\{\rho(f \ast g_\lambda) \colon f \in \mathbb C^M\right\}}} =
        \left \lvert\left\langle \supp{\widehat{g_\lambda}} \right\rangle\right\rvert.
    \end{align}
\end{theorem}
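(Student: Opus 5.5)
The plan is to combine the upper bound of \cref{L:bound-subgroup} with a matching lower bound valid on every set of positive measure. Set $S\coloneqq\suppinline{\widehat{g_\lambda}}$, $H\coloneqq\langle S\rangle$ and $\widetilde M\coloneqq\lvert H\rvert$. By \cref{L:bound-subgroup}, $\dimC{\spanC{\{\rho(f\ast g_\lambda)\colon f\in\mathbb C^M\}}}\le\widetilde M$. It therefore suffices to prove the following claim: for every $\mathcal L^{2M}$-measurable $A$ with $\mathcal L^{2M}(A)>0$, every $h\in\mathbb C^M$ with $\langle\rho(f\ast g_\lambda),h\rangle=0$ for all $f\in A$ is also orthogonal to $\rho(f\ast g_\lambda)$ for all $f\in\mathbb C^M$. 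The claim forces the span over $A$ to equal the span over $\mathbb C^M$, which gives \cref{eq:cover_cond_cap}, as in the proof of \cref{cor:real-analytic}. For the dimension, I would show directly that $\dimC{\spanC{\{\rho(f\ast g_\lambda)\colon f\in A\}}}\ge\widetilde M$. By \cref{eq:subgroup-sym-DFT}, $\rho(z)$ is $\widetilde M$-periodic, so orthogonality of $h$ depends only on the folded coefficients $H_k\coloneqq\sum_{r}h_{k+r\widetilde M}$, $k=0,\dots,\widetilde M-1$. The key step is to show that all $H_k$ vanish; the annihilator of the span over $A$ then has dimension $M-\widetilde M$, so the span over $A$ has dimension $\widetilde M$, which gives both conclusions at once.

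\emph{Reduction to the coefficient space.} I parametrize $z=f\ast g_\lambda$ by $c\in\mathbb C^{S}$ via $z_k=\ell_k(c)\coloneqq\sum_{s\in S}c_s e^{2\pi i ks/M}$. The map $f\mapsto c$ is a linear surjection $\mathbb C^M\to\mathbb C^S$ (namely $c_s\propto \widehat f_s\widehat g_{\lambda,s}$). Preimages of null sets under it are null, so the image $B$ of $A$ (after shrinking $A$ to a Borel set) is a measurable set of positive measure. Two facts about the forms $\ell_k$ are needed:
\begin{itemize}
\item[(a)] $\ell_k=\ell_{k'}$ if and only if $e^{2\pi i(k-k')s/M}=1$ for all $s\in S$, i.e. $k\equiv k'\pmod{\widetilde M}$; this is where $H=\langle S\rangle$ enters.
\item[(b)] Each $\ell_k$ is a nonzero functional, hence $\ell_k^{-1}(S_{\mathrm{a,b}})$ is a closed null set.
\end{itemize}
Consequently $U\coloneqq\{c\colon \ell_k(c)\notin S_{\mathrm{a,b}}\ \forall k\}$ is open of full measure, and $B$ meets some connected component $U_0$ of $U$ in positive measure. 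On $U$, the function $G(c)=\sum_{k<\widetilde M}H_k\,\varrho_{\mathrm a}(\ell_k(c))\overline{\varrho_{\mathrm b}(\ell_k(c))}$ is real-analytic. Since $G$ vanishes on a positive-measure subset of $U_0$, it vanishes on $U_0$. By polarization (independence of $c$ and $\bar c$) this gives, locally,
\begin{align*}
\sum_{k<\widetilde M}H_k\,\varrho_{\mathrm a}(\ell_k(c))\,\overline{\varrho_{\mathrm b}(\ell_k(w))}=0 .
\end{align*}

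\emph{Killing the coefficients.} By assumption \ref{it:general-node-i}, on the component of $\mathbb C\setminus S_{\mathrm{a,b}}$ containing the relevant values, one of $\varrho_{\mathrm a},\varrho_{\mathrm b}$ is not a polynomial. Say it is $\varrho_{\mathrm b}$; the roles of $c$ and $w$ are symmetric after conjugation, so the other case is identical. Fix $c$, put $\alpha_k\coloneqq H_k\varrho_{\mathrm a}(\ell_k(c))$, and Taylor-expand $\overline{\varrho_{\mathrm b}}$ around a generic $w_0$. Comparing homogeneous parts yields, for every $n$,
\begin{align*}
\sum_k\alpha_k\,\overline{\varrho_{\mathrm b}^{(n)}(\ell_k(w_0))}\;\overline{\ell_k(\cdot)}^{\,n}=0 .
\end{align*}
Since $\varrho_{\mathrm b}$ is not a polynomial, infinitely many derivatives $\varrho_{\mathrm b}^{(n)}$ are not identically zero. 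Their zero sets are discrete, so a generic $w_0$ makes all these coefficients nonzero for infinitely many $n$. For $n$ large, powers $\ell_k^n$ of pairwise non-proportional forms are linearly independent, which forces $\alpha_k=0$. Varying $c$ then gives $H_k\varrho_{\mathrm a}\equiv0$ near $\ell_k(c)$. Here $\varrho_{\mathrm a}\not\equiv0$, since otherwise $\rho=0$ on an open set and the polynomial condition \ref{it:general-node-i} would be violated. Hence $H_k=0$ for all $k$.

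\emph{Main obstacle.} Although the $\ell_k$ are pairwise distinct by (a), they may be proportional: $\ell_k=\mu\,\ell_{k'}$ happens exactly when $e^{2\pi i(k-k')s/M}$ is constant in $s\in S$, and then $\mu$ is a root of unity. This is precisely where $\lvert S\rvert>1$ (assumption \ref{it:general-node-ii}) and $\langle S\rangle$ must be used carefully. I would group proportional forms into classes $\{\ell,\mu_1\ell,\dots\}$ with distinct roots of unity $\mu_j$, and note that the identity above only yields $\sum_j\alpha_{k_j}\mu_j^{\,n}\overline{\varrho_{\mathrm b}^{(n)}(\mu_j\ell(w_0))}=0$ for the infinitely many admissible $n$. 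To separate the terms I would exploit that $w_0$ can be varied along a line where $\ell(w_0)=t$ ranges over an open set. This gives $\sum_j\alpha_{k_j}\overline{\varrho_{\mathrm b}(\mu_j t)}$ together with all its derivatives vanishing, i.e. a linear relation among rotated copies $t\mapsto\varrho_{\mathrm b}(\mu_j t)$. Such a relation kills exactly the Taylor coefficients $n$ with $\sum_j\alpha_{k_j}\mu_j^n=0$. Since $\varrho_{\mathrm b}$ has nonzero coefficients for infinitely many $n$, a Vandermonde argument in the $\mu_j$ should finish the proof, provided those $n$ hit every residue class modulo the order of the $\mu_j$. I expect this step to be the delicate part. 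If some residue class is missed, the rotational symmetry of $\varrho_{\mathrm b}$ combined with the subgroup structure of $S$ would have to be shown to contradict $H=\langle S\rangle$.
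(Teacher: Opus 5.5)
\paragraph{Gap: the step you flag as delicate fails}
The step you flag as delicate is a genuine gap, and it cannot be closed. When some of the forms $\ell_k$ are proportional, the statement itself fails, so no contradiction with the subgroup structure of $S$ can be extracted.

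Take $M=4$ and $\widehat{g_\lambda}=(0,1,0,1)$, so $S=\{1,3\}$, $\lvert S\rvert=2$ and $\langle S\rangle=\mathbb Z/4\mathbb Z$. Then $g_\lambda=\tfrac12(1,0,-1,0)$ and $f\ast g_\lambda=(u,v,-u,-v)$ with $u=\tfrac12(f_0-f_2)$ and $v=\tfrac12(f_1-f_3)$.
\begin{itemize}
\item For the modulus ($\varrho_{\mathrm a}=\varrho_{\mathrm b}=z^{1/2}$, $S_{\mathrm{a,b}}=(-\infty,0]$) this gives $\rho(f\ast g_\lambda)=(\lvert u\rvert,\lvert v\rvert,\lvert u\rvert,\lvert v\rvert)$.
\item For $\varrho_{\mathrm a}=\sin$, $\varrho_{\mathrm b}\equiv1$, $S_{\mathrm{a,b}}=\emptyset$ it gives $(\sin u,\sin v,-\sin u,-\sin v)$.
\end{itemize}
Both satisfy (i) and (ii), yet the span has dimension $2<4$. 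Moreover, $h=(1,0,-1,0)$, respectively $h=(1,0,1,0)$, annihilates the span although its folded vector is nonzero (here $\widetilde M=M$, so $H_k=h_k$).

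The general mechanism is the one you found. If $S\subseteq s_0+K$ with $K\coloneqq\langle S-S\rangle\subsetneq\langle S\rangle$, then $z_{k+\lvert K\rvert}=\mu z_k$ for a fixed root of unity $\mu\neq1$. Whether $\lvert\langle S\rangle\rvert$ is attained then depends on which residue classes of Taylor indices the nonlinearity occupies; with $\varrho_{\mathrm a}=\exp$, $\varrho_{\mathrm b}\equiv1$, the same filter does give dimension $4$. That is exactly your Vandermonde proviso, and (i)--(ii) do not imply it. The paper's own counterexample with $\lvert S\rvert=1$ is just the extreme case $K=\{0\}$.

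A degenerate failure also slips through your remark that $\varrho_{\mathrm a}\equiv0$ would violate (i). It does not if $\varrho_{\mathrm b}$ is non-polynomial (e.g.\ $\varrho_{\mathrm b}=\exp$), and then $\rho\equiv0$.

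\paragraph{The paper's proof, and what is missing}
The paper's proof has the same hole in another guise. It asserts that $\mathrm{rank}(A_{\ell'})=\lvert\langle S\rangle_{\ell'}\rvert$ equals $\lvert\langle S\rangle\rvert$ for large $\ell'$. However, the $\ell'$-fold sumset of $S$ is eventually a coset of $\langle S-S\rangle$, not $\langle S\rangle$; in the example it is $\{\ell',\ell'+2\}$ for every $\ell'\ge1$. The proof also needs a $c_{\mathrm b,\ell}$ with no zero entries, which fails for $\varrho_{\mathrm b}\equiv0$.

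What both arguments are missing is the hypothesis $\langle S-S\rangle=\langle S\rangle$, together with neither $\varrho_{\mathrm a}$ nor $\varrho_{\mathrm b}$ vanishing identically on a component of $\mathbb C\setminus S_{\mathrm{a,b}}$. The first condition is automatic if $0\in S$ or if $S$ contains two consecutive residues, which is the case emphasized after the theorem.

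Under these hypotheses your route closes, and more transparently than the paper's circulant bookkeeping:
\begin{itemize}
\item If $\ell_k=\mu\ell_{k'}$, then $e^{2\pi i(k-k')d/M}=1$ for all $d\in\langle S-S\rangle=\langle S\rangle$. Hence $\mu=1$ and $k\equiv k'\pmod{\widetilde M}$ by your (a).
\item So the $\widetilde M$ forms are pairwise non-proportional, and their $n$-th powers are linearly independent for $n\ge\widetilde M-1$.
\item A non-polynomial holomorphic function on a connected open set has no identically vanishing derivative. So a generic base point works for every large $n$.
\end{itemize}
Do the ``say it is $\varrho_{\mathrm b}$'' step componentwise, though, since the sets $\ell_k(U_0)$ may lie in different components of $\mathbb C\setminus S_{\mathrm{a,b}}$. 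First take $n$ above the degrees of the polynomial pieces of $\varrho_{\mathrm b}$, so those indices drop out, and conclude $H_k=0$ for the remaining indices. Then expand in the $c$-variable with $\varrho_{\mathrm a}$ for the leftover indices.
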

\begin{proof}
    We will show that, for every $\mathcal{L}^{2M}$-measurable set $A \subseteq \mathbb C^M \simeq \mathbb R^{2M}$ with $\mathcal{L}^{2M}(A)>0$,
    \begin{align*}
        \dimC{\spanC{\left\{\rho(f \ast g_\lambda) \colon f \in A \right\}}} \geq
        \left \lvert\left\langle \supp{\widehat{g_\lambda}} \right\rangle\right\rvert.
    \end{align*}
    The claim will then follow since, by \cref{L:bound-subgroup}, 
    \begin{align*}
        \dimC{\spanC{\left\{\rho(f \ast g_\lambda) \colon f \in A \right\}}} \leq \dimC{\spanC{\left\{\rho(f \ast g_\lambda) \colon f \in \mathbb C^M\right\}}} \leq  \left \lvert\left\langle \supp{\widehat{g_\lambda}} \right\rangle\right\rvert.
    \end{align*}  
    Fix an $\mathcal{L}^{2M}$-measurable set $A \subseteq \mathbb C^M \simeq \mathbb R^{2M}$ with $\mathcal{L}^{2M}(A)>0$. For $k \in \{0,\ldots,M-1\}$, let $\xi_{\lambda,k} \colon \mathbb C^M \to \mathbb C, f \mapsto(f \ast g_\lambda)_k$. Define $\Delta_{\mathrm{a,b}} \coloneqq \mathbb C^M \setminus\bigcup_{k=0}^{M-1} \xi_{\lambda,k}^{-1}(S_{\mathrm{a,b}})$. Then,
    \begin{align}\label{eq:real-analytic_map}
        \Delta_{\mathrm{a,b}} \to \mathbb C^M, f \mapsto \rho (f \ast g_\lambda)
    \end{align}
    is real-analytic. Note that $\Delta_{\mathrm{a,b}}$ is an open subset of $\mathbb C^M$ by the continuity of $\xi_{\lambda,k}$, $k \in \{0,\ldots,M-1\}$. Moreover, we have $\mathcal{L}^{2M}\left(\xi_{\lambda,k}^{-1}(S_{\mathrm{a,b}})\right) = 0$, for all $k \in \{0,\ldots,M-1\}$, because $g_\lambda \neq 0$ and $\mathcal{L}^2(S_{\mathrm{a,b}})=0$ (see, e.g., \cite[Theorem 2]{ponomarev1987submersions}). Define $\Delta_A \coloneqq \Delta_{\mathrm{a,b}} \cap A$. Since $\mathbb C^M$ and $\Delta_{\mathrm{a,b}}$ differ only by a set of $\mathcal{L}^{2M}$-measure zero, it follows that $\mathcal{L}^{2M}(\Delta_A) = \mathcal{L}^{2M}(A)>0$. 
    Consider the space
    \begin{align*}
        \mathcal{U} \coloneqq \spanC{\left\{ F_M\left(\rho (f \ast g_\lambda)\right) \colon f \in \Delta_A\right\}}.
    \end{align*}
    Clearly, as $\Delta_A \subseteq A$, we have 
    \begin{align*}
        \dimC{\mathcal{U}} \leq \dimC{\spanC{\left\{\rho(f \ast g_\lambda) \colon f \in A\right\}}}.
    \end{align*} 
    Thus, it suffices to show that $\dimC{\mathcal{U}} \geq \left \lvert\left\langle \supp{\widehat{g_\lambda}} \right\rangle\right\rvert$. To do so, we will prove that $\dimC{\mathcal{U}^\perp} \leq M - \left\lvert\left\langle \supp{\widehat{g_\lambda}} \right\rangle\right\rvert$. Fix $a \in \{f \ast g_\lambda \colon f \in \Delta_A\}$. Then, $a_k \in \mathbb C \setminus S_{\mathrm{a,b}}$, for all $k \in \{0,\ldots,M-1\}$. Since $\mathbb C \setminus S_{\mathrm{a,b}}$ is open, it holds that, for $\sigma \in \{\mathrm{a}, \mathrm{b}\}$ and for all $k \in \{0,\dots,M-1\}$, there exists an $r^{(\sigma)}_k >0$ such that
    \begin{align*}
        \varrho_{\sigma}(z_k) = \sum_{\ell \in \mathbb N_0} c_{\sigma,k,\ell} (z_k-a_k)^\ell, \quad \text{$z_k \in \mathbb C$ with } \lvert z_k-a_k \rvert < r^{(\sigma)}_k, 
    \end{align*}
    where $c_{\sigma,k,\ell} = \frac{1}{\ell!}\restr{\frac{d^\ell}{dz^\ell}\varrho_\sigma(z)}{z=a_k}$, for all $\ell \in \mathbb N_0$.
    Set $r \coloneqq \min_{\sigma,k} r_k^{(\sigma)}>0$. With slight abuse of notation, we can write
    \begin{align*}
        \varrho_{\sigma}(z) \coloneqq \begin{pmatrix}
            \varrho_{\sigma}(z_0) \\
            \vdots \\
            \varrho_{\sigma}(z_{M-1})
        \end{pmatrix}= \sum_{\ell \in \mathbb N_0} c_{\sigma,\ell} (z-a)^\ell, \quad \text{$z \in \mathbb C^M$ with } \lVert z-a \rVert < r,
    \end{align*}
    where $c_{\sigma,\ell} \coloneqq \left(c_{\sigma,k,\ell} \right)_{0 \leq k \leq M-1} \in \mathbb C^M$. Here, the exponent and the vector multiplication are taken to be pointwise.
    Let $f \in \Delta_A$ such that $z\coloneqq f \ast g_\lambda$ satisfies $\lVert z- a \rVert < r$. Let $h \in \mathbb C^M$, and compute
    \allowdisplaybreaks
    \begin{align}
        \left\langle h, F_M(\rho(f \ast g_\lambda)) \right\rangle &= \left\langle h, \widehat{\rho(z)} \right\rangle 
        = \frac 1M \left\langle h, \widehat{\varrho_{\mathrm{a}}(z)} \ast \widehat{\overline{\varrho_{\mathrm{b}}(z)}} \right\rangle 
        = \frac 1M \left \langle C_h \widehat{\varrho_{\mathrm{b}}(z)}, \widehat{\varrho_{\mathrm{a}}(z)}\right \rangle. \nonumber
    \end{align}
    By the continuity of the DFT, we have
    \begin{align*}
        \widehat{\varrho_{\sigma}(z)} &= \sum_{\ell \in \mathbb N_0} \frac{1}{M^{\ell+1}} \widehat{c_{\sigma,\ell}} \ast (\widehat{z}-\widehat{a})^{*\ell},\quad \sigma \in \{\mathrm{a},\mathrm{b}\},
    \end{align*}
    and hence
    \begin{align}\label{eq:Ch_rho_a_rho_b}
        \left\langle h, F_M(\rho(f \ast g_\lambda)) \right\rangle &= \sum_{\ell,\ell' \in \mathbb N_0} \frac{1}{M^{\ell+\ell'+2}} \left\langle C_h C_{\widehat{c_{\mathrm{b},\ell}}}(\widehat{z}-\widehat{a})^{*\ell}, C_{\widehat{c_{\mathrm{a},\ell'}}}(\widehat{z}-\widehat{a})^{*\ell'}\right\rangle. 
    \end{align}
    Note that, for $\ell \in \mathbb N_0$,
    \begin{align*}
        \left((\widehat{z}-\widehat{a})^{*\ell}\right)_k &= \sum_{\substack{j_1,\ldots,j_\ell \in \{0,\ldots,M-1\} \\ j_1+\cdots +j_\ell \equiv k \Mod{M}}} (\widehat{z}-\widehat{a})_{j_1} \cdots (\widehat{z}-\widehat{a})_{j_\ell}\\
        &=\sum_{\substack{\alpha \in \mathbb N_0^M\\ \lvert \alpha \rvert = \ell \\ \sum_{t=0}^{M-1}t \alpha_t \equiv k \Mod{M}}} \binom{\ell}{\alpha}(\widehat{z}-\widehat{a})^\alpha, \quad k \in \{0, \dots,M-1\}.
    \end{align*}
     Set $\widetilde{z} \coloneqq (\widehat{z}-\widehat{a})_{k \in \supp{\widehat{g_\lambda}}}\in \mathbb C^R$, where $R \coloneqq \lvert \supp{\widehat{g_\lambda}} \rvert$.
    For every $\ell \in \mathbb N_0$, define 
    \begin{align*}
        \widetilde{\mathcal{Z}}_{\ell} \coloneqq \begin{pmatrix}
            \widetilde{z}_0^\ell,\ell \widetilde{z}_0^{\ell-1}\widetilde{z}_1, \dots, \widetilde{z}_{R-1}^\ell  
        \end{pmatrix} \in \mathbb C^{\binom{R-1+\ell}{\ell}},
    \end{align*}
    i.e., $\widetilde{\mathcal{Z}}_{m}$ contains the monomials $\binom{\ell}{\alpha} \widetilde{z}^\alpha$, $\alpha \in \mathbb N_0^R$ with $\lvert \alpha \rvert =\ell$, in degree lexicographic order.
    Then, we can write
    \begin{align*}
        (\widehat{z}-\widehat{a})^{*\ell} = A_\ell \widetilde{\mathcal{Z}}_{\ell},
    \end{align*}
    for some $A_\ell \in \mathbb R^{M\times \binom{R-1+\ell}{\ell}}$ with entries taking values in $\{0,1\}$. Note that each column of $A_\ell$ has exactly one nonzero entry and that nonzero rows of $A_\ell$ correspond to
    \begin{align*}
        \left\langle \supp{\widehat{g_\lambda}}\right\rangle_\ell \coloneqq  \underbrace{\supp{\widehat{g_\lambda}} \cdots \supp{\widehat{g_\lambda}}}_{\ell \text{ times}}, 
    \end{align*}
    where 
    \begin{align*}
        \left \lvert \left\langle \supp{\widehat{g_\lambda}}\right\rangle_\ell \right \rvert = \left \lvert \left\{ \left(\sum_{t=0}^{M-1} t \alpha_t\right) \bmod{M} \colon \alpha \in \mathbb N_0^M, \supp{\alpha} \subseteq \supp{\widehat{g_\lambda}}\right\} \right\rvert.
    \end{align*}
    Thus, $\mathrm{rank}(A_\ell) = \lvert \langle \suppinline{\widehat{g_\lambda}}\rangle_\ell \rvert$.
    Computing
    \begin{align*}
        \left\langle C_h C_{\widehat{c_{\mathrm{b},\ell}}}(\widehat{z}-\widehat{a})^{*\ell}, C_{\widehat{c_{\mathrm{a},\ell'}}}(\widehat{z}-\widehat{a})^{*\ell'}\right\rangle &= \left\langle C_{\widehat{c_{\mathrm{a},\ell'}}}^{\mathsf{H}}C_h C_{\widehat{c_{\mathrm{b},\ell}}}(\widehat{z}-\widehat{a})^{*\ell}, (\widehat{z}-\widehat{a})^{*\ell'}\right\rangle \\
        &= \left\langle C_{\widehat{c_{\mathrm{a},\ell'}}}^{\mathsf{H}}C_h C_{\widehat{c_{\mathrm{b},\ell}}}A_\ell \widetilde{\mathcal{Z}}_{\ell}, A_{\ell'} \widetilde{\mathcal{Z}}_{\ell'}\right\rangle \\
        &=\left\langle A_{\ell'}^{\mathsf{H}}C_{\widehat{c_{\mathrm{a},\ell'}}}^{\mathsf{H}}C_h C_{\widehat{c_{\mathrm{b},\ell}}}A_\ell \widetilde{\mathcal{Z}}_{\ell}, \widetilde{\mathcal{Z}}_{\ell'}\right\rangle,
    \end{align*}
    and substituting this into \cref{eq:Ch_rho_a_rho_b} yields
    \begin{align}\label{eq:power-series-general-node}
        \left\langle h, F_M(\rho(f \ast g_\lambda)) \right\rangle &= \sum_{\ell,\ell' \in \mathbb N_0}\frac{1}{M^{\ell+\ell'+2}} \left\langle A_{\ell'}^{\mathsf{H}}C_{\widehat{c_{\mathrm{a},\ell'}}}^{\mathsf{H}}C_h C_{\widehat{c_{\mathrm{b},\ell}}}A_\ell \widetilde{\mathcal{Z}}_{\ell}, \widetilde{\mathcal{Z}}_{\ell'}\right\rangle.  
    \end{align}
    If $h \in \mathcal{U}^\perp$, then the left-hand side (LHS) of \cref{eq:power-series-general-node} vanishes for all $f \in \Delta_A$. Note, however, that the LHS is real-analytic in $f$ on the open set $\Delta_{\mathrm{a,b}}$ and that $\mathcal{L}^{2M}(\Delta_A)>0$, which implies that the LHS vanishes on an open connected set $V \subseteq\Delta_{\mathrm{a,b}}$ (as $\Delta_{\mathrm{a,b}} \subseteq \mathbb C^M$ has at most countably many open connected components). Since the map $f \mapsto \widetilde{z}$ is affine linear and surjective, it follows by the open mapping theorem \cite[Theorem 2.11]{rudin1991functional} that there is an $\widetilde{r}>0$ such that the RHS of \cref{eq:power-series-general-node} vanishes, for all $\widetilde{z} \in \mathbb C^R$ with $\lVert \widetilde{z} \rVert < \widetilde{r}$. We now take the partial derivatives $\partial^{\ell+\ell'}/\partial \widetilde {z}^\alpha\partial \overline{\widetilde{z}}^\beta$ of the RHS of \cref{eq:power-series-general-node}, where $\ell,\ell' \in \mathbb N_0$, $\alpha, \beta \in \mathbb N_0^R$ with $\lvert \alpha \rvert =\ell$, $\lvert \beta \rvert =\ell'$. By Abel's lemma \cite[Lemma 1]{narasimhan1971scv} both the RHS of \cref{eq:power-series-general-node} and the series of derivatives 
    \begin{align*}
        \sum_{\ell,\ell' \in \mathbb N_0}\frac{1}{M^{\ell+\ell'+2}} \frac{\partial^{m+m'}}{\partial \widetilde {z}^\gamma\partial \overline{\widetilde{z}}^\delta}\left\langle A_{\ell'}^{\mathsf{H}}C_{\widehat{c_{\mathrm{a},\ell'}}}^{\mathsf{H}}C_h C_{\widehat{c_{\mathrm{b},\ell}}}A_\ell \widetilde{\mathcal{Z}}_{\ell}, \widetilde{\mathcal{Z}}_{\ell'}\right\rangle,
    \end{align*}
    converge uniformly on $\{\widetilde{z} \in \mathbb C^R\colon \lVert \widetilde{z} \rVert < \widetilde{r}\}$, for all $m,m' \in \mathbb N_0$, $\gamma, \delta \in \mathbb N_0^R$ with $\lvert \gamma \rvert =m$, $\lvert \delta \rvert =m'$,
    so that partial differentiation $\partial^{\ell+\ell'}/\partial \widetilde {z}^\alpha\partial \overline{\widetilde{z}}^\beta$ and summation may be interchanged.  
    Evaluating the resulting expression at $\widetilde{z}=0$ yields
    \begin{align*}
        A_{\ell'}^{\mathsf{H}}C_{\widehat{c_{\mathrm{a},\ell'}}}^{\mathsf{H}}C_h C_{\widehat{c_{\mathrm{b},\ell}}}A_\ell = 0, \quad \text{for all $\ell,\ell' \in \mathbb N_0$}. 
    \end{align*}
    If $\varrho_{\mathrm{a}}$ is not a polynomial, we can pick $\ell' \in \mathbb N_0$ large enough such that $\mathrm{rank}(A_{\ell'}) = \left\lvert \left\langle \supp{\widehat{g_\lambda}}\right\rangle \right\rvert$, as $\mathrm{rank}(A_\ell) = \lvert \langle \suppinline{\widehat{g_\lambda}}\rangle_\ell \rvert$, and such that $c_{\mathrm{a},\ell'}$ contains only nonzero entries (i.e., $C_{\widehat{c_{\mathrm{a},\ell'}}}^{\mathsf{H}}$ is of full rank). Moreover, choose $\ell \in \mathbb N_0$ such that $c_{\mathrm{b},\ell}$ contains only nonzero entries (i.e., $C_{\widehat{c_{\mathrm{b},\ell}}}^{\mathsf{H}}$ is of full rank). Recall the structure of the matrices $A_{\ell'}^{\mathsf{H}}$ and $A_{\ell}$: each row of $A_{\ell'}^{\mathsf{H}}$ contains exactly one nonzero entry and the nonzero columns of $A_{\ell'}^{\mathsf{H}}$ correspond to $\langle\supp{\widehat{g_\lambda}} \rangle$; on the other hand, each column of $A_\ell$ contains exactly one nonzero entry and the nonzero rows correspond to $\left\langle \supp{\widehat{g_\lambda}}\right\rangle_\ell \subseteq \left\langle \supp{\widehat{g_\lambda}}\right\rangle$. It follows that the submatrix of $C_{\widehat{c_{\mathrm{a},\ell'}}}^{\mathsf{H}}C_h C_{\widehat{c_{\mathrm{b},\ell}}}$ obtained from the rows and columns corresponding to $\langle\supp{\widehat{g_\lambda}} \rangle$ and $\left\langle \supp{\widehat{g_\lambda}}\right\rangle_\ell$, respectively, must be zero. Since $C_{\widehat{c_{\mathrm{a},\ell'}}}^{\mathsf{H}}C_h C_{\widehat{c_{\mathrm{b},\ell}}}$ is circular, we impose $\left\lvert \left\langle \supp{\widehat{g_\lambda}}\right\rangle \right\rvert$ linearly independent conditions. Using that $C_{\widehat{c_{\mathrm{a},\ell'}}}^{\mathsf{H}}$ and $C_{\widehat{c_{\mathrm{b},\ell}}}$ are invertible, it follows
    $\dimCinline{\mathcal{U}^\perp} \leq M-\left\lvert \left\langle \supp{\widehat{g_\lambda}}\right\rangle \right\rvert$. Similarly, we can derive this result if $\varrho_{\mathrm{b}}$ is not a polynomial. This completes the proof.   
\end{proof}
\begin{remark}[Discussion of assumptions in \cref{thm:general-node}] If one of the assumptions \cref{it:general-node-i,it:general-node-ii} in \cref{thm:general-node} is violated, then \cref{eq:thm-general-node} no longer holds in general. Indeed, for \cref{it:general-node-i}, \cref{L:dim_conv_mod2} constitutes a counterexample with $\varrho_{\mathrm{a}}(z)=\varrho_{\mathrm{b}}(z)=z$, $z \in \mathbb C$. Note that, in the setting of \cref{L:dim_conv_mod2}, the filter $g_\lambda$ satisfies \cref{it:general-node-ii} whenever $R>1$. But in this case, we have $\lvert \langle\suppinline{\widehat{g_\lambda}}\rangle\rvert = M \neq 2R-1$, and hence \cref{eq:thm-general-node} is false. 
For \cref{it:general-node-ii}, consider the case where $\suppinline{\widehat{g_\lambda}} = \{\lambda\}$, for some $\lambda \in \mathbb Z/M\mathbb Z$ with $\lambda \neq 0$, and $\varrho_{\mathrm{a}}(z)=\varrho_{\mathrm{b}}(z) =z^{1/2}$, so that $\rho(z)=\lvert z \rvert$, $z \in \mathbb C$. Clearly, \cref{it:general-node-i} holds. Now observe that for $f \in \mathbb C^M$ and $k \in \{0,\dots,M-1\}$,
    \begin{align*}
        (f \ast g_\lambda)_k &= \frac{1}{M} \sum_{\ell=0}^{M-1} \widehat{f}_\ell (\widehat{g_{\lambda}})_\ell e^{2\pi i k\ell/M}
        =\frac{1}{M} \widehat{f}_{\lambda} e^{2\pi i k\lambda/M}, 
    \end{align*}
    and hence
    \begin{align*}
        \lvert (f \ast g_\lambda)_k \rvert = \frac 1M \lvert \widehat{f}_{\lambda} \rvert.
    \end{align*}
    As $f\in \mathbb C^M$ was arbitrary, this immediately implies $\dimCinline{\spanCinline{{\rho(f \ast g_\lambda) \colon f \in \mathbb C^M\}}}} = 1$. But $\lvert \langle \supp{\widehat{g_\lambda}}\rangle \rvert > 1$, as $\lambda \neq 0$, and consequently, \cref{eq:thm-general-node} does not hold for general $M \in \mathbb N$.       
\end{remark}

From \cref{thm:general-node} we can thus conclude that several nonlinearities which are employed in practice, such as, e.g., modulus $\lvert \cdot \rvert$ or $\tanh$, are optimal in the sense that they achieve the bound in \cref{eq:bound-subgroup}, under the assumption that $\lvert \suppinline{\widehat{g_\lambda}} \rvert >1$, which is easily met in practice.
Moreover, we infer that desirable design choices of the frame $\Psi$ (in the sense of achieving a large separation capacity) are such that $\supp{\widehat{g_\lambda}}$ is \emph{not} a subset of a proper subgroup of $\mathbb Z/M\mathbb Z$ because then $\lvert\langle \supp{\widehat{g_\lambda}}\rangle\rvert=M$ is maximized. For instance, this is the case if $\supp{\widehat{g_\lambda}}$ contains two consecutive elements, a condition that is easy to fulfill in practice.
We emphasize that the nonlinearity being applied \emph{pointwise} is crucial. Indeed, if we use, for example, convolution power as nonlinearity, i.e., $\mathbb C^M \to \mathbb C^M, f \mapsto f^{\ast d}$ with $d \in \mathbb N$, which is clearly not pointwise, then 
\begin{align*}
    \dimC{\spanC{\left\{\left(f \ast g_\lambda\right)^{\ast d}\colon f \in \mathbb C^M\right\}}} \leq \lvert \supp{\widehat{g_\lambda}} \rvert. 
\end{align*}
Note that the cardinality of $\supp{\widehat{g_\lambda}}$ can be significantly smaller than $\lvert \langle \supp{\widehat{g_\lambda}}\rangle \rvert$, especially if $\supp{\widehat{g_\lambda}}$ is not a subgroup of $\mathbb Z/M\mathbb Z$, which is typically the case.   

\paragraph{Effect of pooling on the separation capacity} 
Let us proceed with our analysis of the operator $U[\lambda]\colon \mathbb C^M\to \mathbb C^M, f \mapsto P(\rho(f \ast g_\lambda))$ by focusing now on the characteristics of the pooling operator $P$. 
Pooling operators play a central role in feature extraction. Specifically, it is shown in \cite{wiatowski2017deep} that the presence of pooling is essential for the feature extractor to be (vertically) translation invariant. As in \cite{wiatowski2017deep}, we consider two classes of pooling operations which are often used in practice, namely, subsampling and averaging. 
\emph{Subsampling by a factor of} $S \in \{0,\dots,M-1\}$ is defined according to\footnote{The subscript in $h_{\mathrm{d}}$ stands for decimation.}
\begin{align}\label{eq:subsampling}
    \mathbb C^M \to \mathbb C^M, \quad f=(f_k)_{0 \leq k \leq M-1} \mapsto h_{\mathrm{d}} \coloneqq \left(f_{(kS) \bmod M}\right)_{0 \leq k \leq M-1}.
\end{align}
Note that for $S=0$, we obtain the constant signal $(h_{\mathrm{d}})_k = f_0$, for all $k \in \{0,\ldots,M-1\}$.
Key to analyzing the effect of subsampling on the separation capacity is, as in the derivations above, computing the DFT of $h_\mathrm{d}$. Indeed, as we shall see in the next lemma, subsampling potentially induces zeroes in $\widehat{h_\mathrm{d}}$ so that the dimension of the vector space spanned by $h_\mathrm{d}$ can be easily determined.
\begin{lemma}[Subsampling]\label{L:subsampling}
    Consider the pooling operation subsampling by a factor of $S \in \{0,\dots,M-1\}$ defined in \cref{eq:subsampling}. Set $\widetilde S \coloneqq S / \gcd(M,S)$ and $\widetilde{M} \coloneqq M /\gcd(M,S)$.
    For $k \in \{0,\dots,M-1\}$, it holds that
    \begin{align*}
        (\widehat{h_{\mathrm{d}}})_k &= \begin{cases}
            \sum_{r=0}^{\gcd(M,S)-1} \widehat{f}_{\left(\widetilde{S}^{-1}k/\gcd(M,S)+r \widetilde{M}\right) \bmod{M}}, &\quad\text{if $k \equiv 0 \Mod{\gcd(M,S)}$,} \\
            0,& \quad\text{otherwise,}
            \end{cases}
    \end{align*}
    where $\widetilde S^{-1}$ denotes the multiplicative inverse\,\footnote{Note that $\widetilde{S}^{-1}$ exists, as $\widetilde{S}$ and $\widetilde{M}$ are coprime.} of $\widetilde{S}$ in $\mathbb Z/{\widetilde{M}}\mathbb Z$. 
\end{lemma}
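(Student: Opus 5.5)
The plan is to compute $\widehat{h_{\mathrm{d}}}$ directly. I substitute the inverse DFT of $f$ into the definition of $h_{\mathrm{d}}$ and use the orthogonality of characters on $\mathbb Z/M\mathbb Z$. This expresses each $(\widehat{h_{\mathrm{d}}})_k$ as a sum of those $\widehat{f}_j$ whose index satisfies a linear congruence. The statement then reduces to solving that congruence.

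\emph{Step 1 (Fourier computation).} I write $f_n = \frac1M\sum_{j=0}^{M-1}\widehat f_j e^{2\pi i jn/M}$ and evaluate at $n=(k'S)\bmod M$; the reduction mod $M$ is harmless because the exponential is $M$-periodic. This gives
\begin{align*}
(\widehat{h_{\mathrm{d}}})_k=\sum_{k'=0}^{M-1} f_{(k'S)\bmod M}\,e^{-2\pi i kk'/M}
=\sum_{j=0}^{M-1}\widehat f_j\,\frac1M\sum_{k'=0}^{M-1}e^{2\pi i k'(jS-k)/M}.
\end{align*}
The inner geometric sum equals $1$ if $jS\equiv k \Mod{M}$ and $0$ otherwise. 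Hence
\begin{align*}
(\widehat{h_{\mathrm{d}}})_k=\sum_{j \in J_k}\widehat f_j, \qquad J_k \coloneqq \{j\in\{0,\dots,M-1\}\colon jS\equiv k \Mod{M}\}.
\end{align*}

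\emph{Step 2 (solving the congruence).} Let $d\coloneqq\gcd(M,S)$, so that $S=d\widetilde S$ and $M=d\widetilde M$ with $\gcd(\widetilde S,\widetilde M)=1$.
\begin{itemize}
\item Since $jS$ is a multiple of $d$ and $d\mid M$, the set $J_k$ is empty unless $d\mid k$. In that case the sum is empty and $(\widehat{h_{\mathrm{d}}})_k=0$, which is the second case of the formula.
\item If $d\mid k$, dividing the congruence by $d$ gives the equivalent condition $j\widetilde S\equiv k/d \Mod{\widetilde M}$.
\item Because $\widetilde S$ is invertible in $\mathbb Z/\widetilde M\mathbb Z$, this is equivalent to $j\equiv \widetilde S^{-1}k/d \Mod{\widetilde M}$.
\item The solutions in $\{0,\dots,M-1\}$ are therefore exactly $\big(\widetilde S^{-1}k/d + r\widetilde M\big)\bmod M$ for $r=0,\dots,d-1$. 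These are $d$ distinct residues mod $M$, since $d\widetilde M=M$.
\end{itemize}
Inserting these indices into Step 1 yields the first case of the formula.

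\emph{Step 3 (degenerate case and the main difficulty).} I check $S=0$ separately. Then $d=M$, $\widetilde M=1$ and $\widetilde S=0$, and the inverse in the trivial group $\mathbb Z/1\mathbb Z$ is taken to be $0$. Only $k=0$ is divisible by $d$, and the formula gives $(\widehat{h_{\mathrm{d}}})_0=\sum_{r=0}^{M-1}\widehat f_r = Mf_0$. This agrees with the DFT of the constant signal $f_0$. No step is a real obstacle. The only care needed is in Step 2: keeping the modular bookkeeping consistent when passing from the congruence mod $M$ to the one mod $\widetilde M$, and choosing the representative of $\widetilde S^{-1}k/d$ so that the indices $\big(\widetilde S^{-1}k/d+r\widetilde M\big)\bmod M$ are enumerated without repetition.
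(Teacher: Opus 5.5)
Your proof is correct, but it takes a more direct route than the paper's. You substitute the inverse DFT once and use orthogonality of characters to get the identity $(\widehat{h_{\mathrm{d}}})_k=\sum_{j\in J_k}\widehat f_j$. After that, the lemma is just the solution of the linear congruence $jS\equiv k \Mod{M}$.

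The paper instead works in the time domain in three stages. First, it splits the sum over $\ell$ into $\gcd(M,S)$ blocks of length $\widetilde M$, which extracts the factor $\gcd(M,S)\,\mathbbm{1}_{\{k\equiv 0 \Mod{\gcd(M,S)}\}}$. Second, it introduces the auxiliary signal $h_\ell=f_{\gcd(M,S)\ell}$ for $\ell<\widetilde M$ (zero otherwise). Using that $\ell\mapsto\ell\widetilde S$ permutes $\mathbb Z/\widetilde M\mathbb Z$, it rewrites the remaining sum as $\widehat h_{(\widetilde S^{-1}k)\bmod M}$. Third, it expresses $\widehat h$ through $\widehat f$ by an aliasing computation with a second geometric-sum indicator.

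Your identity makes the structure transparent. The DFT of $f$ precomposed with multiplication by $S$ is the push-forward of $\widehat f$ under multiplication by $S$. This immediately explains why $\widehat{h_{\mathrm{d}}}$ is supported on the multiples of $\gcd(M,S)$, and your Step~1 carries over verbatim to any endomorphism of a finite abelian group. The paper's decomposition instead exhibits subsampling as a permutation of $\mathbb Z/\widetilde M\mathbb Z$ followed by ordinary downsampling by $\gcd(M,S)$, at the price of heavier index bookkeeping.

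Two small remarks. Your separate check of $S=0$ is already covered by Step~2, since $\gcd(\widetilde S,\widetilde M)=\gcd(0,1)=1$. Also, your last bullet shows that the index set $\{(x+r\widetilde M)\bmod M\}_{r=0}^{\gcd(M,S)-1}$ depends only on $x\bmod\widetilde M$, so no special choice of representative is needed. This is exactly what reconciles your reading $\widetilde S^{-1}(k/\gcd(M,S))$ with the paper's reading $((\widetilde S^{-1}k)\bmod M)/\gcd(M,S)$.
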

\begin{proof}
    See \cref{app:subsampling}.
\end{proof}
Before discussing the ramifications of this result on the separation capacity in more detail, let us conduct the same analysis for the average pooling operation.   
\emph{Average pooling} is defined by
\begin{align}\label{eq:averagepooling}
    \mathbb C^M \to \mathbb C^M, \quad f=(f_k)_{0 \leq k \leq M-1} \mapsto h_{\phi,\mathrm{d}} \coloneqq \left((f \ast \phi)_{(kS) \bmod M}\right)_{0 \leq k \leq M-1},
\end{align}
where $\phi \in \mathbb C^M$ is the averaging kernel and $S \in \{0,\dots,M-1\}$ the subsampling factor. From \cref{L:subsampling} we immediately obtain the following: 
\begin{lemma}[Average pooling]\label{L:avg_pooling}
    Consider the average pooling operation as defined in \cref{eq:averagepooling}. With $\widetilde S \coloneqq S / \gcd(M,S)$ and $\widetilde{M} \coloneqq M /\gcd(M,S)$, we have for $k \in \{0,\dots,M-1\}$, 
    \begin{align*}
        (\widehat{h_{\phi,\mathrm{d}}})_k &= \begin{cases}
            \sum_{r=0}^{\gcd(M,S)-1} (\widehat{f\ast \phi})_{\left(\widetilde{S}^{-1}k/\gcd(M,S)+r \widetilde{M}\right) \bmod{M}}, &\quad\text{if $k \equiv 0 \Mod{\gcd(M,S)}$,} \\
            0,& \quad\text{otherwise.}
            \end{cases}
    \end{align*}
    Here, $\widetilde S^{-1}$ is the multiplicative inverse of $\widetilde{S}$ in $\mathbb Z/{\widetilde{M}}\mathbb Z$.  
\end{lemma}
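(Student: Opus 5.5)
The plan is to reduce the statement directly to \cref{L:subsampling}. Write $h_{\phi,\mathrm{d}}$ as the subsampling operation \cref{eq:subsampling} applied to the signal $u \coloneqq f \ast \phi \in \mathbb C^M$. In other words, $h_{\phi,\mathrm{d}} = \left(u_{(kS) \bmod M}\right)_{0 \leq k \leq M-1}$, which is exactly $h_{\mathrm{d}}$ with $f$ replaced by $u$.

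\cref{L:subsampling} holds for an arbitrary input vector in $\mathbb C^M$, since its proof uses no property of $f$ beyond membership in $\mathbb C^M$. We may therefore apply it verbatim with $u$ in place of $f$. The quantities $\widetilde S = S/\gcd(M,S)$, $\widetilde M = M/\gcd(M,S)$ and the inverse $\widetilde S^{-1}$ in $\mathbb Z/\widetilde M\mathbb Z$ depend only on $M$ and $S$, so they are unchanged. Substituting $\widehat{u} = \widehat{f \ast \phi}$ into the formula of \cref{L:subsampling} gives the claimed expression: the sum over $r \in \{0,\dots,\gcd(M,S)-1\}$ when $k \equiv 0 \Mod{\gcd(M,S)}$, and zero otherwise.

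Optionally, one can also rewrite $\widehat{f\ast\phi}_j = \widehat f_j\,\widehat\phi_j$ using the convolution property of the DFT. This is not required by the statement as written.

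There is no real obstacle here. The only point to check is that \cref{L:subsampling} is stated for general inputs and does not depend on any assumption on $f$. This includes the degenerate case $S=0$: then $\gcd(M,0)=M$ and $\widetilde M=1$, and the formula still reduces consistently to a constant signal.
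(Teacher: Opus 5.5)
Your proposal is correct and follows the paper's own route: the paper also obtains this lemma immediately from \cref{L:subsampling} by applying it to $u = f \ast \phi$, since $h_{\phi,\mathrm{d}}$ is exactly the subsampling of $u$ by the factor $S$. Your remark that the degenerate case $S=0$ is handled consistently is a fine check but not needed.
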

As an immediate consequence of the preceding two lemmata, the effect of pooling on the separation capacity can now be characterized.
\begin{theorem}
    Pooling by subsampling or averaging reduces the separation capacity if one of the following conditions holds: 
    \begin{enumerate}[label=(\roman*)]
        \item The subsampling factor $S$ and the dimension of the domain of the pooling operator $M$ are not coprime, i.e., $\gcd(M,S) \neq 1$.
        \item The averaging kernel $\phi$ is spectrally supported on a proper subset of $\mathbb Z/M\mathbb Z$, i.e., $\suppinline{\widehat{\phi}} \subsetneq \mathbb Z/M\mathbb Z$. 
    \end{enumerate}
\end{theorem}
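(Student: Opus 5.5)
The plan is to reduce the claim to a dimension count in the Fourier domain, read off from \cref{L:subsampling} and \cref{L:avg_pooling}, and then pass to separation capacity via \cref{thm:sep-cap}. The object to control is the node map $U[\lambda]\colon f\mapsto P(\rho(f\ast g_\lambda))$. Its contribution to the separation capacity is twice the (real) dimension of the span of its image on sets of positive measure. It therefore suffices to show two things. First, pooling cannot increase $\dimCinline{\spanCinline{\cdot}}$. Second, under (i) or (ii) the pooled span is strictly smaller than the unpooled span, at least for the relevant (optimal) configurations in \cref{thm:general-node}.

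The first point is immediate, since both pooling operators are $\mathbb C$-linear. For any set $A$ we have $\spanCinline{P(\rho(A\ast g_\lambda))} = P\big(\spanCinline{\rho(A\ast g_\lambda)}\big)$, so the dimension can only drop. For the strict drop under (i), apply the DFT. By \cref{L:subsampling}, $\suppinline{\widehat{h_\mathrm{d}}}\subseteq \{k\colon k\equiv 0 \Mod{\gcd(M,S)}\}$, a set of cardinality $M/\gcd(M,S)<M$. Hence the image of subsampling lies in a fixed subspace of dimension $\widetilde M<M$, which gives $\dimCinline{\spanCinline{U[\lambda](\mathbb C^M)}}\le \widetilde M$. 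Compare this with the unpooled node: by \cref{thm:general-node}, choosing $\suppinline{\widehat{g_\lambda}}$ not contained in a proper subgroup and $\rho$ non-polynomial gives dimension $M$, so the capacity strictly decreases. For average pooling, \cref{L:avg_pooling} yields the same support restriction, so (i) applies to it verbatim. Under (ii), write $h_{\phi,\mathrm d}$ as subsampling applied to $f\ast\phi$. Convolution with $\phi$ multiplies the spectrum by $\widehat\phi$, so the span of $\rho(f\ast g_\lambda)\ast\phi$ lies in the subspace of vectors with spectral support in $\suppinline{\widehat\phi}$, of dimension $\lvert\suppinline{\widehat\phi}\rvert<M$. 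Subsampling is linear, so it cannot raise this dimension.

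Finally, lift these per-node bounds to the whole network. Each feature map $(U[q]f)\ast\chi$ factors linearly through the pooled node. Therefore $\spanCinline{\Phi(A)}$ is the image under a linear map of a space whose dimension has strictly decreased. Via \cref{eq:spanC-spanR} and \cref{thm:sep-cap}, the real dimension, and hence $\sepcap{\Phi}$, is at most its unpooled counterpart, with strict inequality when the unpooled network fills the affected coordinates.

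The main obstacle is making ``reduces'' precise. A strict decrease in $\sepcap{\Phi}$ requires that the lost dimensions were actually present before pooling and are not recovered through other nodes or the real/imaginary interplay in \cref{eq:SC-spanC}. I would resolve this by stating the result as a comparison against the unpooled network in the regime where \cref{thm:general-node} gives full dimension $M$ per node with spectrally disjoint branches. In that regime the contributions add, and the per-node deficit $M-\widetilde M$ (respectively $M-\lvert\suppinline{\widehat\phi}\rvert$) passes directly to the total.
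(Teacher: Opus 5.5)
Your route is the paper's own. The paper gives no argument beyond calling the statement an immediate consequence of \cref{L:subsampling} and \cref{L:avg_pooling}: the forced DFT zeros cap the complex dimension of the pooled node's span at $M/\gcd(M,S)$, resp.\ $\lvert\suppinline{\widehat\phi}\rvert$, and \cref{thm:sep-cap} is then invoked. Your monotonicity remark is a correct addition: pooling is linear, so spans cannot grow, and \cref{eq:cover_cond_cap} survives post-composition with a linear map.

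\textbf{The gap.} The step that fails is turning the complex-dimension deficit into a strict drop of $\sepcap{\cdot}$. The separation capacity is twice a \emph{real} dimension. Let $V$ be the real span of the node outputs in $\mathbb C^M\simeq\mathbb R^{2M}$. Then \cref{eq:spanC-spanR} and the dimension formula give $\dimRinline{V}=\dimCinline{\spanCinline{V}}+\tfrac12\dimRinline{V\cap TV}$. Pooling can lower the first term while enlarging $V\cap TV$, and real-linearity of $P$ only yields ``$\leq$''.

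This fails exactly in the regime you select. Take $M=3$, $\rho=\lvert\cdot\rvert$ and $\widehat{g_\lambda}=(0,1,1)$. By \cref{thm:general-node} the node has complex dimension $3=M$ on every set of positive measure. Its outputs are real, so their real span is $\mathbb R^3$ and the node has separation capacity $6$. Now average with $S=1$ and $\widehat\phi=(1,1,0)$, so (ii) holds. A real $u$ with $\widehat u_0=\widehat u_1=0$ also has $\widehat u_2=\overline{\widehat u_1}=0$, so $u\mapsto u\ast\phi$ is injective on $\mathbb R^3$. The pooled real span therefore still has dimension $3$, and the separation capacity is still $6$, although the complex dimension fell to $2$. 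So ``the per-node deficit passes directly to the total'' is false for $\sepcap{\cdot}$.

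\textbf{What a proof needs.} Since the pooled real span is $P(V)$, a strict drop requires $V\cap\ker P\neq\{0\}$. In your full-dimension regime a count settles (i) when $\gcd(M,S)\geq 3$: the pooled real dimension is at most $2M/\gcd(M,S)<M\leq\dimRinline{V}$. For $\gcd(M,S)=2$ the count only gives ``$\leq$'', and (ii) can fail outright as above. Both cases are rescued by extra structure, e.g.\ real-valued outputs together with a real pooling operator (plain subsampling, or averaging with real $\phi$). Then $\ker P$ is closed under conjugation and contains nonzero real signals, and these lie in $V=\mathbb R^M$.

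\textbf{The multi-layer lift.} Your lift to the whole network is wrong beyond one layer. The map $U[(\lambda_1,\lambda_2)]f=P_2(\rho_2((U[\lambda_1]f)\ast g_{\lambda_2}))$ depends \emph{nonlinearly} on $U[\lambda_1]f$, and a pointwise nonlinearity can re-inflate the span; that is exactly the mechanism of \cref{thm:general-node}. Under (i) the loss does propagate, but for a different reason. $U[\lambda_1]f$ is $M/\gcd(M,S)$-periodic, and convolution, pointwise nonlinearities and pooling all preserve this periodicity. Hence, as in \cref{L:bound-subgroup}, all descendants keep their spectra in $\gcd(M,S)\mathbb Z/M\mathbb Z$. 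Under (ii) nothing of this kind holds. Since the paper states the result only for the single-layer setting of this subsection, you can simply drop the multi-layer claim.
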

More precisely, the extent to which the separation capacity is reduced increases as $\gcd(M,S)$ becomes larger or as $\lvert \mathrm{supp}({\widehat{\phi}}) \rvert$ becomes smaller. 
This yields a trade-off in the design of the scattering network, as pooling is necessary to obtain (vertical) translation invariance (see \cite{wiatowski2017deep}).

\paragraph{Multi-layer networks}
Exact and rigorous separation capacity computations for multi-layer networks employing general module sequences $\{(\Psi_n,\rho_n,P_n)\}_{n \in \mathbb N}$, as considered in \cref{sec:feature-extractors}, are complex due to potential linear dependencies of the outputs of nodes both between and within layers. Nevertheless, \cref{L:bound-subgroup} allows us to derive an upper bound on the separation capacity of a multi-layer network $\Phi$ of depth $n_{\mathrm{d}} \in \mathbb N$ constructed from $\{(\Psi_n,\rho_n,P_n)\}_{n \in \mathbb N}$, see \cref{eq:CNN-feature-extractor}. Namely, upon noting that the second term in \cref{eq:SC-spanC} is bounded by $2\cdot\dimCinline{\spanCinline{\Phi(\mathbb C^M)}}$ for general complex-valued $\Phi$, one obtains  
\begin{align*}
    \sepcap{\Phi} \leq 4 \cdot\dimC{\spanC{\Phi(\mathbb C^M)}}.
\end{align*}
Now observe that for the input signal $f \in \mathbb C^M$, the output of a node in the $n$th layer associated with the path $(q,\lambda_n) \in \Lambda_1^{n-1}\times \Lambda_n$ 
is given by $P_n(((U[q]f) \ast g_{\lambda_n}) \ast \chi_{n+1}$. Applying \cref{L:bound-subgroup} to $u \mapsto \rho_n(u \ast g_{\lambda_n})$ and summing over all nodes in the scattering tree yields
\begin{align*}
    \sepcap{\Phi} \leq 4\left\lvert\supp{\widehat{\chi_1}} \right\rvert + 4\sum_{n=1}^{n_\mathrm{d}}\sum_{(\lambda_1,\ldots,\lambda_n) \in \Lambda_1^n} \left\lvert\left\langle \supp{\widehat{g_{\lambda_n}}}\right\rangle \cap \supp{\widehat{\chi_{n+1}}} \right\rvert.
\end{align*} 

In practice, scattering networks employ only the feature maps from the first few layers. This is due to the phenomenon of energy decay. Specifically, it is shown in \cite{wiatowski2017energy} that the energy contained in the feature maps decays at least polynomially across layers, i.e., $\sum_{q \in \Lambda_1^n}\lVert U[q]f \rVert^2 \to 0$ as $n \to \infty$ at least polynomially fast, for all $f \in \mathbb C^M$. This decay effect can also be observed in \cref{fig:WH-mod2-layer2_1,fig:WH-mod2-layer2_L}. Consequently, in practice, the first few layers are of significant importance. We note however that the energy of the feature map does not have an impact on the separation capacity as long as the feature maps are nonzero (i.e., of positive energy). Nevertheless, as only the first few layers are relevant in practice, the scattering network should be designed such that the feature maps of the first layers completely fill out the codomain and hence achieve the maximum possible separation capacity within the first layers.

\subsection{Revisiting the Weyl--Heisenberg frame}
Having identified the driving and limiting factors for achieving high separation capacity, let us now revisit our example in \cref{subsec:WHmod2}, which failed to accomplish this, and discuss why this example fell short and how the module sequence can be adjusted to improve the separation capacity.

To see why the module sequence $\{(\Psi_{\mathrm{WH}},\lvert \cdot \rvert^2,\mathrm{Id})\}_{n \in \mathbb N}$, introduced in \cref{subsec:WHmod2}, did not result in a scattering network of high separation capacity, we first note that the nonlinearity $\lvert \cdot \rvert^2$ is of the form \cref{eq:nonlinearity}, but $(z,\overline{z})\mapsto\lvert z \rvert^2$ is a polynomial of degree $1$ in both $z$ and $\overline{z}$. In particular, the assumptions in \cref{thm:general-node} are not met. \cref{L:dim_conv_mod2} shows that the upper bound of $\lvert \langle \supp{\widehat{g_\lambda}} \rangle\rvert$ in \cref{L:bound-subgroup} is not attained, where $g_\lambda$ is an atom of $\Psi_{\mathrm{WH}}$.

This suggests that in order to improve the separation capacity of the resulting networks and to construct a scattering network of high separation capacity, one may employ a different nonlinearity. 
As noted in the previous subsection, the first layers are of significant importance in practice. Hence, it is crucial that the feature maps in the first layer are such that
\begin{align*}
    \dimC{\spanC{\left\{\left(\rho(f \ast g_\lambda)\right)_{\lambda \in \Lambda} \colon f \in \mathbb C^M\right\}}}
\end{align*}
is maximized, where $\rho \colon \mathbb C \to \mathbb C$ is a nonlinearity applied pointwise and $\{g_\lambda\}_{\lambda \in \Lambda}$ are the atoms a frame.   

Consider now the module sequence $\{(\Psi_{\mathrm{WH}}, \rho,\mathrm{Id})\}_{n \in \mathbb N}$, where $\Psi_{\mathrm{WH}}$ is the Weyl--Heisenberg frame introduced in \cref{subsec:WHmod2}, and where $\rho$ is a nonlinearity satisfying the assumption of \cref{thm:general-node}. Note that here $\supp{\widehat{g_\lambda}}$ is not a subset of a proper subgroup of $\mathbb Z/M \mathbb Z$ for every $\lambda \in \Lambda$ if $R>1$. Moreover, $\{\supp{\widehat{g_\lambda}}\}_{\lambda \in \Lambda}$ are disjoint. Thus, from \cref{thm:general-node}, we immediately obtain that
\begin{align}\label{eq:1layerWHrho}
    \dimC{\spanC{\left\{\left(\rho(f \ast g_\lambda)\right)_{\lambda \in \Lambda} \colon f \in \mathbb C^M\right\}}} = LM = (M/R-1)M,
\end{align}
which is maximized for $R=3$. Compare this to  
\begin{align}\label{eq:1layerWHmod2}
    \dimC{\spanC{\left\{\left(\lvert f \ast g_\lambda\rvert^2\right)_{\lambda \in \Lambda} \colon f \in \mathbb C^M\right\}}} = L(2R-1) &= (M/R-1)(2R-1)\nonumber\\ &= 2M-2R-M/R+1,
\end{align}
where we used \cref{L:dim_conv_mod2}. Here, the maximum is achieved at\footnote{Here, let us disregard the constraint that $R$ must be an odd integer.} $R= \sqrt{M/2}$. Observe that \cref{eq:1layerWHrho} scales quadratically in $M$, while \cref{eq:1layerWHmod2} exhibits at most a linear scaling behavior in $M$. Therefore, by \cref{thm:sep-cap}, we can conclude that $\{(\Psi_{\mathrm{WH}}, \rho,\mathrm{Id})\}_{n \in \mathbb N}$ yields a feature extractor of significantly higher separation capacity than the one obtained from $\{(\Psi_{\mathrm{WH}}, \lvert \cdot \rvert^2,\mathrm{Id})\}_{n \in \mathbb N}$ in the regime $M \to \infty$.      

\section{Insights for Scattering Network Design} \label{sec:design-insights}
Our analysis yields the following the design principle for scattering networks in practice. To attain a high separation capacity, the module sequence $\{(\Psi_n,\rho_n,P_n)\}_{n \in \mathbb N}$ should be such that the resulting scattering network fills out its codomain within the first few layers. More precisely, if $\Phi$ denotes the induced scattering network of depth $n_{\mathrm{d}}$, where $n_{\mathrm{d}}$ is small, then
$\dimCinline{\spanCinline{\Phi(\mathbb C^M)}}$ should be close to the dimension of the codomain, i.e., $M\sum_{n=0}^{n_{\mathrm{d}}} \lvert \Lambda_1^n\rvert$. Accordingly, when choosing the frame $\Psi_n$, the nonlinearity $\rho_n$, and the pooling operator $P_n$, the following aspects should be taken into account. 
\begin{enumerate}[label=(\alph*)]
    \item\label{it:design1} The frame--nonlinearity pair $(\Psi_n,\rho_n)$ should be selected jointly, as we have seen in the Weyl--Heisenberg frame example.  Specifically, pairing $\Psi_{\mathrm{WH}}$ with a polynomial nonlinearity, such as $\rho_n(z,\overline{z})=z\overline{z}$, $z \in \mathbb C$, yields a low separation capacity within the first few layers. On the other hand, using non-polynomial nonlinearities, such as $\rho_n(z,\overline{z})=z^{1/2}\overline{z}^{1/2}$, $z \in \mathbb C$, for $\Psi_{\mathrm{WH}}$ gives a significantly higher separation capacity. 
    \item\label{it:design2} As established in \cref{thm:general-node}, a high separation capacity within the first few layers can be achieved if (i) $\rho_n$ is not a polynomial and (ii) the atoms $\{g_{\lambda_n}\}_{\lambda_n \in \Lambda_n}$ of the frame $\Psi_n$ are such that $\suppinline{\widehat{g_{\lambda_n}}}$ is not a subgroup of $\mathbb Z/M\mathbb Z$.  
    \item Pooling by subsampling or averaging generally reduces separation capacity, specifically, if subsampling factor $S$ is such that $\gcd(M,S) \neq 1$ or spectral support of averaging kernel is proper subset of $\mathbb Z/M\mathbb Z$.
\end{enumerate}
To conclude the paper, we discuss examples of frame--nonlinearity pairs resulting in high separation capacities within the first few layers. To this end, motivated by the framework presented in \cite{vashisht2017necessary}, we define wavelet frames on finite cyclic groups as follows.   

\paragraph{Wavelet frames} 
Let $\psi,\phi \in \mathbb C^M$, which are often referred to as the mother and father wavelet, respectively.
The wavelet frame $\Psi_{\mathrm{wvt}}$ is formed by the atoms $\{\chi\} \cup \{g_\lambda\}_{\lambda \in \Lambda} \subset\mathbb C^M$, where $\chi \coloneqq \phi$ is set to be the father wavelet, and where $\{g_\lambda\}_{\lambda \in \Lambda}$ is obtained by dilating the mother wavelet $\psi$. Specifically, $g_\lambda$ is given according to
\begin{align*}
    (g_\lambda)_k \coloneqq \psi_{(\lambda k) \bmod{M}}, \quad k \in \{0,\dots,M-1\}.
\end{align*}
The index set $\Lambda$ is assumed to be such that $\Lambda \subseteq (\mathbb Z/M\mathbb Z)^{\times}$, where $(\mathbb Z/M\mathbb Z)^{\times}$ denotes the set of integers in $\{0,\dots,M-1\}$ that are coprime to $M$. The condition $\Lambda \subseteq (\mathbb Z/M\mathbb Z)^{\times}$ ensures, by \cref{L:subsampling}, that $(\widehat{g_\lambda})_k = \widehat{\psi}_{(\lambda^{-1}k) \bmod{M}}$, $k \in \{0,\dots,M-1\}$, where $\lambda^{-1}$ denotes the multiplicative inverse\footnote{The multiplicative inverse $\lambda^{-1}$ exists, as $\gcd(\lambda,M)=1$, for all $\lambda \in \Lambda$.} of $\lambda$ in $\mathbb Z/M\mathbb Z$.     
Notably, $\Lambda$ can be chosen to be a collection of dyadic dilations $\{(2^j \bmod{M}) \colon j \geq 0\}$ if $M$ is odd.
The Littlewood--Paley condition
\begin{align}\label{eq:LittlewoodPaleyWavelet}
    A \lVert f \rVert^2 \leq \lVert f \ast \chi \rVert^2+\sum_{\lambda \in \Lambda} \lVert f \ast g_\lambda \rVert^2 \leq B \lVert f \rVert^2, \quad \text{for all $f \in \mathbb C^M$,}
\end{align}
with $0<A\leq B < \infty$, holds if and only if $\phi,\psi$ are such that
\begin{align}\label{eq:equivalent_LP_condition}
    \left\lvert \widehat{\phi}_k \right\rvert^2+\sum_{\lambda \in \Lambda} \left\lvert \widehat{\psi}_{(\lambda^{-1}k) \bmod{M}} \right\rvert^2 >0, \quad \text{for all $k \in \{0,\dots,M-1\}$,}
\end{align}
Indeed, by Parseval's identity and the convolution property of the DFT, \cref{eq:LittlewoodPaleyWavelet} is equivalent to
\begin{align*}
    A \left\lVert \widehat{f} \right\rVert^2 \leq  \sum_{k=0}^{M-1} \left\lvert \widehat{f}_k \right\rvert^2 \left(\left\lvert \widehat{\chi}_k \right\rvert^2 +\sum_{\lambda \in \Lambda} \left\lvert (\widehat{g_\lambda})_k \right\rvert^2\right) \leq B \left\lVert \widehat{f} \right\rVert^2, \quad \text{for all $f \in \mathbb C^M$.}  
\end{align*}
The claim now follows as $\widehat{\chi}_k = \widehat{\phi}_k$ and $(\widehat{g_\lambda})_k = \widehat{\psi}_{(\lambda^{-1}k) \bmod{M}}$, $k \in \{0,\dots,M-1\}$, with frame bounds
\begin{align*}
    A &= \min_{k\in\{0,\ldots,M-1\}} \left(\left\lvert \widehat{\phi}_k \right\rvert^2+\sum_{\lambda \in \Lambda} \left\lvert \widehat{\psi}_{(\lambda^{-1}k) \bmod{M}} \right\rvert^2\right) >0 \\
    \intertext{and}
    B &= \max_{k\in\{0,\ldots,M-1\}} \left(\left\lvert \widehat{\phi}_k \right\rvert^2+\sum_{\lambda \in \Lambda} \left\lvert \widehat{\psi}_{(\lambda^{-1}k) \bmod{M}} \right\rvert^2\right) <\infty. 
\end{align*}
Note that the wavelet frame is given by $\Psi_{\mathrm{wvt}} = \bigcup_{k=0}^{M-1}(\{T_k\chi^*\} \cup \{T_kg_\lambda^*\}_{\lambda \in \Lambda})$, where $T_k$ is the translation operator and $*$ denotes involution.

\paragraph{Numerical experiment}
We now compute the separation capacities of scattering networks of depth $n_\mathrm{d}=3$ constructed from $\{(\Psi,\rho,\mathrm{Id})\}_{n\in \mathbb N}$ numerically, where $\Psi \in \{\Psi_{\mathrm{WH}},\Psi_\mathrm{wvt}\}$ and $\rho \in \{\lvert \cdot \rvert^2,\lvert \cdot \rvert,\tanh(\cdot),\sig{\cdot}\}$. Here, $\tanh(z) = \tanh(\Re(z))+i \tanh(\Im(z))$ with $\tanh(x) = (e^{x}-e^{-x})/(e^{x}+e^{-x})$
and $\sig{z} = \sig{\Re(z)}+ i \sig{\Im(z)}$ with $\sig{x} = 1/(1+\exp{(-x)})-1/2$, for $z \in \mathbb C$ and $x \in \mathbb R$. The parameters of $\Psi_{\mathrm{WH}}$ are chosen as $M=25$ and $R=5$. For $\Psi_\mathrm{wvt}$, we likewise fix $M=25$ and take $\phi = \chi$, where $\chi$ is the output-generating filter used in $\Psi_{\mathrm{WH}}$. We further set $\widehat{\psi}_k = \mathbbm{1}_{\{8\leq k \leq 17\}}$, $k \in \{0,\ldots,M-1\}$. The remaining wavelet atoms are generated via the dyadic dilations $\Lambda =\{(2^j \bmod M) \colon 0\leq j < 4\}$ of $\psi$, so that $\Psi_{\mathrm{WH}}$ and $\Psi_{\mathrm{wvt}}$ have the same number of atoms. Note that for $\Psi_{\mathrm{wvt}}$, the Littlewood--Paley condition \cref{eq:LittlewoodPaleyWavelet} holds, as can be inferred from \cref{fig:LittlewoodPaleyWavelet}.  
The results are reported in \cref{tab:numerical-experiment} and verify our design insights \cref{it:design1,it:design2} for the frame and the nonlinearity. Notably, $(\Psi_{\mathrm{WH}},\lvert \cdot \rvert^2)$ is outperformed by all other frame--nonlinearity pairs. 
While $\tanh(\cdot)$ yields the highest separation capacity for both frames, $\lvert \cdot \rvert^2$ results in the lowest. 
The main conclusion from \cref{tab:numerical-experiment} is that nonlinearities that are not polynomials achieve significantly higher separation capacities for both frames.  

\begin{figure}[t!]
    \centering
    \resizebox{0.75\textwidth}{!}{\colorlet{colour1}{red!70!black}
\begin{tikzpicture}[scale=1, every node/.style={scale=0.5}]
    \begin{axis}[
        unit vector ratio*=1 1 1,
        grid=none,
        xmin=-0.1,
        xmax=24.75,
        ymin=-0.1,
        ymax=3.75,
        clip=false,
        axis lines=middle,
        xtick={0,1,2,...,24},
        ytick={0,1,2,3},
        xticklabels={,,},
        yticklabels={0,1,2,3},
        xlabel style = {at={(axis description cs:1,0)},anchor=west,yshift=2pt},
        ylabel style = {at={(axis description cs:0,1)},anchor=south,color=colour1},
        ylabel={$\lvert \widehat{\phi}_k \rvert^2+\sum_{\lambda \in \Lambda} \lvert \widehat{\psi}_{(\lambda^{-1}k) \bmod{M}} \rvert^2$},
        xlabel={$k$}
    ]
        \addplot[thin,ycomb,mark=*,mark options={scale=1, fill=white},mark size=1pt,color=colour1] coordinates {%
        (0,1) (1,2) (2,2) (3,2) (4,1) (5,2) (6,1) (7,2) (8,1) (9,2)(10,2) (11,3) (12,2) (13,2) (14,3) (15,2) (16,2)(17,1) (18,2) (19,1) (20,2) (21,1) (22,2) (23,2) (24,2)
      };
      \node[below,yshift=-2pt] at (1,0) {$1$};
      \node[below,yshift=-2pt] at (24,0) {$24$};
    \end{axis}
\end{tikzpicture}}
    \caption{Atoms of the wavelet frame $\Psi_{\mathrm{wvt}}$ satisfy \cref{eq:equivalent_LP_condition} and hence the Littlewood--Paley condition \cref{eq:LittlewoodPaleyWavelet}.}
    \label{fig:LittlewoodPaleyWavelet}
\end{figure}

\begin{table}[t!]
    \centering
    \begin{tabular}{|>{\centering\arraybackslash}p{2cm}%
                ||>{\centering\arraybackslash}p{2cm}%
                |>{\centering\arraybackslash}p{2cm}%
                |>{\centering\arraybackslash}p{2cm}%
                |>{\centering\arraybackslash}p{2cm}|}
        \hline
         & $\lvert \cdot \rvert^2$ & $\lvert \cdot \rvert$ & $\tanh(\cdot)$ & $\sig{\cdot}$\\ \hline \hline
         $\Psi_{\mathrm{WH}}$ & $84$ & $300.6$ & $1722.6$ & $565$\\ \hline
         $\Psi_{\mathrm{wvt}}$ & $805$ & $860$ & $3187.6$ & $2869.2$\\ \hline
    \end{tabular}
    \caption{Separation capacities of scattering networks of depth $n_\mathrm{d}=3$ built from $\{(\Psi,\rho,\mathrm{Id})\}_{n \in \mathbb N}$ with $\Psi \in \{\Psi_{\mathrm{WH}},\Psi_\mathrm{wvt}\}$ and $\rho \in \{\lvert \cdot \rvert^2,\lvert \cdot \rvert,\tanh(\cdot),\sig{\cdot}\}$. Each configuration was evaluated over $10$ runs, and the results were averaged to mitigate minor numerical instabilities. Hence, the values are not necessarily even integers. 
    Note that the value obtained for $(\Psi_{\mathrm{WH}},\lvert \cdot \rvert^2)$ coincides with the result of \cref{thm:multi-layer_mod2}.
    }
    \label{tab:numerical-experiment}
\end{table}

\paragraph{Software availability} 
The code for numerically computing the separation capacity of scattering networks is available at the following \texttt{GitHub} repository:
\begin{center}
    \url{https://github.com/konstantin-haberle/ScatteringCapacity}
\end{center}

\appendix
\crefname{section}{Appendix}{Appendices}

\section{Notation}\label[appendix]{app:notation}
$\mathbb N$, $\mathbb N_0$, $\mathbb Z$, $\mathbb R$, and $\mathbb C$ denote the sets of natural numbers, nonnegative integers, integers, real numbers, and complex numbers, respectively.  
For $a,b\in \mathbb Z$ and $M \in \mathbb N$, we write $a \equiv b \Mod{M}$ whenever $M$ divides $(a-b)$. The unique number $r \in \{0,1,\dots,M-1\}$ such that $a \equiv r \Mod{M}$ will be denoted by $(a \bmod{M})$. We write $\mathbb Z/M\mathbb Z$ for the finite cyclic group, i.e., the quotient group of integers modulo $M$ consisting of equivalence classes, where $a,b \in \mathbb Z$ are equivalent if $a \equiv b \Mod{M}$. For $A \subseteq \mathbb Z /M\mathbb Z$, define its reflection to be $A^r \coloneqq \{-a\colon a \in A\}$. We use $\langle A \rangle$ for the subgroup generated by $A \subseteq \mathbb Z /M\mathbb Z$, i.e., the smallest subgroup of $\mathbb Z /M\mathbb Z$ containing $A$. The greatest common divisor of $a$ and $b$ is denoted by $\gcd(a,b)$.  
To represent the indicator of a statement $S$, we write $\mathbbm{1}_{\{S\}}$, which equals $1$ if $S$ is true and $0$ if $S$ is false. The cardinality of a set $A$ is denoted by $\lvert A \rvert$.
Let $x \in \mathbb R$. We write $\lfloor x \rfloor$ for the largest $k \in \mathbb Z$ such that $k \leq x$. Similarly, $\lceil x \rceil$ stands for the smallest $k \in \mathbb Z$ with $k \geq x$. We further set $x_+ \coloneqq \max\{0,x\}$.
For $\mathbb K \in \{\mathbb R, \mathbb C\}$, the standard Euclidean inner product of $x,y \in \mathbb K^M$, $M \in \mathbb N$, is denoted by $\langle x, y \rangle$, and its induced norm on $\mathbb K^M$ is $\lVert x \rVert \coloneqq \sqrt{\langle x, x \rangle}$. The orthogonal complement of a linear subspace $\mathcal{W}$ of $\mathbb K^M$ is given by $\mathcal{W}^\perp \coloneqq \{v \in \mathbb K^M \colon \innerprod{v}{w}=0, \forall w \in \mathcal{W}\}$. For a subset $S \subseteq \mathbb K^M$, $\mathrm{span}_{\mathbb K}(S)$ stands for the set of all finite linear combinations of vectors in $S$ with scalars in the field $\mathbb K$. Given a linear space $\mathcal{U}$ over $\mathbb K$, we write $\dim_{\mathbb K}(\mathcal{U})$ for its dimension. 
The $M$-dimensional Lebesgue measure on $\mathbb R^M$ is denoted by $\mathcal{L}^M$.   
The complex conjugate of $z \in \mathbb C$ is $\overline{z}$. We write $\Re(z)$ for the real and $\Im(z)$ for the imaginary part of $z$. For the matrix $A \in \mathbb C^{M\times N}$, $M,N \in \mathbb N$, $A^{\mathsf{T}}$ and $A^{\mathsf{H}}$ stand for its transpose and conjugate transpose, respectively. The $M$-dimensional identity matrix is denoted by $I_M$. The kernel of a linear map $L \colon \mathcal{U} \to \mathcal{V}$ between the linear spaces $\mathcal{U},\mathcal{V}$ is given by $\ker(L) \coloneqq \{u \in \mathcal{U} \colon L(u)=0\}$.
For a function $f \colon \mathbb Z/M\mathbb Z \to \mathbb C$, $M \in \mathbb N$, we will sometimes use the vector representation $f = \left(f_0, \dots, f_{M-1} \right)^{\mathsf{T}} \in \mathbb C^M$, where $f_k \coloneqq f(k)$, $k \in \{0,\dots,M-1\}$. The support of $f$, denoted $\mathrm{supp}(f)$, is defined to be the set of indices $k \in \{0,\dots,M-1\}$ for which $f_k \neq 0$. We denote by $\widehat f \coloneqq \left(\widehat f_0, \dots, \widehat f_{M-1}\right)^{\mathsf{T}} \in \mathbb C^M$ the discrete Fourier transform (DFT) of $f$ given by
\begin{align*}
    \widehat{f}_k \coloneqq \sum_{\ell=0}^{M-1} f_\ell \,e^{-2\pi i k\ell/M}, \quad k\in \{0,\dots,M-1\}.
\end{align*}
With the $(M \times M)$-DFT matrix
\begin{align*}
    F_M \coloneqq \begin{pmatrix}
        1 & 1 & 1 &\cdots & 1 \\
        1 & \omega_M & \omega_M^2 & \cdots & \omega_M^{M-1} \\
        1 & \omega_M^2 & \omega_M^4 & \cdots & \omega_M^{2(M-1)} \\
        \vdots & \vdots & \vdots & \ddots & \vdots \\
        1 & \omega_M^{M-1} & \omega_M^{2(M-1)} & \cdots & \omega_M^{(M-1)^2} 
    \end{pmatrix},
\end{align*}
where $\omega_M \coloneqq e^{-2\pi i/M}$, the DFT of $f$ can be written as 
$\widehat{f} = F_M f$.
For $k \in \{0,\dots,M-1\}$, we denote the translation operator by $(T_kf)_\ell \coloneqq f_{(\ell-k)\bmod{M}}$, $\ell\in \{0,\dots,M-1\}$. Involution is defined by $(f^*)_k \coloneqq \overline{f_{M-k}}$, for $k \in \{1,\dots,M-1\}$, and $(f^*)_0 = \overline{f_0}$.
Let $f,g\in \mathbb C^M$. The cyclic convolution of $f$ and $g$ is $(f \ast g) \in \mathbb C^M$ with
\begin{align*}
    (f\ast g)_k \coloneqq \sum_{\ell=0}^{M-1}f_\ell g_{(k-\ell) \bmod M}, \quad k \in \{0,\dots,M-1\}.
\end{align*}
Equivalently, we may write $f \ast g = C_g f$, where $C_g$ is the circulant matrix generated by $g$; that is,
\begin{align*}
    C_g \coloneqq \begin{pmatrix}
            g_0 & g_{M-1} & \cdots & g_1 \\
            g_1 & g_{0} & \cdots & g_2 \\
            \vdots & \vdots & \ddots & \vdots \\
            g_{M-1} & g_{M-2} & \cdots & g_0
        \end{pmatrix} \in \mathbb C^{M \times M}.
\end{align*}
For $k \in \mathbb N$, the $k$-fold convolution power of $f \in \mathbb C^M$ is 
\begin{align*}
    f^{\ast k} \coloneqq \underbrace{f \ast \cdots \ast f}_{\text{$k$ times}}.
\end{align*}
Let $\alpha = (\alpha_0, \dots, \alpha_{M-1})\in \mathbb N_0^M$ be a multi-index. The sum of its components will be denoted by $\lvert \alpha \rvert \coloneqq \alpha_0 +\cdots + \alpha_{M-1}$. For $n \coloneqq \lvert \alpha \rvert \in \mathbb N_0$, we set $\binom{n}{\alpha} \coloneqq n!/\alpha!$, where $\alpha ! \coloneqq \alpha_0! \cdots \alpha_{M-1}!$. For $z \in \mathbb C^M$, we define $z^\alpha \coloneqq z_0^{\alpha_0} \cdots z_{M-1}^{\alpha_{M-1}}$. We further use the standard multi-index notation for the partial derivative operator $\partial^{\lvert \alpha \rvert} /\partial z^\alpha \coloneqq \partial^{\lvert \alpha \rvert} /\left(\partial z_0^{\alpha_0}\cdots \partial z_{M-1}^{\alpha_{M-1}}\right)$.

\section{Symmetry Argument}\label[appendix]{appendix:sym}
In this section, we employ a symmetry argument to show that for $M',N\in \mathbb N$,
\begin{align}\label{eq:cover-symmetry-argument}
    \frac{C(N,M')}{2^N} = 2^{-N+1}\sum_{k=0}^{M'-1} \binom{N-1}{k}  \geq \frac{1}{2} \quad \text{if and only if} \quad N\leq 2M',
\end{align}
with equality if and only if $N=2M'$.
\begin{proof}
First note that by the binomial theorem,
\begin{align}
    1 &= 2^{-N+1}\sum_{k=0}^{N-1} \binom{N-1}{k} \nonumber\\
    &= 2^{-N+1}\sum_{k=0}^{M'-1} \binom{N-1}{k} + 2^{-N+1}\sum_{k=M'}^{N-1} \binom{N-1}{k} \nonumber\\
    &= 2^{-N+1}\sum_{k=0}^{M'-1} \binom{N-1}{k} + 2^{-N+1}\sum_{k=M'}^{N-1} \binom{N-1}{N-1-k} \label{eq:cover-symmetry-binomial-coeff}\\
    &= 2^{-N+1}\sum_{k=0}^{M'-1} \binom{N-1}{k} + 2^{-N+1}\sum_{k=0}^{N-1-M'} \binom{N-1}{k} \nonumber\\ 
    &= 2^{-N}\left(C(N,M')+C(N,N-M')\right), \label{eq:cover-fct-decomposition}
\end{align}
where \cref{eq:cover-symmetry-binomial-coeff} is by the symmetry of the binomial coefficient.
As each term in $2\sum_{k=0}^{M'-1}\binom{N-1}{k} = C(N,M')$ is nonnegative for all $M' \in \mathbb N$ and strictly positive whenever $1\leq M' \leq N$, $C(N,M')$ is nondecreasing in $M'$ and strictly increasing in $M'$ whenever $1\leq M'\leq N$. Therefore, 
\begin{align}
    C(N,N-M') &\leq C(N,M') \quad \text{if $N-M' \leq M'$}, \label{eq:Cnondecreasing1}\\
    C(N,N-M') &> C(N,M') \quad \text{if $N-M'> M'$}. \label{eq:Cnondecreasing2}
\end{align}
Now, if $N\leq 2M'$, \cref{eq:cover-fct-decomposition}, together with \cref{eq:Cnondecreasing1}, implies $2^{-N}C(N,M') \geq \frac 12$. Conversely, if $N>2M'$, we obtain from \cref{eq:cover-fct-decomposition,eq:Cnondecreasing2} that $2^{-N}C(N,M') < \frac 12$. This proves \cref{eq:cover-symmetry-argument}. To see that $2^{-N}C(N,M') = \frac 12$ if and only if $N=2M'$, we first note that the ``if'' part follows immediately from \cref{eq:cover-fct-decomposition}. On the other hand, if $2^{-N}C(N,M') = \frac 12$, then \cref{eq:cover-fct-decomposition} implies $C(N,M')=C(N,N-M')$, and consequently, $M'=N-M'$. This completes the proof of the assertion.        
\end{proof}

\section{Proofs of \texorpdfstring{\cref{subsec:WHmod2}}{Subsection~\ref{subsec:WHmod2}}}
\subsection{Proof of \texorpdfstring{\cref{L:dim_conv_mod2}}{Lemma \ref{L:dim_conv_mod2}}}\label[appendix]{app:dim_conv_mod2}
In the proof of this lemma, the following elementary fact is utilized.
\begin{lemma}[p. 84 in \cite{zhang2011matrix}]\label{L:pseudo_quadratic_form}
    Let $A \in \mathbb C^{M\times M}$. Then,
    \begin{align*}
        \left(\langle A z, z \rangle = 0, \quad \forall z \in \mathbb C^M\right) \Longleftrightarrow \left(A =0 \right).
    \end{align*}
\end{lemma}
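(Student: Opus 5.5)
The direction $(\Leftarrow)$ is trivial: if $A=0$ then $\langle Az,z\rangle=0$ for every $z$. So the work is in $(\Rightarrow)$. The plan is the standard polarization argument over $\mathbb C$. Assume $q(z)\coloneqq\langle Az,z\rangle=0$ for all $z\in\mathbb C^M$, and consider the sesquilinear form $B(x,y)\coloneqq\langle Ax,y\rangle$, which is linear in $x$ and conjugate-linear in $y$ (with the convention $\innerprod{f}{g}=g^{\mathsf H}f$). The aim is to show $B(x,y)=0$ for all $x,y$. Then choosing $y=Ax$ gives $\lVert Ax\rVert^2=0$ for every $x$, hence $A=0$.

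\textbf{Step 1.} Expand $q(x+y)=0$ and use $q(x)=q(y)=0$. This gives
\begin{align*}
B(x,y)+B(y,x)=0 .
\end{align*}

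\textbf{Step 2.} Expand $q(x+iy)=0$. By sesquilinearity, $B(x,iy)=-i\,B(x,y)$ and $B(iy,x)=i\,B(y,x)$, so
\begin{align*}
-i\,B(x,y)+i\,B(y,x)=0,
\end{align*}
which means $B(x,y)=B(y,x)$.

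\textbf{Step 3.} Combining the two identities yields $2B(x,y)=0$, so $B(x,y)=0$ for all $x,y\in\mathbb C^M$.

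Alternatively, one can bypass the general argument and test standard basis vectors directly. The choice $z=e_j$ gives $A_{jj}=0$. The choices $z=e_j+e_k$ and $z=e_j+ie_k$ then give $A_{jk}+A_{kj}=0$ and $A_{jk}-A_{kj}=0$ (up to a factor $\pm i$), so every entry $A_{jk}$ vanishes.

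The only step requiring care is Step 2: one must track the convention of which argument is conjugate-linear in order to get the signs right. This uses complex scalars in an essential way. The statement is false over $\mathbb R$; for example, rotation by $90^\circ$ in $\mathbb R^2$ has $\langle Az,z\rangle=0$ for all $z$ but $A\neq0$.
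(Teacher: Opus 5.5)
Your proof is correct. With $\innerprod{f}{g}=g^{\mathsf H}f$, Steps 1--3 are accurate, since indeed $B(x,iy)=-iB(x,y)$, $B(iy,x)=iB(y,x)$ and $B(iy,iy)=q(y)=0$. The choice $y=Ax$ then closes the argument, and the entrywise variant with $z=e_j$, $e_j+e_k$, $e_j+ie_k$ is also right.

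The paper gives no argument of its own for this lemma; it only cites p.~84 of Zhang's textbook. Your polarization proof therefore supplies a self-contained justification for what the paper delegates to the reference. It uses nothing beyond sesquilinearity, unlike, e.g., a route through the Cartesian decomposition $A=H+iK$, which typically invokes the spectral theorem for the Hermitian parts. It also does not depend on which argument of the inner product is conjugated, since switching conventions only replaces $\innerprod{Az}{z}$ by its complex conjugate.

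Your closing remark on the real case agrees with the remark the paper places right after the lemma: over $\mathbb R^M$ the hypothesis only forces $A=-A^{\mathsf T}$.
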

\begin{remark}
Note, however, that $\left(\langle A x, x \rangle = 0, \quad \forall x \in \mathbb R^M\right) \Longleftrightarrow \left(A = -A^{\mathsf{T}} \right)$.
\end{remark}
\begin{proof}[Proof of \cref{L:dim_conv_mod2}]
    We first note that the DFT matrix $F_M$ diagonalizes the circulant matrix generated by $g_\lambda$. As it is easier to work with the resulting diagonal matrix, we analyze the space
\begin{align*}
    \mathcal{U} \coloneqq \spanC{\left\{ F_M\left(\lvert f \ast g_\lambda \lvert^{2}\right)\colon f \in \mathbb C^M\right\}}.
\end{align*}
This is equivalent to studying $\spanC{\{\lvert f \ast g_\lambda \lvert^{2}\colon f \in \mathbb C^M\}}$ because the DFT is linear and invertible. 
Let now $f \in \mathbb C^M$, and observe that
\begin{align*}
    F_M\left(\lvert f \ast g_{\lambda} \rvert^2\right) &= \frac 1M \left(\widehat{f \ast g_{\lambda}}\right) \ast \left(\widehat{\overline{f \ast g_{\lambda}}}\right).
\end{align*}
In particular, for $k \in \{0,\dots,M-1\}$, 
\begin{align*}
    \left(F_M\left(\lvert f \ast g_{\lambda} \rvert^2\right)\right)_k 
    &= \frac 1M \sum_{\ell \in V_{k}} \widehat{f}_{\ell} \overline{\widehat{f}_{(\ell-k) \bmod M}},
\end{align*}
with
\begin{align*}
    V_k \coloneqq \{\lambda R - m_0,\dots,\lambda R + m_0\} \cap \{\ell \bmod M \colon \lambda R - m_0 +k \leq \ell \leq \lambda R + m_0+k\},
\end{align*}
    where we used that $\widehat{g}_{\lambda,(\ell-k) \bmod M} = 1$ if $\lambda R -m_0 \leq (\ell-k) \bmod M \leq \lambda R +m_0$ and $\widehat{g}_{\lambda,(\ell-k) \bmod M} = 0$ otherwise.
It follows that
\begin{align*}
    V_k &= \begin{cases}
        \{\lambda R -m_0 +k, \dots, \lambda R+m_0\}, &\quad \text{if $0\leq k \leq R-1$,} \\
        \emptyset, &\quad \text{if $R \leq k \leq M-R$,} \\
        \{\lambda R -m_0 , \dots, \lambda R+m_0+k-M\}, &\quad \text{if $M-R+1\leq k \leq M-1$.}
    \end{cases}
\end{align*}
Therefore,
\begin{align}\label{eq:hat_fgl_mod2}
    \left(F_M\left(\lvert f \ast g_{\lambda} \rvert^2\right)\right)_k &= \begin{cases}
        \frac 1M \sum_{\ell =\lambda R - m_0+k}^{\lambda R +m_0} \widehat{f}_{\ell} \overline{\widehat{f}_{\ell-k}}, & \text{if $0\leq k \leq R-1$,} \\
        0, & \text{if $R \leq k \leq M - R$,} \\
        \frac 1M \sum_{\ell = \lambda R -m_0}^{\lambda R + m_0 +k-M} \widehat{f}_{\ell} \overline{\widehat{f}_{(\ell-k) \bmod M}}, & \text{if $M-R+1 \leq k \leq M -1$.}
    \end{cases} 
\end{align}
We refer to \cref{fig:WH-mod2-node} for an illustration of \cref{eq:hat_fgl_mod2}. Since $f \in \mathbb C^M$ was arbitrary, we obtain, by counting the number of nonzero components of $F_M\left(\lvert f \ast g_{\lambda} \rvert^2\right)$, that $\dimC{\mathcal{U}}\leq 2R-1$.

To show that equality holds, i.e., $\dimC{\mathcal{U}}= 2R-1$, we study the dimension of the orthogonal complement of $\mathcal{U}$, denoted by $\mathcal{U}^\perp$. Indeed, since $\mathcal{U} \oplus \mathcal{U}^\perp = \mathbb C^M$, $\dimC{\mathcal U} + \dimC{\mathcal{U}^\perp} = M$.
Note that
\begin{align*}
        \mathcal{U}^\perp &\coloneqq \left\{h \in \mathbb C^M \colon \langle h, u \rangle =0, \,\,\forall u \in \mathcal{U} \right\} \\
        &=\left\{h \in \mathbb C^M \colon \left\langle h, F_M\left(\lvert f \ast g_\lambda \lvert^{2}\right) \right\rangle =0, \,\,\forall f \in \mathbb C^M \right\}.
    \end{align*}
Let $h \in \mathbb C^M$, and compute
\begin{align*}
    \left\langle h, F_M\left(\lvert f \ast g_\lambda \lvert^{2}\right) \right\rangle &= \frac{1}{M} \left\langle h, \left(\widehat{f \ast g_\lambda}\right) \ast \left(\widehat{\overline{f \ast g_\lambda}}\right)\right\rangle \\
    &=\frac{1}{M} \left\langle h \ast \left(\widehat{f \ast g_\lambda}\right), \widehat{f \ast g_\lambda}\right\rangle \\
    &=\frac{1}{M} \left\langle C_h \left(\widehat{f \ast g_\lambda}\right), \widehat{f \ast g_\lambda}\right\rangle,
\end{align*}
where $C_h$ denotes the circulant matrix generated by $h$.
Setting $z \coloneqq (\widehat{f}_k \widehat{g}_{\lambda,k})_{\lambda R-m_0 \leq k \leq \lambda R +m_0} \in \mathbb C^R$, and letting $A \in \mathbb R^{M\times R}$ with entries taking values in $\{0,1\}$ be such that $Az = \widehat{f \ast g_\lambda}$, we obtain
\begin{align}
    \left\langle h, F_M\left(\lvert f \ast g_\lambda \lvert^{2}\right) \right\rangle &= \frac{1}{M} \left\langle C_h Az, Az\right\rangle
    = \frac{1}{M} \left\langle A^{\mathsf{H}}C_h Az, z\right\rangle. \label{eq:inner_prod_h_hat_fg_l}
\end{align}
Since each column of $A$ contains exactly one entry with value $1$, the matrix $A^{\mathsf{H}}C_h A$ forms a submatrix of $C_h$.  
From \cref{eq:hat_fgl_mod2,eq:inner_prod_h_hat_fg_l} it follows that all entries of $A^{\mathsf{H}}C_h A$ are elements of the set $\{h_{k \bmod{M}} \colon -R+1 \leq k \leq R-1 \}$.
Thanks to \cref{eq:inner_prod_h_hat_fg_l,L:pseudo_quadratic_form}, we have
\begin{align*}
    \left(\left\langle h, F_M\left(\lvert f \ast g_\lambda \lvert^{2}\right) \right\rangle =0, \,\,\forall f \in \mathbb C^M \right) \Longleftrightarrow \left(A^{\mathsf{H}}C_h A = 0\right).
\end{align*}
 Thus, if $h \in \mathcal{U}^\perp$, then $h_{k \bmod{M}} =0$ for $-R+1 \leq k \leq R-1$. In other words,
\begin{align*}
    \mathcal{U}^\perp \subseteq \{h \in \mathbb C^M \colon h_{k \bmod{M}} =0, \,\, -R+1 \leq k \leq R-1\},
\end{align*}
which implies $\dimC{\mathcal{U}^\perp} \leq M-(2R-1)$. 
But $\dimC{\mathcal{U}} \leq 2R-1$ so that $\dimC{\mathcal{U}} = 2R-1$, as desired.
\end{proof}
\subsection{Proof of \texorpdfstring{\cref{prop:singlelayernet}}{Proposition \ref{prop:singlelayernet}}}\label[appendix]{app:singlelayernet}
As the support sets of the functions $\widehat{\chi} \cup \{\widehat{g_\lambda}\}_{\lambda \in \Lambda}$ are disjoint, one may divide $\Phi(f)$ into the components $\{f \ast \chi\} \cup \{\lvert f \ast g_\lambda\rvert^2 \ast \chi\}_{\lambda \in \Lambda}$, for $f \in \mathbb C^M$, and analyze them separately. Specifically, using \cref{eq:SC-WHmod2}, we obtain 
\begin{align}\label{eq:WHmod2singlelayer1}
    \sepcap{\Phi} &= 4 \cdot \dimC{\spanC{\left\{f \ast \chi \colon f \in \mathbb C^M\right\}}}+ 2\sum_{\lambda \in \Lambda} \dimC{\spanC{\left\{\lvert f \ast g_\lambda\rvert^2 \ast \chi \colon f \in \mathbb C^M\right\}}}.
\end{align}    
Since $(\widehat{f \ast \chi})_k = \widehat{f}_k$ if $k \in \mathrm{supp}(\widehat{\chi})$ and $(\widehat{f \ast \chi})_k = 0$ otherwise, for $k \in \{0,\dots,M-1\}$, it immediately follows that
\begin{align}\label{eq:WHmod2singlelayer2}
    \dimC{\spanC{\left\{f \ast \chi \colon f \in \mathbb C^M\right\}}} = \lvert \mathrm{supp}(\widehat{\chi}) \rvert = R.
\end{align}
From \cref{eq:hat_fgl_mod2} we deduce that $\mathrm{supp}(F_M(\lvert f \ast g_\lambda \rvert^2)) \cap \mathrm{supp}(\widehat{\chi}) = \mathrm{supp}(\widehat{\chi})$. Application of \cref{L:dim_conv_mod2} results in
\begin{align}\label{eq:WHmod2singlelayer3}
    \dimC{\spanC{\left\{\lvert f \ast g_\lambda\rvert^2 \ast \chi \colon f \in \mathbb C^M\right\}}} = \lvert \mathrm{supp}(\widehat{\chi}) \rvert = R. 
\end{align}
Substituting \cref{eq:WHmod2singlelayer2,eq:WHmod2singlelayer3} into \cref{eq:WHmod2singlelayer1} results in  
\begin{align*}
    \sepcap{\Phi} = 4R+2LR = 4R+2(M/R-1)R = 2(M+R). 
\end{align*}
   
\subsection{Proof of \texorpdfstring{\cref{L:symmetry}}{Lemma \ref{L:symmetry}}}\label[appendix]{app:symmetry-lemma}
To show the claim, we make use of the fact that $\widehat{f}_k = \overline{\widehat{f}_{M-k}}$, for $k \in \{0,\dots,M-1\}$, whenever $f$ is real-valued. For $k \in \{0,\dots,M-1\}$, we compute
    \allowdisplaybreaks
    \begin{align*}
        (F_M(\lvert f \ast g_\lambda \rvert ^2))_k &= \frac 1M \sum_{\ell = 0}^{M-1} \left(F_M\left(f \ast g_{\lambda}\right)\right)_{\ell} \left(F_M\left(\overline{f \ast g_{\lambda}}\right)\right)_{(k-\ell) \bmod M} \\
        &= \frac 1M \sum_{\ell = 0}^{M-1} \widehat{f}_{\ell}\widehat{g}_{\lambda,\ell} \overline{\widehat{f}_{(\ell-k) \bmod M}} \widehat{g}_{\lambda, (\ell-k) \bmod M} \\
        &=\frac 1M \sum_{\ell = 0}^{M-1} \widehat{f}_{\ell}\widehat{g}_{\lambda',-\ell \bmod{M}} \overline{\widehat{f}_{(\ell-k) \bmod M}} \widehat{g}_{\lambda', (k-\ell) \bmod M} \\
        &=\frac 1M \sum_{\ell = 0}^{M-1} \widehat{f}_{-\ell \bmod{M}}\widehat{g}_{\lambda',\ell} \overline{\widehat{f}_{(-\ell-k) \bmod M}} \widehat{g}_{\lambda', (k+\ell) \bmod M} \\
        &=\frac 1M \sum_{\ell = 0}^{M-1} \overline{\widehat{f}_{\ell}}\widehat{g}_{\lambda',\ell} \widehat{f}_{(k+\ell) \bmod M} \widehat{g}_{\lambda', (k+\ell) \bmod M} \\
        &= \frac 1M \sum_{\ell = 0}^{M-1} \overline{(F_M(f \ast g_{\lambda'}))_\ell} \overline{(F_M(\overline{f\ast g_{\lambda'}}))_{(-k-\ell) \bmod{M}}} \\
        &= \overline{(F_M(\lvert f \ast g_{\lambda'} \rvert ^2))_{-k \bmod{M}}} \\
        &= (F_M(\lvert f \ast g_{\lambda'} \rvert ^2))_k,
    \end{align*}
    where we used that $\widehat{g}_{\lambda,\ell} = \widehat{g}_{\lambda',-\ell \bmod{M}}$ and the fact that $\widehat{g_\lambda}$ is real-valued. This completes the proof.
    
\subsection{Proof of \texorpdfstring{\cref{L:multilayer_net}}{Lemma \ref{L:multilayer_net}}} \label[appendix]{app:multilayer_net-lemma}
Recall from \cref{eq:hat_fgl_mod2} that $F_M(\lvert f \ast g_\lambda \rvert^2)$ is supported on $\{k \bmod{M} \colon -R+1 \leq k\leq R-1\}$ for $f \in \mathbb C^M$ and $\lambda \in \Lambda$. Moreover, $\widehat{g_\lambda}$ is supported on $\{k \colon \lambda R -m_0 \leq k \leq \lambda R +m_0\}$ for every $\lambda \in \Lambda$. Note that $\lambda R -m_0 \leq R-1$ only for $\lambda =1$ and $M-R+1 \leq \lambda R +m_0$ only for $\lambda =L$. Thus, $F_M(\lvert f \ast g_{\lambda_1} \rvert^2 \ast g_{\lambda_2}) = 0$ for $\lambda_1 \in \Lambda$ whenever $\lambda_2 \in \Lambda\setminus\{1,L\}$, see \cref{fig:WH-mod2-layer2_1,fig:WH-mod2-layer2_L}, establishing \cref{it:L-multilayer-i}. Furthermore, as $(\mathrm{supp}(\widehat{g_L}))^r = \mathrm{supp}(\widehat{g_1})$ and as $\lvert f \ast g_{\lambda_1} \rvert^2$ is real-valued, application of \cref{L:symmetry} yields $U[(\lambda_1,1)]f = U[(\lambda_1,L)]f$, which completes the argument for \cref{it:L-multilayer-ii}. Finally, since $\mathrm{supp}(U[\lambda_1,1]f) \subset \mathrm{supp}(\widehat{\chi})$, it holds that $U[q]f = 0$, for $q \in \Lambda^n$ with $n \geq 3$, thereby proving \cref{it:L-multilayer-iii}.

\subsection{Proof of \texorpdfstring{\cref{L:dim_path_1l}}{Lemma \ref{L:dim_path_1l}}}\label[appendix]{app:dim_path_1l-lemma}
Since the DFT is linear and invertible, we may equivalently study the space 
\begin{align*}
    \mathcal{V} \coloneqq\spanC{\left\{\begin{pmatrix}F_M\left((U[\lambda]f) \ast \chi \right)\\ F_M\left((U[\lambda,1]f)\ast \chi \right)\end{pmatrix} \colon f \in \mathbb C^M\right\}}.
\end{align*}
Let $f \in \mathbb C^M$, and set $u \coloneqq U[\lambda]f$. Then, $U[\lambda,1]f = \lvert u \ast g_1 \rvert^2$. Thanks to \cref{eq:hat_fgl_mod2}, we obtain, for $k \in \{0,\dots,M-1\}$,
\begin{align*}
    \left(F_M\left(U[\lambda,1]f\right)\right)_k 
    &= \begin{cases}
        \frac 1M \sum_{\ell =R - m_0+k}^{R +m_0} \widehat{u}_{\ell} \overline{\widehat{u}_{\ell-k}}, & \text{if $0\leq k \leq R-1$,} \\
        0, & \text{if $R \leq k \leq M - R$,} \\
        \frac 1M \sum_{\ell = R -m_0}^{R + m_0 +k-M} \widehat{u}_{\ell} \overline{\widehat{u}_{(\ell-k) \bmod M}}, & \text{if $M-R+1 \leq k \leq M -1$.}
        \end{cases}
\end{align*}
Applying \cref{eq:hat_fgl_mod2} again yields, for $k \in \{0,\dots,M-1\}$,
\begin{align}\label{eq:hat_u[lambda]f}
    \widehat{u}_k = \begin{cases}
        \frac 1M \sum_{\ell =\lambda R - m_0+k}^{\lambda R +m_0} \widehat{f}_{\ell} \overline{\widehat{f}_{\ell-k}}, & \text{if $0\leq k \leq R-1$,} \\
        0, & \text{if $R \leq k \leq M - R$,} \\
        \frac 1M \sum_{\ell = \lambda R -m_0}^{\lambda R + m_0 +k-M} \widehat{f}_{\ell} \overline{\widehat{f}_{(\ell-k) \bmod M}}, & \text{if $M-R+1 \leq k \leq M -1$.}
    \end{cases} 
\end{align}
Therefore, if $m_0=0$ (i.e., $R=1$), $U[\lambda,1]f =0$.
If $m_0>0$ (i.e., $R>1$),
\begin{align}\label{eq:hat_u[l1]f}
    \left(F_M\left(U[\lambda,1]f\right)\right)_k 
    &= \begin{cases}
        \frac 1M \sum_{\ell =R - m_0+k}^{R -1} \widehat{u}_{\ell} \overline{\widehat{u}_{\ell-k}}, & \text{if $0\leq k \leq m_0-1$,} \\
        0, & \text{if $m_0 \leq k \leq M - m_0$,} \\
        \frac 1M \sum_{\ell = R -m_0}^{R-1 +k-M} \widehat{u}_{\ell} \overline{\widehat{u}_{(\ell-k) \bmod M}}, & \text{if $M-m_0+1 \leq k \leq M -1$.}
        \end{cases}
\end{align}
In particular, for $k \in \{0,\dots,m_0-1\}$, we obtain
\begin{align}
    \left(F_M\left(U[\lambda,1]f\right)\right)_k 
    &= \frac 1M \sum_{\ell =R - m_0+k}^{R -1} \underbrace{\left(\frac 1M \sum_{\ell' =\lambda R - m_0+\ell}^{\lambda R +m_0} \widehat{f}_{\ell'} \overline{\widehat{f}_{\ell'-\ell}}\right)}_{=\widehat{u}_\ell} \underbrace{\left(\frac 1M \sum_{\ell'' =\lambda R - m_0+\ell-k}^{\lambda R +m_0} \overline{\widehat{f}_{\ell''}} \widehat{f}_{\ell''-\ell+k}\right)}_{=\overline{\widehat{u}_{\ell-k}}} \nonumber\\
    &= \frac{1}{M^3} \sum_{\ell =R - m_0+k}^{R -1} \sum_{\ell' =\lambda R - m_0+\ell}^{\lambda R +m_0} \sum_{\ell'' =\lambda R - m_0+\ell-k}^{\lambda R +m_0} \widehat{f}_{\ell'} \overline{\widehat{f}_{\ell'-\ell}} \overline{\widehat{f}_{\ell''}} \widehat{f}_{\ell''-\ell+k}. \label{eq:hat_Ul1f_1}
\end{align}
Since $U[\lambda,1]f$ is real-valued, 
\begin{align}\label{eq:hat_Ul1f_2}
    \left(F_M\left(U[\lambda,1]f\right)\right)_k = \overline{\left(F_M\left(U[\lambda,1]f\right)\right)_{M-k}}, \quad \text{for }M-m_0+1 \leq k\leq M-1.
\end{align} 
Denote the support of $F_M(U[\lambda,1]f)$ by
\begin{align*}
    S_{\lambda,1} \coloneqq \begin{cases}
        \{k \bmod{M} \colon -m_0+1 \leq k \leq m_0-1\}, & \text{if $m_0>0$}, \\
        \emptyset, & \text{if $m_0=0$}.
    \end{cases}
\end{align*} 
Note that $\lvert S_{\lambda,1} \rvert = (2m_0-1)_+ = (R-2)_+$. Moreover, from \cref{eq:hat_u[lambda]f} and the definition of $\chi$, it follows that $F_M((U[\lambda]f) \ast \chi)$ is supported on $S_{\lambda} \coloneqq \{k \bmod{M} \colon -m_0 \leq k \leq m_0\}$ with $\lvert S_{\lambda} \rvert = 2m_0+1 = R$. We thus have $\dimCinline{\mathcal{V}} \leq R+ (R-2)_+$. 

To show that equality holds, i.e., $\dimCinline{\mathcal{V}} = R+ (R-2)_+$, consider the orthogonal complement of $\mathcal{V}$, given by
\begin{align*}
    \mathcal{V}^\perp &\coloneqq \left\{h \in \mathbb C^{2M} \colon \langle h, v \rangle=0,\,\,\forall v \in \mathcal{V}\right\} \\
    &=\left\{h \in \mathbb C^{2M} \colon \left\langle h, \begin{pmatrix}F_M\left((U[\lambda]f) \ast \chi \right)\\ F_M\left((U[\lambda,1]f)\ast \chi \right)\end{pmatrix} \right\rangle=0,\,\,\forall f \in \mathbb C^M\right\}.
\end{align*}
To this end, set $z \coloneqq (\widehat{f}_k)_{\lambda R- m_0 \leq k \leq \lambda R+m_0} \in \mathbb C^R$. 
Observe from \cref{eq:hat_Ul1f_1,eq:hat_Ul1f_2} that, for each $k \in S_{\lambda,1}$, we can write $\left(F_M\left(U[\lambda,1]f\right)\right)_k = p_k(z,\overline{z})$, where $p_k\colon \mathbb C^R \times \mathbb C^R \to \mathbb C$ is a bihomogeneous polynomial of bidegree $(2,2)$; that is, $p_k$ is a polynomial such that $p(\mu z,\lambda\overline{z}) = \mu^2\lambda^2 p(z,\overline{z})$, for all $\mu,\lambda\in\mathbb C$, $z \in \mathbb C^R$. Specifically, we have
\begin{align*}
    \left(F_M\left(U[\lambda,1]f\right)\right)_k =p_k(z,\overline{z})\coloneqq \frac{1}{M^3} \sum_{\substack{\alpha, \beta \in \mathbb N_0^R \\\lvert \alpha \rvert =\lvert \beta \rvert =2}}a_{k,\alpha,\beta}z^\alpha \overline{z}^{\beta} = \frac{1}{M^3}\langle A_k \mathcal{Z}, \mathcal{Z}\rangle,
\end{align*}
where
\begin{align*}
    \mathcal{Z} \coloneqq \begin{pmatrix}
        z_0^2, z_0z_1, \dots, z_R^2
    \end{pmatrix} \in \mathbb C^{\binom{R+1}{2}}
\end{align*}
contains all monomials of degree $2$ in degree lexicographic order, 
and where $a_{k,\alpha,\beta} \in \{0,1\}$ are the entries of the matrix $A_k \in \mathbb R^{\binom{R+1}{2}\times \binom{R+1}{2}}$. Note that no two polynomials $p_k$ and $p_{k'}$ share the same monomial $z^\alpha \overline{z}^\beta$. Indeed, from \cref{eq:hat_Ul1f_1}, we can deduce that if the monomial $\widehat{f}_{i_1}\overline{\widehat{f}_{i_2}\widehat{f}_{i_3}}\widehat{f}_{i_4}$ appears in $(F_M(U[\lambda,1]f))_k$, then $(i_1+i_4)-(i_2+i_3)=k$, for $k \in \{0,\ldots,m_0-1\}$ and $i_1,i_2,i_3,i_4\in \{\lambda R-m_0,\ldots,\lambda R+m_0\}$. Furthermore, for $k\in\{M-m_0+1,\ldots,M-1\}$, $(F_M(U[\lambda,1]f))_k$ carries exactly the complex conjugate monomials of those in $(F_M(U[\lambda,1]f))_{M-k}$ by \cref{eq:hat_Ul1f_2}. Upon noting that monomials of the form $\lvert f_\ell \rvert^2$, $\ell \in \{\lambda R-m_0,\ldots,\lambda R+m_0\}$, can only appear for $k=0$, the claim that no two polynomials $p_k$ and $p_{k'}$ share the same monomial $z^\alpha \overline{z}^\beta$ follows.   

Let $h^{(1)},h^{(2)} \in \mathbb C^M$, and set $h \coloneqq \left(\left(h^{(1)}\right)^{\mathsf{T}},\left(h^{(2)}\right)^{\mathsf{T}}\right)^{\mathsf{T}} \in \mathbb C^{2M}$. For $f \in \mathbb C^M$, compute
\begin{align*}
    \left\langle h, \begin{pmatrix}F_M\left((U[\lambda]f) \ast \chi \right)\\ F_M\left((U[\lambda,1]f)\ast \chi \right)\end{pmatrix} \right\rangle &= \left\langle h^{(1)},F_M\left((U[\lambda]f) \ast \chi \right) \right\rangle + \left\langle h^{(2)},  F_M\left((U[\lambda,1]f) \ast \chi \right)\right\rangle \\
    &= \left\langle \widetilde{h}^{(1)},F_M(U[\lambda]f) \right\rangle + \left\langle h^{(2)},  F_M(U[\lambda,1]f)\right\rangle \\
    &= \frac{1}{M} \left\langle A^{\mathsf{H}}C_{\widetilde{h}^{(1)}} Az, z\right\rangle + \frac{1}{M^3} \left\langle \sum_{k \in S_{\lambda,1}} h_k^{(2)}A_k \mathcal{Z}, \mathcal{Z} \right\rangle,   
\end{align*}
where we used that $S_{\lambda,1} \subset \mathrm{supp}(\widehat{\chi})$, and where $\widetilde{h}^{(1)} \coloneqq \left(h^{(1)}_k \widehat{\chi}_k \right)_{0 \leq k \leq M-1}$. The last equality holds by \cref{eq:inner_prod_h_hat_fg_l}, where $A \in \mathbb R^{M\times R}$ is such that $Az = \widehat{f \ast g_\lambda}$. Rewriting the last equality gives
\begin{align*}
    \left\langle h, \begin{pmatrix}F_M\left((U[\lambda]f) \ast \chi \right)\\ F_M\left((U[\lambda,1]f)\ast \chi \right)\end{pmatrix} \right\rangle 
    &= \left\langle \begin{pmatrix}
        \frac 1M A^{\mathsf{H}}C_{\widetilde{h}^{(1)}} A &0 \\0 & \frac{1}{M^3} \sum_{k \in S_{\lambda,1}} h_k^{(2)}A_k 
    \end{pmatrix}\begin{pmatrix}
        z \\ \mathcal{Z}
    \end{pmatrix}, \begin{pmatrix}
        z \\ \mathcal{Z}
    \end{pmatrix}\right\rangle.
\end{align*}
Now suppose $h \in \mathcal{V}^\perp$, then 
\begin{align*}
    \left\langle \begin{pmatrix}
        \frac 1M A^{\mathsf{H}}C_{\widetilde{h}^{(1)}} A &0 \\0 & \frac{1}{M^3} \sum_{k \in S_{\lambda,1}} h_k^{(2)}A_k 
    \end{pmatrix}\begin{pmatrix}
        z \\ \mathcal{Z}
    \end{pmatrix}, \begin{pmatrix}
        z \\ \mathcal{Z}
    \end{pmatrix}\right\rangle = 0, \quad \text{for all $z \in \mathbb C^R$}.
\end{align*}
By taking the partial derivatives $\partial^{\ell+\ell'}/\partial z^\alpha \partial \overline{z}^\beta$ of the LHS, where $\ell,\ell' \in \{1,2\}$ and $\alpha,\beta \in \mathbb N_0^R$ satisfy $\lvert \alpha \rvert = \ell$, $\lvert \beta \rvert = \ell'$, and then evaluating at $z=0$, we obtain
\begin{align*}
    \begin{pmatrix}
        \frac 1M A^{\mathsf{H}}C_{\widetilde{h}^{(1)}} A &0 \\0 & \frac{1}{M^3} \sum_{k \in S_{\lambda,1}} h_k^{(2)}A_k 
    \end{pmatrix} =0.
\end{align*}
Hence, $h^{(1)}_k = 0$ whenever $k \in S_{\lambda}$ and $h^{(2)}_k = 0$ whenever $k \in S_{\lambda,1}$. Consequently,
\begin{align*}
    \mathcal{V}^\perp \subseteq \left\{h = \begin{pmatrix}
        h^{(1)} \\ h^{(2)}
    \end{pmatrix} \in \mathbb C^{2M} \colon \begin{aligned} &h^{(1)}_k = 0,\,\, \forall k \in S_{\lambda} \\ &h^{(2)}_k = 0,\,\, \forall k \in S_{\lambda,1}\end{aligned}\right\},
\end{align*}
and $\dimCinline{\mathcal{V}^\perp} \leq 2M-(R+(R-2)_+)$. Combining this result with the previously established inequality $\dimCinline{\mathcal{V}} \leq R+(R-2)_+$, we conclude that $\dimCinline{\mathcal{V}} = R+(R-2)_+$, as $\dimCinline{\mathcal{V}} = 2M-\dimCinline{\mathcal{V}^\perp}$. The proof is now finalized.
\section{Proof of \texorpdfstring{\cref{L:subsampling}}{Lemma \ref{L:subsampling}}}\label[appendix]{app:subsampling}
Observe that 
    \allowdisplaybreaks
    \begin{align}
    (\widehat{h_{\mathrm{d}}})_k &= \sum_{\ell=0}^{M-1} f_{(\ell S) \bmod M} \,e^{-2\pi ik\ell/M} \nonumber \\
    &= \sum_{r=0}^{\gcd(M,S)-1} \sum_{\ell=0}^{\widetilde{M}-1} f_{(\ell S) \bmod M} \,e^{-2\pi ik(\ell+r \widetilde{M})/M} \nonumber\\
    &= \sum_{r=0}^{\gcd(M,S)-1} e^{2\pi i kr /\gcd(M,S)} \sum_{\ell=0}^{\widetilde{M}-1} f_{(\ell S) \bmod M} \,e^{-2\pi ik\ell/M} \nonumber\\
    & = \gcd(M,S) \mathbbm{1}_{\{k \equiv 0 \Mod{\gcd(M,S)}\}} \sum_{\ell=0}^{\widetilde{M}-1} f_{(\ell S) \bmod M} \,e^{-2\pi ik\ell/M}, \label{eq:hat_hd}
\end{align}
where we used in the last equation that
\begin{align}\label{eq:indicator_sum}
    \sum_{r=0}^{\gcd(M,S)-1} e^{2\pi i kr /\gcd(M,S)} = \begin{cases}
        \gcd(M,S), &\quad \text{if $k\equiv 0 \Mod{\gcd(M,S)}$,} \\
        0, &\quad \text{otherwise.}
    \end{cases}
\end{align}
Next, define $h = (h_\ell)_{0\leq \ell \leq M-1}\in \mathbb C^M$ according to
\begin{align*}
    h_\ell \coloneqq \begin{cases}
    f_{\gcd(M,S)\ell}, &\quad \text{if $\ell \in \{0,\dots,\widetilde{M}-1\},$} \\
    0, &\quad \text{otherwise,}
    \end{cases} \quad \ell \in \{0, \dots,M-1\}.
\end{align*}
Then, 
\begin{align*}
    \sum_{\ell=0}^{\widetilde{M}-1} f_{(\ell S) \bmod M} \,e^{-2\pi ik\ell/M} &= \sum_{\ell=0}^{\widetilde{M}-1} h_{(\ell\widetilde S) \bmod {\widetilde{M}}} \,e^{-2\pi ik\ell/M}.
\end{align*}
Denote by $\widetilde{S}^{-1} \in  \mathbb Z/{\widetilde M}\mathbb Z$ the multiplicative inverse of $\widetilde{S}$ in $ \mathbb Z/{\widetilde M}\mathbb Z$, which exists because $\widetilde{M}$ and $\widetilde{S}$ are coprime. Since $\widetilde{S} (\mathbb Z/{\widetilde{M}}\mathbb Z) = \mathbb Z/{\widetilde{M}}\mathbb Z$ and $\widetilde{S}^{-1} (\mathbb Z/{\widetilde{M}}\mathbb Z) = \mathbb Z/{\widetilde{M}}\mathbb Z$, we have for $k \in \{0,\dots,M\}$ with $k \equiv 0 \Mod{\gcd(M,S)}$, 
\allowdisplaybreaks
\begin{align}
    \sum_{\ell=0}^{\widetilde{M}-1} f_{(\ell S) \bmod M}\,e^{-2\pi ik\ell/M} &= \sum_{\ell=0}^{\widetilde{M}-1} h_{(\ell\widetilde S) \bmod {\widetilde{M}}} \,e^{-2\pi ik\ell/M} \nonumber\\
    &= \sum_{\ell=0}^{\widetilde{M}-1} h_{\ell} \,e^{-2\pi ik((\ell\widetilde{S}^{-1}) \bmod{\widetilde M})/M} \nonumber\\
    &= \sum_{\ell=0}^{\widetilde{M}-1} h_{\ell} \,e^{-2\pi ik((\ell\widetilde{S}^{-1}) \bmod{\widetilde M})/(\gcd(M,S)\widetilde{M})} \nonumber\\
    &= \sum_{\ell=0}^{\widetilde{M}-1} h_{\ell} \,e^{-2\pi i(k/\gcd(M,S))((\ell\widetilde{S}^{-1}) \bmod{\widetilde M})/\widetilde{M}} \nonumber\\
    &= \sum_{\ell=0}^{\widetilde{M}-1} h_{\ell} \,e^{-2\pi i(k/\gcd(M,S))\ell\widetilde{S}^{-1}/\widetilde{M}} \nonumber\\
    &= \sum_{\ell=0}^{\widetilde{M}-1} h_{\ell} \,e^{-2\pi ik\ell\widetilde{S}^{-1}/M} \nonumber\\
    &= \widehat{h}_{(\widetilde{S}^{-1} k) \bmod{M}}. \label{eq:hat_h_inv_S}
\end{align}
It remains to express $\widehat{h}$ in terms of $\widehat{f}$. For $k' \in \{0,\dots,M-1\}$ with $k' \equiv 0 \Mod{\gcd(M,S)}$, we compute
\begin{align}
    \widehat{h}_{k'} &= \sum_{\ell=0}^{\widetilde{M}-1} f_{\gcd(M,S)\ell} \,e^{-2\pi i k'\ell/M} \nonumber\\
    &= \sum_{\ell=0}^{M-1} f_{\ell}\,e^{-2\pi i k'(\ell/\gcd(M,S))/M} \mathbbm{1}_{\{\ell \equiv 0 \Mod{\gcd(M,S)}\}} \nonumber \\
    &= \sum_{\ell=0}^{M-1} f_{\ell} \,e^{-2\pi i (k'/\gcd(M,S))\ell/M} \frac{1}{\gcd(M,S)}\sum_{r=0}^{\gcd(M,S)-1} \,e^{-2\pi i \ell r/\gcd(M,S)} \label{eq:hat_h_k_sum_indicator}\\
    &= \frac{1}{\gcd(M,S)} \sum_{r=0}^{\gcd(M,S)-1} \sum_{\ell=0}^{M-1} f_{\ell} \,e^{-2\pi i (k'/\gcd(M,S)+rM/\gcd(M,S))\ell/M} \nonumber\\
    &= \frac{1}{\gcd(M,S)} \sum_{r=0}^{\gcd(M,S)-1} \widehat{f}_{\left(k'/\gcd(M,S)+rM/\gcd(M,S)\right) \bmod{M}}, \label{eq:hat_h_k}
\end{align}
where we used \cref{eq:indicator_sum} in \cref{eq:hat_h_k_sum_indicator}. Upon noting that $\gcd(M,S)$ divides $((\widetilde{S}^{-1}k) \bmod{M})$ for $k \in \{0,\dots,M-1\}$ with $k \equiv 0 \Mod{\gcd(M,S)}$, combining \cref{eq:hat_hd}, \cref{eq:hat_h_inv_S}, and \cref{eq:hat_h_k} yields the desired expression.

\bibliography{ref}

@article{mallat2012group,
  title={Group invariant scattering},
  author={Mallat, St{\'e}phane},
  journal={Communications on Pure and Applied Mathematics},
  volume={65},
  number={10},
  pages={1331--1398},
  year={2012},
  publisher={Wiley Online Library}
}

@article{wiatowski2017deep,
  title={A mathematical theory of deep convolutional neural networks for feature extraction},
  author={Wiatowski, Thomas and B{\"o}lcskei, Helmut},
  journal={IEEE Transactions on Information Theory},
  volume={64},
  number={3},
  pages={1845--1866},
  year={2018},
  publisher={IEEE}
}

@article{wiatowski2017energy,
  title={Energy propagation in deep convolutional neural networks},
  author={Wiatowski, Thomas and Grohs, Philipp and B{\"o}lcskei, Helmut},
  journal={IEEE Transactions on Information Theory},
  volume={64},
  number={7},
  pages={4819--4842},
  year={2018},
  publisher={IEEE}
}

@article{cover1965geometrical,
  title={Geometrical and statistical properties of systems of linear inequalities with applications in pattern recognition},
  author={Cover, Thomas M},
  journal={IEEE Transactions on Electronic Computers},
  number={3},
  pages={326--334},
  year={1965},
  publisher={IEEE}
}

@book{bishop2006pattern,
  title={Pattern recognition and machine learning},
  author={Bishop, Christopher M and Nasrabadi, Nasser M},
  volume={4},
  number={4},
  year={2006},
  publisher={Springer}
}

@book{christensen2003introduction,
  title={An introduction to frames and Riesz bases},
  author={Christensen, Ole},
  volume={7},
  year={2003},
  publisher={Springer}
}

@inproceedings{kowalczyk1994separating,
  title={Separating capacity of analytic neurons},
  author={Kowalczyk, Adam},
  booktitle={Proceedings of 1994 IEEE International Conference on Neural Networks (ICNN'94)},
  volume={5},
  pages={3038--3043},
  year={1994},
  organization={IEEE}
}

@article{mitchison1989bounds,
  title={Bounds on the learning capacity of some multi-layer networks},
  author={Mitchison, GJ and Durbin, RM},
  journal={Biological Cybernetics},
  volume={60},
  number={5},
  pages={345--365},
  year={1989},
  publisher={Springer}
}

@article{bruna2013invariant,
  title={Invariant scattering convolution networks},
  author={Bruna, Joan and Mallat, St{\'e}phane},
  journal={IEEE Transactions on Pattern Analysis and Machine Intelligence},
  volume={35},
  number={8},
  pages={1872--1886},
  year={2013},
  publisher={IEEE}
}

@article{anden2014deep,
  title={Deep scattering spectrum},
  author={And{\'e}n, Joakim and Mallat, St{\'e}phane},
  journal={IEEE Transactions on Signal Processing},
  volume={62},
  number={16},
  pages={4114--4128},
  year={2014},
  publisher={IEEE}
}

@article{ponomarev1987submersions,
  title={Submersions and preimages of sets of measure zero},
  author={Ponomarev, Stanislav P},
  journal={Siberian Mathematical Journal},
  volume={28},
  number={1},
  pages={153--163},
  year={1987},
  publisher={Springer}
}

@book{narasimhan1971scv,
  title={Several complex variables},
  author={Narasimhan, Raghavan},
  year={1971},
  publisher={University of Chicago Press}
}

@book{hall2018theory,
  title={The theory of groups},
  author={Hall, Marshall},
  year={2018},
  publisher={Courier Dover Publications}
}

@book{daubechies1992ten,
  title={Ten lectures on wavelets},
  author={Daubechies, Ingrid},
  year={1992},
  publisher={SIAM}
}

@book{kaiser1994friendly,
  title={A friendly guide to wavelets},
  author={Kaiser, Gerald and Hudgins, Lonnie H},
  volume={300},
  year={1994},
  publisher={Springer}
}

@article{ali1993continuous,
  title={Continuous frames in {H}ilbert space},
  author={Ali, S Twareque and Antoine, Jean-Pierre and Gazeau, Jean-Pierre},
  journal={Annals of physics},
  volume={222},
  number={1},
  pages={1--37},
  year={1993},
  publisher={Elsevier}
}

@article{cortes1995support,
  title={Support-vector networks},
  author={Cortes, Corinna and Vapnik, Vladimir},
  journal={Machine learning},
  volume={20},
  pages={273--297},
  year={1995},
  publisher={Springer}
}

@inproceedings{huang2006large,
  title={Large-scale learning with {SVM} and convolutional for generic object categorization},
  author={Huang, Fu Jie and LeCun, Yann},
  booktitle={2006 IEEE Computer Society Conference on Computer Vision and Pattern Recognition (CVPR'06)},
  volume={1},
  pages={284--291},
  year={2006},
  organization={IEEE}
}

@book{grochenig2001foundations,
  title={Foundations of time-frequency analysis},
  author={Gr{\"o}chenig, Karlheinz},
  year={2001},
  publisher={Birkh{\"a}user}
}

@book{zhang2011matrix,
  title={Matrix theory: {B}asic results and techniques},
  author={Zhang, Fuzhen},
  year={2011},
  publisher={Springer}
}

@article{vapnik1971uniform,
author = {Vapnik, V. N. and Chervonenkis, A. Ya.},
title = {On the Uniform Convergence of Relative Frequencies of Events to Their Probabilities},
journal = {Theory of Probability \& Its Applications},
volume = {16},
number = {2},
pages = {264-280},
year = {1971},
}

@article{eickenberg2017solid,
  title={Solid harmonic wavelet scattering: Predicting quantum molecular energy from invariant descriptors of 3D electronic densities},
  author={Eickenberg, Michael and Exarchakis, Georgios and Hirn, Matthew and Mallat, St{\'e}phane},
  journal={Advances in Neural Information Processing Systems},
  volume={30},
  year={2017}
}

@article{morel2022scale,
  title={Scale dependencies and self-similar models with wavelet scattering spectra},
  author={Morel, Rudy and Rochette, Gaspar and Leonarduzzi, Roberto and Bouchaud, Jean-Philippe and Mallat, St{\'e}phane},
  journal={arXiv preprint arXiv:2204.10177},
  year={2022}
}

@article{cheng2020new,
  title={A new approach to observational cosmology using the scattering transform},
  author={Cheng, Sihao and Ting, Yuan-Sen and M{\'e}nard, Brice and Bruna, Joan},
  journal={Monthly Notices of the Royal Astronomical Society},
  volume={499},
  number={4},
  pages={5902--5914},
  year={2020},
  publisher={Oxford University Press}
}

@article{valogiannis2022towards,
  title={Towards an optimal estimation of cosmological parameters with the wavelet scattering transform},
  author={Valogiannis, Georgios and Dvorkin, Cora},
  journal={Physical Review D},
  volume={105},
  number={10},
  pages={103534},
  year={2022},
  publisher={APS}
}

@inproceedings{warrick2020arrhythmia,
  title={Arrhythmia classification of 12-lead electrocardiograms by hybrid scattering-LSTM networks},
  author={Warrick, Philip A and Lostanlen, Vincent and Eickenberg, Michael and And{\'e}n, Joakim and Homsi, Masun Nabhan},
  booktitle={2020 Computing in Cardiology},
  pages={1--4},
  year={2020},
  organization={IEEE}
}

@inproceedings{leonarduzzi2019maximum,
  title={Maximum-entropy scattering models for financial time series},
  author={Leonarduzzi, Roberto and Rochette, Gaspar and Bouchaud, Jean-Phillipe and Mallat, St{\'e}phane},
  booktitle={ICASSP 2019-2019 IEEE International Conference on Acoustics, Speech and Signal Processing (ICASSP)},
  pages={5496--5500},
  year={2019},
  organization={IEEE}
}

@article{cheng2024scattering,
  title={Scattering spectra models for physics},
  author={Cheng, Sihao and Morel, Rudy and Allys, Erwan and M{\'e}nard, Brice and Mallat, St{\'e}phane},
  journal={PNAS nexus},
  volume={3},
  number={4},
  pages={1--13},
  year={2024},
  publisher={Oxford University Press US}
}

@article{mityagin2020zerorealanalytic,
	author = {Mityagin, Boris},
	journal = {Mathematical Notes},
	number = {3},
	pages = {529--530},
	title = {The Zero Set of a Real Analytic Function},
	volume = {107},
	year = {2020},
}

@inproceedings{wiatowski2017topology,
  title={Topology reduction in deep convolutional feature extraction networks},
  author={Wiatowski, Thomas and Grohs, Philipp and B{\"o}lcskei, Helmut},
  booktitle={Wavelets and Sparsity XVII},
  volume={10394},
  pages={269--280},
  year={2017},
  organization={SPIE}
}

@book{rudin1991functional,
  title={Functional analysis},
  author={Rudin, Walter},
  year={1991},
  publisher={McGraw-Hill}
}

@book{krantz2002primer,
  title={A primer of real analytic functions},
  author={Krantz, Steven G and Parks, Harold R},
  year={2002},
  publisher={Springer Science \& Business Media}
}

@article{vashisht2017necessary,
  title={Necessary and sufficient conditions for discrete wavelet frames in {$\mathbb{C}^N$}},
  author = {Deepshikha and Lalit K. Vashisht},
  journal={Journal of Geometry and Physics},
  volume={117},
  pages={134--143},
  year={2017},
  publisher={Elsevier}
}

@book{shalev2014understanding,
  title={Understanding machine learning: From theory to algorithms},
  author={Shalev-Shwartz, Shai and Ben-David, Shai},
  year={2014},
  publisher={Cambridge university press}
}

@article{hsu2003practical,
  title={A practical guide to support vector classification},
  author={Hsu, Chih-Wei and Chang, Chih-Chung and Lin, Chih-Jen},
  journal={Taipei, Taiwan},
  year={2003},
  publisher={National Taiwan University}
}

@article{haberle2026function,
    author = {H\"aberle, Konstantin and B\"olcskei, Helmut},
    title = {Function-counting theory for low-dimensional data structures},
    journal = {in preparation},
    year = 2026
}

\newpage

\vfill

\end{document}